\journalname{Machine learning}
\newcommand{\cpl}[1]{
\begin{center}
\tw{#1}
\end{center}
}
\newcommand{\dilp}{$\partial$ILP}
\newcommand{\name}{Popper}
\newcommand{\alan}{Enumerate}
\newcommand{\tw}[1]{\texttt{#1}}
\newcommand{\pall}{\textbf{P\textsubscript{all}}}
\newcommand{\psome}{\textbf{P\textsubscript{some}}}
\newcommand{\pnone}{\textbf{P\textsubscript{none}}}
\newcommand{\nsome}{\textbf{N\textsubscript{some}}}
\newcommand{\nnone}{\textbf{N\textsubscript{none}}}
\title{Learning programs by learning from failures}
\author{Andrew Cropper \and Rolf Morel}
\institute{
A. Cropper\at
University of Oxford\\
\email{andrew.cropper@cs.ox.ac.uk}
\and
R. Morel\at
University of Oxford\\
\email{rolf.morel@cs.ox.ac.uk}
}
\begin{document}

\maketitle

\begin{abstract}
We describe an inductive logic programming (ILP) approach called \emph{learning from failures}.
In this approach, an ILP system (the learner) decomposes the learning problem into three separate stages: \emph{generate}, \emph{test}, and \emph{constrain}.
In the generate stage, the learner generates a hypothesis (a logic program) that satisfies a set of \emph{hypothesis constraints} (constraints on the syntactic form of hypotheses).
In the test stage, the learner tests the hypothesis against training examples.
A hypothesis \emph{fails} when it does not entail all the positive examples or entails a negative example.
If a hypothesis fails, then, in the constrain stage, the learner learns constraints from the failed hypothesis to prune the hypothesis space, i.e. to constrain subsequent hypothesis generation.
For instance, if a hypothesis is too general (entails a negative example), the constraints prune generalisations of the hypothesis.
If a hypothesis is too specific (does not entail all the positive examples), the constraints prune specialisations of the hypothesis.
This loop repeats until either (i) the learner finds a hypothesis that entails all the positive and none of the negative examples, or (ii) there are no more hypotheses to test.
We introduce \name{}, an ILP system that implements this approach by combining answer set programming and Prolog.
\name{} supports infinite problem domains, reasoning about lists and numbers, learning textually minimal programs, and learning recursive programs.
Our experimental results on three domains (toy game problems, robot strategies, and list transformations) show that (i) constraints drastically improve learning performance, and (ii) \name{} can outperform existing ILP systems, both in terms of predictive accuracies and learning times.
\end{abstract}
\section{Introduction}
\label{sec:intro}

Inductive logic programming (ILP) \cite{mugg:ilp} is a form of machine learning.
Given examples of a target predicate and background knowledge (BK), the ILP problem is to induce a hypothesis which, with the BK, correctly generalises the examples.
A key characteristic of ILP is that it represents the examples, BK, and hypotheses as logic programs (sets of logical rules).

Compared to most machine learning approaches, ILP has several advantages \cite{ilp30}.
ILP systems can generalise from small numbers of examples, often a single example \cite{mugg:metabias}.
Because hypotheses are logic programs, they can be read by humans, crucial for explainable AI and ultra-strong machine learning \cite{usml}.
Finally, because of their symbolic nature, ILP systems naturally support lifelong and transfer learning \cite{crop:forget}, which is considered essential for human-like AI \cite{lake:ai}.

The fundamental problem in ILP is to efficiently search a large hypothesis space (the set of all hypotheses).
A popular ILP approach is to use a set covering algorithm to learn hypotheses one clause at-a-time \cite{foil,progol,tilde,aleph,atom}.
Systems that implement this approach are often efficient because they are example-driven.
However, these systems tend to learn overly specific solutions and struggle to learn recursive programs \cite{hyper,ilp30}.
An alternative, but increasingly popular, approach is to encode the ILP problem as an answer set programming (ASP) problem \cite{aspal,ilasp,inspire,hexmil,apperception}.
Systems that implement this approach can often learn optimal and recursive programs and can harness state-of-the-art ASP solvers, but often struggle with scalability, especially in terms of the problem domain size.

In this paper, we describe an ILP approach called \emph{learning from failures} (LFF).
In this approach, the learner (an ILP system) decomposes the ILP problem into three separate stages: \emph{generate}, \emph{test}, and \emph{constrain}.
In the generate stage, the learner generates a hypothesis (a logic program) that satisfies a set of \emph{hypothesis constraints} (constraints on the syntactic form of hypotheses).
In the test stage, the learner tests a hypothesis against training examples.
A hypothesis \emph{fails} when it does not entail all the positive examples or entails a negative example.
If a hypothesis fails, then, in the constrain stage, the learner learns hypothesis constraints from the failed hypothesis to prune the hypothesis space, i.e.~to constrain subsequent hypothesis generation.

Compared to other approaches that employ a generate/test/constrain loop \cite{law:thesis}, a key idea in this paper is to use theta-subsumption \cite{plotkin:thesis} to translate a failed hypothesis into a set of constraints.
For instance, if a hypothesis is too general (entails a negative example), the constraints prune generalisations of the hypothesis.
If a hypothesis is too specific (does not entail all the positive examples), the constraints prune specialisations of the hypothesis.
This loop repeats until either (i) the learner finds a \emph{solution} (a hypothesis that entails all the positive examples and none of the negative examples), or (ii) there are no more hypotheses to test.
Figure \ref{fig:loop} illustrates this loop.
\begin{figure}[ht]
\centering
\small
\begin{tikzpicture}
\node (Generate) at (1.3cm, 1.7cm) {\large \scshape Generate};
\node (Test) at (2.6cm, 0cm) {\large \scshape Test};
\node (Constrain) at (0cm, 0cm) {\large \scshape Constrain};

\node[above left=0.25cm and -1cm of Generate] (Preds) [align=right]
{\framebox{\em Predicate declarations\vphantom{Hp}}};

\node[above right=0.25cm and -1cm of Generate] (HypothesisConstraints) [align=left]
{\framebox{\em Hypothesis constraints}};

\node[right = 2.8cm of Test,anchor=east] (Examples)
{\framebox{\em Examples}};
\node[below right = 0.8cm and 2.8cm of Test,anchor=east] (BK)
{\framebox{\begin{minipage}{1.8cm}
\em
Background
knowledge
\end{minipage}}};

\draw [->,thick] (Preds)++(1.2cm,-0.4cm) to (Generate);
\draw [->,thick] (HypothesisConstraints)++(-1.2cm,-0.4cm) to (Generate);
\draw [->,thick] (Examples) to (Test);
\draw [->,thick] (BK)++(-1.15cm,0.63cm) to (Test);

\draw [->,thick]
(Generate)
to[bend left]
node[above right = -0.50cm and -0.4cm of Test,text width=3cm,align=left] {
\textbf{
{\em
\hspace{0.6cm}logic\\
\hspace{0.5cm}program
}
}}
(Test);

\draw [->,thick]
(Test)
($ (Test.south) - (0.35,0) $)
to[bend left]
node[above=1pt,align=center] {
}
node[below,align=center] {
\textbf{\emph{failure}}
}
($ (Constrain.south) + (0.35,0) $);

\draw [->, thick] (Constrain) edge[bend left]
node[above left = -0.45cm and -0.25cm of Constrain,text width=2.4cm,align=right] {
\textbf{
{\em
learned\hspace{0.65cm}~\\
constraints\hspace{0.4cm}~}\\
}
} (Generate);
\end{tikzpicture}
\caption{
    The generate, test, and constrain loop.
}
\label{fig:loop}
\end{figure}

\begin{example}[Learning from failures]
To illustrate our approach, consider learning a \emph{last/2} hypothesis to find the last element of a list.
For simplicity, assume an initial hypothesis space $\mathcal{H}_1$:
\[
\mathcal{H}_1 = \left\{
\begin{array}{l}
\tw{h$_1$} = \left\{
\begin{array}{l}
    \tw{last(A,B):- head(A,B).}\\
\end{array}
\right\}\\
\tw{h$_2$} = \left\{
\begin{array}{l}
    \tw{last(A,B):- head(A,B),empty(A).}\\
\end{array}
\right\}\\
\tw{h$_3$} = \left\{
\begin{array}{l}
    \tw{last(A,B):- tail(A,C),head(C,B).}\\
\end{array}
\right\}\\
\tw{h$_4$} = \left\{
\begin{array}{l}
    \tw{last(A,B):- reverse(A,C),head(C,B).}\\
\end{array}
\right\}\\
\tw{h$_5$} = \left\{
\begin{array}{l}
    \tw{last(A,B):- head(A,B),reverse(A,C),head(C,B).}\\
\end{array}
\right\}\\
\tw{h$_6$} = \left\{
\begin{array}{l}
    \tw{last(A,B):- tail(A,C),head(C,B).}\\
    \tw{last(A,B):- reverse(A,C),head(C,B).}\\
\end{array}
\right\}\\
\tw{h$_7$} = \left\{
\begin{array}{l}
    \tw{last(A,B):- tail(A,C),head(C,B).}\\
    \tw{last(A,B):- tail(A,C),tail(C,D),head(D,B).}\\
\end{array}
\right\}\\
\tw{h$_8$} = \left\{
\begin{array}{l}
    \tw{last(A,B):- reverse(A,C),tail(C,D),head(D,B).}\\
    \tw{last(A,B):- tail(A,C),reverse(C,D),head(D,B).}\\
\end{array}
\right\}
\end{array}
\right\}
\]

\noindent
Also assume we have the positive ($E^+$) and negative ($E^-$) examples:

\[
E^+ = \left\{
\begin{array}{l}
\tw{last([l,a,u,r,a],a).}\\
\tw{last([p,e,n,e,l,o,p,e],e).}
\end{array}
\right\}
\hspace{3ex}
E^- = \left\{
\begin{array}{l}
\tw{last([e,m,m,a],m).}\\
\tw{last([j,a,m,e,s],e).}
\end{array}
\right\}
\]

\noindent
In the generate stage, the learner generates a hypothesis:
\[
\tw{h$_1$} = \left\{
\begin{array}{l}
    \tw{last(A,B):- head(A,B).}\\
\end{array}
\right\}
\]
\noindent
In the test stage, the learner tests \tw{h$_1$} against the examples and finds that it \emph{fails} because it does not entail any positive example and is therefore too \emph{specific}.
In the constrain stage, the learner learns hypothesis constraints to prune specialisations of \tw{h$_1$} (\tw{h$_2$} and \tw{h$_5$}) from the hypothesis space.
The hypothesis space is now:
\[
\mathcal{H}_2 = \left\{
\begin{array}{l}
\tw{h$_3$} = \left\{
\begin{array}{l}
    \tw{last(A,B):- tail(A,C),head(C,B).}\\
\end{array}
\right\}\\
\tw{h$_4$} = \left\{
\begin{array}{l}
    \tw{last(A,B):- reverse(A,C),head(C,B).}\\
\end{array}
\right\}\\
\tw{h$_6$} = \left\{
\begin{array}{l}
    \tw{last(A,B):- tail(A,C),head(C,B).}\\
    \tw{last(A,B):- reverse(A,C),head(C,B).}\\
\end{array}
\right\}\\
\tw{h$_7$} = \left\{
\begin{array}{l}
    \tw{last(A,B):- tail(A,C),head(C,B).}\\
    \tw{last(A,B):- tail(A,C),tail(C,D),head(D,B).}\\
\end{array}
\right\}\\
\tw{h$_8$} = \left\{
\begin{array}{l}
    \tw{last(A,B):- reverse(A,C),tail(C,D),head(D,B).}\\
    \tw{last(A,B):- tail(A,C),reverse(C,D),head(D,B).}\\
\end{array}
\right\}
\end{array}
\right\}
\]
\noindent
In the next generate stage, the learner generates another hypothesis:
\[
\tw{h$_3$} = \left\{
\begin{array}{l}
    \tw{last(A,B):- tail(A,C),head(C,B).}\\
\end{array}
\right\}
\]
\noindent
The learner tests \tw{h$_3$} against the examples and finds that it fails because it entails the negative example \tw{last([e,m,m,a],m)} and is therefore too \emph{general}.
The learner learns constraints to prune generalisations of \tw{h$_3$} (\tw{h$_6$} and \tw{h$_7$}) from the hypothesis space.
The hypothesis space is now:
\[
\mathcal{H}_3 = \left\{
\begin{array}{l}
\tw{h$_4$} = \left\{
\begin{array}{l}
    \tw{last(A,B):- reverse(A,C),head(C,B).}\\
\end{array}
\right\}\\
\tw{h$_8$} = \left\{
\begin{array}{l}
    \tw{last(A,B):- reverse(A,C),tail(C,D),head(D,B).}\\
    \tw{last(A,B):- tail(A,C),reverse(C,D),head(D,B).}\\
\end{array}
\right\}
\end{array}
\right\}
\]
\noindent
The learner generates another hypothesis (\tw{h$_4$}), tests it against the examples, finds that it does not fail, and returns it.
\end{example}

Whereas many ILP approaches iteratively refine a clause \cite{foil,progol,claudien,tilde,aleph,atom} or refine a hypothesis \cite{mis,hyper,raspal,metagol}, our approach refines the \emph{hypothesis space} through learned \emph{hypothesis constraints}.
In other words, LFF continually builds a set of constraints.
The more constraints we learn, the more we reduce the hypothesis space.
By reasoning about the hypothesis space, our approach can drastically prune large parts of the hypothesis space by testing a single hypothesis.

We implement our approach in \name{}\footnote{
    \name{} is named after \emph{Karl Poppper}, whose idea of \emph{falsification} \cite{popper2005logic} inspired our approach, as it did Shapiro's MIS approach \cite{mis}.
    In fact, one can view our approach as Popper's idea of falsification, where a \emph{failure} is a refutation/falsification.
    In other words, in our approach, a learner \emph{deduces} what hypotheses \emph{cannot} be true and prunes them from the hypothesis space, leaving only hypotheses not yet refuted.
}, a new ILP system which combines ASP and Prolog.
In the generate stage, \name{} uses ASP to declaratively define, constrain, and search the hypothesis space.
The idea is to frame the problem as an ASP problem where an answer set (a model) corresponds to a program, an approach also employed by other recent ILP approaches  \cite{aspal,ilasp,hexmil,inspire}.
By later learning hypothesis constraints, we eliminate answer sets and thus prune the hypothesis space.
Our first motivation for using ASP is its declarative nature, which allows us to, for instance, define constraints to enforce Datalog and type restrictions, constraints to prune recursive hypotheses that do not contain base cases, and constraints to prune generalisations and specialisations of a failed hypothesis.
Our second motivation is to use state-of-the-art ASP systems \cite{clingo} to efficiently solve our complex constraint problem.
In the test stage, \name{} uses Prolog to test hypotheses against the examples and BK.
Our main motivation for using Prolog in this stage is to learn programs that use lists, numbers, and large domains.
In the constrain stage, \name{} learns hypothesis constraints (in the form of ASP constraints) from failed hypotheses to prune the hypothesis space, i.e.~to constrain subsequent hypothesis generation.
To efficiently combine the three stages, \name{} uses ASP's multi-shot solving \cite{multishot} to maintain state between the three stages, e.g.~to remember learned conflicts on the hypothesis space.

To give a clear overview of \name{}, Table \ref{tab:diffs} compares \name{} to Aleph \cite{aleph}, a classical ILP system, and Metagol \cite{metagol}, ILASP3 \cite{law:thesis}, and \dilp{} \cite{dilp}, three state-of-the-art ILP systems based on Prolog, ASP, and neural networks respectively.
Compared to Aleph, \name{} can learn optimal and recursive programs\footnote{
    Aleph can learn recursive programs but struggles because it requires examples of both the base and inductive cases.
}.
Compared to Metagol, \name{} does not need metarules \cite{crop:reduce}, so can learn programs with any arity predicates.
Compared to \dilp{}, \name{} supports non-ground clauses as BK, so supports large and infinite domains.
Compared to ILASP3, \name{} does not need to ground a program, so scales better as the domain size grows (Section \ref{sec:robots}).
Compared to all the systems, \name{} supports hypothesis constraints, such as disallowing the co-occurrence of predicate symbols in a program, disallowing recursive hypotheses that do not contain base cases, or preventing subsumption redundant hypotheses.

\begin{table}[ht]
\centering
\small
\begin{tabular}{@{}llllll@{}}
\toprule
& \textbf{Aleph} & \textbf{Metagol} & \textbf{ILASP3} & \textbf{\dilp{}} & \textbf{\name{}}\\
\midrule
\textbf{Hypotheses} & Normal & Definite & ASP & Datalog & Definite\\
\textbf{Language bias} & Modes & Metarules & Modes & Templates  & Declarations\\
\textbf{Predicate invention} & No & \textbf{Yes} & Partly & Partly & No\\
\textbf{Noise handling} & \textbf{Yes} & No & \textbf{Yes} & \textbf{Yes} & No\\
\textbf{Recursion} & Partly & \textbf{Yes} & \textbf{Yes} & \textbf{Yes} & \textbf{Yes}\\
\textbf{Optimality} & No & \textbf{Yes} & \textbf{Yes} & \textbf{Yes} & \textbf{Yes}\\
\textbf{Hypothesis constraints} & No & No & No & No & \textbf{Yes}\\
\bottomrule
\end{tabular}
\caption{
    A simplified comparison of ILP systems.
    Aleph can learn recursive programs but struggles because it requires examples of both the base and inductive cases.
    Metagol supports \emph{automatic} predicate invention, whereas ILASP3 and \dilp{} support \emph{prescriptive} predicate invention \cite{law:thesis}, where the arity and argument types of an invented predicate must be specified by the given language bias.
}
\label{tab:diffs}
\end{table}

ILASP3 \cite{law:thesis} is the most similar ILP approach and also employs a generate/test/constrain loop.
We discuss in detail the differences between ILASP3 and \name{} in Section \ref{sec:asp} but briefly summarise them now.
ILASP3 learns ASP programs and can handle noise, whereas \name{} learns Prolog programs and cannot currently handle noise.
ILASP3 pre-computes every rule in the hypothesis space and therefore struggles to learn rules with many body literals (Section \ref{sec:buttons}).
By contrast, \name{} does not pre-compute every rule, which allows it to learn rules with many body literals.
With each iteration, ILASP3 finds the best hypothesis it can.
If the hypothesis does not cover one of the examples, ILASP3 finds a reason why and then generates constraints to guide subsequent search\footnote{
This statement covers the noiseless ILASP3 setting. Things are slightly more complicated in noisy tasks where examples are given penalties and ILASP3 may return a hypothesis that does not cover all examples, but is optimal with respect to the penalties.
}.
The constraints are boolean formulas over the rules in the hypothesis space, an approach that requires a set of pre-computed rules and the computation of which can be very expensive.
Another way of viewing ILASP3 is that it uses a counter-example guided \cite{cegis} approach and translates an uncovered example $e$ into a constraint that is satisfied if and only if $e$ is covered.
By contrast, the key idea of \name{} is that when a hypothesis fails, \name{} uses theta-subsumption \cite{plotkin:thesis} to translate the \emph{hypothesis} itself into a set of \emph{hypothesis constraints} to rule out generalisations and specialisations of it, which does not need a set of pre-computed rules and which is substantially quicker to compute.


Overall our specific contributions in this paper are:

\begin{itemize}
\setlength\itemsep{0pt}
\setlength\parskip{0pt}
\item We define the LFF problem, determine the size of the LFF hypothesis space, define hypothesis \emph{generalisations} and \emph{specialisations} based on theta-subsumption and show that they are sound with respect to optimal solutions (Section \ref{sec:setting}).
\item We introduce \name{}, an ILP system that learns definite programs (Section \ref{sec:impl}).
\name{} support types, learning optimal (textually minimal) solutions, learning recursive programs, reasoning about lists and infinite domains, and hypothesis constraints.
\item We experimentally show (Section \ref{sec:experiments}) on three domains (toy game problems, robot strategies, and list transformations) that (i) constraints drastically reduce the hypothesis space, (ii) \name{} scales well with respect to the optimal solution size, the number of background relations, the domain size, the number of training examples, and the size of the training examples, and (iii) \name{} can substantially outperform existing ILP systems both in terms of predictive accuracies and learning times.
\end{itemize}
\section{Related work}
\label{sec:related}

\subsection{Inductive program synthesis}
The goal of inductive program synthesis is to induce a program from a partial specification, typically input/output examples \cite{mis}.
This topic interests researchers from many areas of computer science, notably machine learning (ML) and programming languages (PL).
The major\footnote{
    Minor differences include the form of specification and noise handling.
} difference between ML and PL approaches is the generality of solutions (synthesised programs).
PL approaches often aim to find \emph{any} program that fits the specification, regardless of whether it generalises.
Indeed, PL approaches rarely evaluate the ability of their systems to synthesise solutions that generalise, i.e.~they do not measure predictive accuracy \cite{lambdasquared,DBLP:conf/pldi/PolikarpovaKS16,awscp,DBLP:conf/pldi/FengMBD18,prosynth}.
By contrast, the major challenge in ML is learning hypotheses that \emph{generalise} to unseen examples.
Indeed, it is often trivial for an ML system to learn an overly specific solution for a given problem.
For instance, an ILP system can trivially construct the bottom clause \cite{progol} for each example.
Because of this major difference, in the rest of this section, we focus on ML approaches to inductive program synthesis.
We first, however, briefly cover two PL approaches, which share similarities to our learning from failures idea.

Neo \cite{DBLP:conf/pldi/FengMBD18} synthesises non-recursive programs using SMT encoded properties and a three staged loop.
Neo inherently requires SMT encoded properties for domain specific functions (i.e.~its background knowledge).
For instance, their property for \emph{head}, taking an \tw{$input$} list and returning an \tw{$output$} list, is the formula \tw{$input.size \ge 1 \land output.size = 1 \land output.max \leq input.max$}.
Neo's first stage builds up partially constructed programs.
Its second stage uses SMT-based deduction on the properties of a partial program to detect inconsistency.
The third stage determines related partial programs who must be inconsistent and can therefore be pruned.
As it typically uses over-approximate properties, Neo can fail to detect inconsistency with the examples, in which case no programs get pruned.
In contrast, our approach does not need any properties of background predicates.
We only check whether a hypothesis entails the examples, always pruning specialisations and/or generalisations when the hypothesis fails.
Neo cannot synthesise recursive programs, nor is it guaranteed to synthesise optimal (textually minimal) programs.
By contrast, \name{} can learn optimal and recursive logic programs.

ProSynth \cite{prosynth} takes as input a set of candidate Datalog rules and returns a subset of them.
ProSynth learns constraints that disallow certain clause combinations, e.g.~to prevent clauses that entail a negative example from occurring together.
\name{} differs from ProSynth in several ways.
ProSynth takes as input the full hypothesis space (the set of candidate rules).
By contrast, \name{} does not fully construct the hypothesis space.
This difference is important because it is often infeasible to pre-compute the full hypothesis space.
For instance, the largest number of candidate rules considered in the ProSynth experiments is 1000.
By contrast, in our first two experiments (Section \ref{sec:buttons}), the hypothesis spaces contain approximately $10^6$ and $10^{16}$ rules.
ProSynth provides no guarantees about solution size.
By contrast, \name{} is guaranteed to learn an optimal (smallest) solution (Theorem \ref{thm:optimal}).
Moreover, whereas ProSynth synthesises Datalog programs, \name{} additionally learns definite programs, and thus supports learning programs with infinite domains.

\subsection{Inductive logic programming}

There are various ML approaches to inductive program synthesis, including neural approaches \cite{deepcoder,ellis:scc,ellis:repl}.
We focus on inductive logic programming (ILP) \cite{mugg:ilp,ilpintro}.
As with other forms of ML, the goal of an ILP system is to learn a hypothesis that correctly generalises given training examples.
However, whereas most forms of ML represent data (examples and hypotheses) as tables, ILP represents data as logic programs.
Moreover, whereas most forms of ML learn \emph{functions}, ILP learns \emph{relations}.


Rather than refine a clause \cite{foil,progol,claudien,tilde,aleph,atom}, or a hypothesis \cite{mis,hyper,raspal,metagol}, our approach refines the \emph{hypothesis space} through learned \emph{hypothesis constraints}.
In other words, in our approach continually builds a set of constraints.
The more constraints we learn, the more we reduce the hypothesis space.
By reasoning about the hypothesis space, our approach can drastically prune large parts of the hypothesis space by testing a single hypothesis.

Atom \cite{atom} learns definite programs using SAT solvers and also learns constraints.
However, because it builds on Progol \cite{progol}, and thus employs inverse entailment, Atom struggles to learn recursive programs because it needs examples of both the base and step cases of a recursive program.
For the same reason, Atom struggles to learn optimal solutions.
By contrast, \name{} can learn recursive and optimal solutions because it learns programs rather than individual clauses.


\subsection{Recursion}
\label{sec:recursion}
Learning recursive programs has long been considered a difficult problem in ILP \cite{ilp20}.
Without recursion, it is often difficult for an ILP system to generalise from small numbers of examples \cite{crop:datacurate}.
Indeed, many popular ILP systems, such as FOIL \cite{foil}, Progol \cite{progol}, TILDE \cite{tilde}, and Aleph \cite{aleph} struggle to learn recursive programs.
The reason is that they employ a set covering approach to build a hypothesis clause by clause.
Each clause is usually found by searching an ordering over clauses.
A common approach is to pick an uncovered example, generate the bottom clause \cite{progol} for this example, the logically most specific clause that entails the example, and then to search the subsumption lattice (either top-down or bottom-up) bounded by this bottom clause.
Systems that implement this approach are often efficient because the hypothesis search is example-driven.
However, these systems tend to learn overly specific solutions and struggle to learn recursive programs \cite{hyper,ilp30}.
To overcome this limitation, \name{} searches over logic programs (sets of clauses), a technique used by other ILP systems \cite{hyper,raspal,ilasp,metagol,dilp,hexmil}.


\subsection{Optimality}
There are often multiple (sometimes infinite) hypotheses that explain the data.
Deciding which hypothesis to choose is a difficult problem.
Many ILP systems \cite{progol,aleph,tilde,xhail} are not guaranteed to learn optimal solutions, where optimal typically means the smallest program or the program with the minimal description length.
The claimed advantage of learning optimal solutions is better generalisation.
Recent meta-level ILP approaches often learn optimal solutions, such as programs with the fewest clauses \cite{mugg:metagold,metagol,hexmil} or literals \cite{aspal,ilasp}.
\name{} also learns optimal solutions, measured as the total number of literals in the hypothesis.

\subsection{Language bias}
\label{sec:langbias}
ILP approaches use a language bias \cite{ilp:book} to restrict the hypothesis space.
Language bias can be categorised as \emph{syntactic bias}, which restricts the syntax of hypotheses, such as the number of variables allowed in a clause, and \emph{semantic bias}, which restricts hypotheses based on their semantics, such as whether they are functional, irreflexive, etc.

Mode declarations \cite{progol} are a popular language bias \cite{tilde,aleph,xhail,tal,aspal,raspal,atom,ilasp}.
Mode declarations state which predicate symbols may appear in a clause, how often they may appear, the types of their arguments, and whether their arguments must be ground. We do not use mode declarations.
We instead use a simple language bias which we call \emph{predicate declarations} (Section \ref{sec:setting}), where a user needs only state whether a predicate symbol may appear in the head or/and body of a clause.
Predicate declarations are almost identical to determinations in Aleph \cite{aleph}.
The only difference is a minor syntactic one.
In addition to \emph{predicate declarations}, a user can provide other language biases, such as type information, as \emph{hypothesis constraints} (Section \ref{sec:hypothesis-constraints}).


Metarules \cite{crop:reduce} are another popular syntactic bias used by many ILP approaches \cite{raedt:clint,wang2014structure,awscp,hexmil}, including Metagol \cite{mugg:metagold,metaho,metagol} and, to an extent\footnote{\dilp{} uses program templates to essentially generate sets of metarules.}, \dilp{} ~\cite{dilp}.
A metarule is a higher-order clause which defines the exact form of clauses in the hypothesis space.
For instance, the \emph{chain} metarule is of the form $P(A,B) \leftarrow Q(A,C), R(C,B)$, where $P$, $Q$, and $R$ denote predicate variables, and allows for instantiated clauses such as \tw{last(A,B):- reverse(A,C),head(C,B)}.
Compared with predicate (and mode) declarations, metarules are a much stronger inductive bias because they specify the exact form of clauses in the hypothesis space.
However, the major problem with metarules is determining which ones to use \cite{crop:reduce}.
A user must either (i) provide a set of metarules, or (ii) use a set of metarules restricted to a certain fragment of logic, e.g.~dyadic Datalog \cite{crop:reduce}.
This limitation means that ILP systems that use metarules are difficult to use, especially when the BK contains predicate symbols with arity greater than two.
If suitable metarules are known, then, as we show in Appendix \ref{app:metarules}, \name{} can simulate metarules through hypothesis constraints.





\subsection{Answer set programming}
\label{sec:asp}


Much recent work in ILP uses ASP to learn Datalog \cite{apperception}, definite \cite{mugg:metalearn,hexmil,brute}, normal \cite{xhail,aspal,raspal}, and answer set programs \cite{ilasp}.
ASP is a declarative language that supports language features such as aggregates and weak and hard constraints.
Most ASP solvers only work on ground programs \cite{clingo}\footnote{A notable exception is Alpha Solver \cite{DBLP:conf/lpnmr/Weinzierl17}.}.
Therefore, a major limitation of most pure ASP-based ILP systems is the intrinsic grounding problem, especially on large domains, such as reasoning about lists or numbers -- most ASP implementations do not support lists nor real numbers.
For instance, ILASP \cite{ilasp} can represent real numbers as strings and delegate the reasoning to Python via Clingo's scripting feature \cite{clingo}.
However, in this approach, the numeric computation is performed when grounding the inputs, so the grounding must be finite.
Difficulty handling large (or infinite) domains is not specific to ASP.
For instance, \dilp{} uses a neural network to induce programs, but only works on BK formed of a finite set of ground atoms.
To overcome this grounding limitation, \name{} combines ASP and Prolog.
\name{} uses ASP to generate definite programs, which allows it to reason about large and infinite problem domains, such as reasoning about lists and real numbers.


ILASP3 \cite{law:thesis} is a pure ASP-based ILP system that also employs a constrain loop.
ILASP3 learns unstratified ASP programs, including programs with choice rules and weak and hard constraints, and can handle noise.
By contrast, \name{} learns Prolog programs, including programs operating over lists and real numbers, but cannot handle noise.
ILASP3 pre-computes every clause in the hypothesis space defined by a set of given mode declarations.
As we show in Experiment 1 (Section \ref{sec:buttons}), this approach struggles to learn clauses with many body literals.
By contrast, \name{} does not pre-compute every clause, which allows it to learn clauses with many body literals.
With each iteration, ILASP3 finds the best hypothesis it can.
If the hypothesis does not cover one of the examples, ILASP3 finds a reason why and then generates constraints to guide subsequent search\footnote{
This statement covers the noiseless ILASP3 setting.
Things are slightly more complicated in noisy tasks where examples are given penalties and ILASP3 may return a hypothesis that does not cover all examples, but is optimal with respect to the penalties.
Since \name{} does not yet support noise, we only consider the noiseless ILASP3 setting.
}.
The constraints are boolean formulas over the rules in the hypothesis space, an approach that requires a set of pre-computed rules.
This approach can be very expensive to compute because in the worst-case ILASP3 may need to consider every hypothesis to build a constraint (although this worst-case scenario is unlikely).
Another way of viewing ILASP3 is that it uses a counter-example guided \cite{cegis} approach and translates an uncovered example $e$ into a constraint that is satisfied if and only if $e$ is covered.
By contrast, when a hypothesis fails, \name{} translates the hypothesis itself into a set of \emph{hypothesis constraints}.
\name{}'s constraints do not reason about specific clauses (because we do not pre-compute the hypothesis space), but instead reason about the syntax of hypotheses using theta-subsumption and are therefore quick to compute.
Another subtle difference is how often the constrain loop is employed in ILASP3 and \name{}.
ILASP3's constraint loop requires at most $|E|$ iterations, where $|E|$ is the number of ILASP examples, which are partial interpretations.
Because ILASP3's examples are partial interpretations \cite{ilasp}, it is possible to represent multiple atomic examples in a single partial interpretation example.
In fact, each learning task in this paper can be represented as a single ILASP positive example \cite{ilasp}.
If represented this way, ILASP3 will generate at most one constraint (which will be satisfied if and only if a hypothesis covers the example).
For this reason, ILASP3 performs much better if the examples are split into one (partial interpretation) example per atomic example.
By contrast, the constraint loop of \name{} is not bound by the number of examples but by the size of the hypothesis space.

\subsection{Hypothesis constraints}
\label{sec:hypothesis-constraints}

Constraints are fundamental to our idea.
Many ILP systems allow a user to constrain the hypothesis space though clause constraints \cite{progol,aleph,tilde,atom,ilasp}.
For instance, Progol, Aleph, and TILDE allow for a user to provide constraints on clauses that should not be violated.
\name{} also allows a user to provide clause constraints.
\name{} additionally allows a user to provide \emph{hypothesis constraints} (or \emph{meta-constraints})\footnote{
The term \emph{hypothesis constraint} is also used in existing work \cite{DBLP:conf/ilp/SrinivasanK05,DBLP:journals/jmlr/CostaSCBDJSVL03} as an optional set of constraints on acceptable hypotheses, but without any further explanation.
}, which are constraints over a whole hypothesis (a set of clauses), not an individual clause.
As a trivial example, suppose you want to disallow two predicate symbols \emph{p/2} and \emph{q/2} from both simultaneously appearing in a program (in any body literal in any clause).
Then, because \name{} reasons at the meta-level, this restriction is trivial to express:
\[\tw{:- body\_literal(\_,p,2,\_), body\_literal(\_,q,2,\_).}\]
This constraint prunes hypotheses where the predicate symbols $p/2$ and $q/2$ both appear in the body of a hypothesis (possibly in different clauses).
The key thing to notice is the ease, uniformity, and succinctness of expressing constraints.
We introduce our full meta-level encoding in Section \ref{sec:impl}.

Declarative hypothesis constraints have many advantages.
For instance, through hypothesis constraints, \name{} can enforce (optional) type, metarule, recall, and functionality restrictions.
Moreover, hypothesis constraints allow us to prune recursive programs without a base case and subsumption redundant programs.
Finally, and most importantly, hypothesis constraints allow us to prune generalisations and specialisations of failed hypotheses, which we discuss in the next section.

Athakravi et al. \cite{DBLP:conf/ilp/AthakraviABRS14} introduce \emph{domain-dependent constraints}, which are constraints on the hypothesis space provided as input by a user.
INSPIRE \cite{inspire} also uses predefined constraints to remove redundancy from the hypothesis space (in INSPIRE's case, each hypothesis is a single clause).
\name{} also supports such constraints but goes further by learning constraints from failed hypotheses.

\section{Problem setting}
\label{sec:setting}
We now define our problem setting.


\subsection{Logic preliminaries}
We assume familiarity with logic programming notation \cite{lloyd:book} but we restate some key terminology.
All sets are finite unless otherwise stated.
A \emph{clause} is a set of literals.
A \emph{clausal theory} is a set of clauses.
A \emph{Horn} clause is a clause with at most one positive literal.
A \emph{Horn theory} is a set of Horn clauses.
A \emph{definite} clause is a Horn clause with exactly one positive literal.
A \emph{definite} theory is a set of definite clauses.
A Horn clause is a \emph{Datalog} clause if it contains no function symbols and every variable that appears in the head of the clause also appears in the body of the clause.
A \emph{Datalog} theory is a set of Datalog clauses.
Simultaneously replacing variables $v_1,\dots,v_n$ in a clause with terms $t_1,\dots,t_n$ is a \emph{substitution} and is denoted as $\theta = \{v_1/t_1,\dots,v_n/t_n\}$.
A substitution $\theta$ unifies atoms $A$ and $B$ when $A\theta = B\theta$.
We will often use \emph{program} as a synonym for \emph{theory}, e.g.~a \emph{definite program} as a synonym for a \emph{definite theory}.

\subsection{Problem setting}
Our problem setting is based on the ILP learning from entailment setting \cite{luc:book}.
Our goal is to take as input positive and negative examples of a target predicate, background knowledge (BK), and to return a hypothesis (a logic program) that with the BK entails all the positive and none of the negative examples.
In this paper, we focus on learning definite programs.
We will generalise the approach to non-monotonic programs in future work.

ILP approaches search a \emph{hypothesis space}, the set of learnable hypotheses.
ILP approaches restrict the hypothesis space through a language bias (Section \ref{sec:langbias}).
Several forms of language bias exist, such as mode declarations \cite{progol}, grammars \cite{cohen:grammarbias} and metarules \cite{crop:reduce}.
We use a simple language bias which we call \emph{predicate declarations}.
A predicate declaration simply states which predicate symbols may appear in the head (\emph{head declarations}) or body (\emph{body declarations}) of a clause in a hypothesis:

\begin{definition}[Head declaration]
\label{def:head_decl}
A \emph{head} declaration is a ground atom of the form \emph{head\_pred(p,a)} where $p$ is a predicate symbol of arity $a$.
\end{definition}

\begin{definition}[Body declaration]
\label{def:body_decl}
A \emph{body} declaration is a ground atom of the form \emph{body\_pred(p,a)} where $p$ is a predicate symbol of arity $a$.
\end{definition}

\noindent
Predicate declarations are almost identical to Aleph's \emph{determinations} \cite{aleph} but with a minor syntactical difference because determinations are of the form:
\[\emph{determination(TargetName/Arity,BackgroundName/Arity)}.\]

\noindent
A \emph{declaration bias} $D$ is a pair $(D_h,D_b)$ of sets of head ($D_h$) and body ($D_b$) declarations.
We define a \emph{declaration consistent} clause:

\begin{definition}[Declaration consistent clause]
\label{def:decl_cons_clause}
Let $D=(D_h,D_b)$ be a declaration bias and $C = h \leftarrow b_1,b_2,\dots,b_n$ be a definite clause.
Then $C$ is \emph{declaration consistent} with $D$ if and only if:

\begin{itemize}
    \setlength\itemsep{0pt}
    \setlength\parskip{0pt}
    \item $h$ is an atom of the form $p(X_1,\dots,X_n)$ and \emph{head\_pred(p,n)} is in $D_h$
    \item every $b_i$ is a literal of the form $p(X_1,\dots,X_n)$ and \emph{body\_pred($p,n$)} is in $D_b$
    \item every $X_i$ is a first-order variable
\end{itemize}
\end{definition}

\begin{example}[Declaration consistency]
Let $D$ be the declaration bias:
\[(\{\emph{head\_pred(targ,2)}\}, \{ \emph{body\_pred(head,2)},\emph{body\_pred(tail,2)}\})\]
Then the following clauses are all consistent with $D$:
\[
\begin{array}{l}
    \tw{targ(A,B):- head(A,C).}\\
    \tw{targ(A,A):- head(B,A).}\\
    \tw{targ(A,B):- head(A,C),tail(C,B).}
  \end{array}
\]

\noindent
By contrast, the following clauses are inconsistent with $D$:
\[
\begin{array}{l}
    \tw{targ(A):- head(A,C).}\\
    \tw{targ(A,B):- targ(A,B).}\\
    \tw{tail(A,B):- reverse(A,C),tail(C,B).}
\end{array}
\]
\end{example}

\noindent
We define a \emph{declaration consistent} hypothesis:

\begin{definition}[Declaration consistent hypothesis]
\label{def:decl_cons_hypothesis}
A \emph{declaration consistent} hypothesis $H$ is a set of definite clauses where each $C \in H$ is declaration consistent with $D$.
\end{definition}

\begin{example}[Declaration consistent hypothesis]
Let $D$ be the declaration bias:
\[(\{\emph{head\_pred(targ,2)}\}, \{ \emph{body\_pred(head,2)},\emph{body\_pred(tail,2)}\})\]
Then two declaration consistent hypotheses are:
\[
    \begin{array}{l}
    \tw{h$_1$} : \left\{
    \begin{array}{l}
        \tw{targ(A,B):- head(A,B)}
    \end{array}
    \right\}\\
    \tw{h$_2$} : \left\{
    \begin{array}{l}
        \tw{targ(A,B):- head(A,B).}\\
        \tw{targ(A,B):- tail(A,C),head(C,B).}
    \end{array}
    \right\}
    \end{array}
  \]
\end{example}

\noindent
In addition to a declaration bias, we restrict the hypothesis space through \emph{hypothesis constraints}.
We first clarify what we mean by a \emph{constraint}:

\begin{definition}[Constraint]
A \emph{constraint} is a Horn clause without a head, i.e.~a \emph{denial}.
We say that a constraint is \emph{violated} if all of its body literals are true.
\end{definition}

\noindent
Rather than define hypothesis constraints for a specific encoding (e.g.~the encoding we use in Section \ref{sec:impl}), we use a more general definition:

\begin{definition}[Hypothesis constraint]
\label{def:hconstraint}
Let $\mathcal{L}$ be a language that defines hypotheses, i.e.~a meta-language.
Then a hypothesis constraint is a constraint expressed in $\mathcal{L}$.
\end{definition}

\begin{example}
In Section \ref{sec:impl}, we introduce a meta-language for definite programs.
In our encoding, the atom \tw{head\_literal(Clause,Pred,Arity,Vars)} denotes that the clause \tw{Clause} has a head literal with the predicate symbol \tw{Pred}, is of arity \tw{Arity}, and has the arguments \tw{Vars}.
An example hypothesis constraint in this language is:
\[\tw{:- head\_literal(\_,p,2,\_).}\]
This constraint states that a predicate symbol \tw{p} of arity 2 cannot appear in the head of any clause in a hypothesis.
\end{example}

\begin{example}
In our encoding, the atom \tw{body\_literal(Clause,Pred,Arity,Vars)} denotes that the clause \tw{Clause} has a body literal with the predicate symbol \tw{Pred}, is of arity \tw{Arity}, and has the arguments \tw{Vars}.
An example hypothesis constraint in this language is:
\[\tw{:- head\_literal(\_,p,2,\_), body\_literal(\_,p,2,\_).}\]
This constraint states that the predicate symbol \tw{p} cannot appear in the body of a clause if it appears in the head of a clause (not necessarily the same clause).
\end{example}

\noindent
We define a \emph{constraint consistent hypothesis}:

\begin{definition}[Constraint consistent hypothesis]
Let $C$ be a set of hypothesis constraints written in a language $\mathcal{L}$.
A set of definite clauses $H$ is \emph{consistent} with $C$ if, when written in $\mathcal{L}$, $H$ does not violate any constraint in $C$.
\end{definition}

\noindent
We now define our hypothesis space:

\begin{definition}[Hypothesis space]
\label{def:hspace}
Let $D$ be a declaration bias and $C$ be a set of hypothesis constraints.
Then the hypothesis space $\mathcal{H}_{D,C}$ is the set of all declaration and constraint consistent hypotheses.
We refer to any element in $\mathcal{H}_{D,C}$ as a \emph{hypothesis}.
\end{definition}

\noindent
We define the LFF problem input:

\begin{definition}[LFF problem input]
\label{def:probin}
Our problem input is a tuple $(B, D, C, E^+, E^-)$ where
\begin{itemize}
    \setlength\itemsep{0pt}
    \setlength\parskip{0pt}
    \item $B$ is a Horn program denoting background knowledge
    \item $D$ is a declaration bias
    \item $C$ is a set of hypothesis constraints
    \item $E^+$ is a set of ground atoms denoting positive examples
    \item $E^-$ is a set of ground atoms denoting negative examples
\end{itemize}
\end{definition}

\noindent
Note that $C$, $E^+$, and $E^-$ can be empty sets (but $E^+$ and $E^-$ cannot both be empty).
We assume that no predicate symbol in the body of a clause in $B$ appears in a head declaration of $D$.
In other words, we assume that the BK does not depend on any hypothesis.

For convenience, we define different types of hypotheses, mostly using standard ILP terminology \cite{ilp:book}:

\begin{definition}[Hypothesis types]
\label{def:htypses}
Let $(B, D, C, E^+, E^-)$ be an input tuple and $H \in \mathcal{H}_{D,C}$ be a hypothesis.
Then $H$ is:
\begin{itemize}
    \setlength\itemsep{0pt}
    \setlength\parskip{0pt}
    \item \emph{Complete} when $\forall e \in E^+ \; H \cup B \models e$
    \item \emph{Consistent} when $\forall e \in E^-, \; H \cup B \not\models e$
    \item \emph{Incomplete} when $\exists e \in E^+, \; H \cup B \not \models e$
    \item \emph{Inconsistent} when $\exists e \in E^-, \; H \cup B \models e$
    \item \emph{Totally incomplete} when $\forall e \in E^+, \; H \cup B \not \models e$
\end{itemize}
\end{definition}

\noindent
We define a LFF \emph{solution}, i.e.~our problem output:

\begin{definition}[LFF solution]
\label{def:solution}
Given an input tuple $(B, D, C, E^+, E^-)$, a hypothesis $H \in \mathcal{H}_{D,C}$ is a \emph{solution} when $H$ is complete and consistent.
\end{definition}

\noindent
Conversely, we define a \emph{failed hypothesis}:

\begin{definition}[Failed hypothesis]
\label{def:failed}
Given an input tuple $(B, D, C, E^+, E^-)$, a hypothesis $H \in \mathcal{H}_{D,C}$ \emph{fails} (or is a \emph{failed} hypothesis) when $H$ is either incomplete or inconsistent.
\end{definition}

\noindent
There may be multiple (sometimes infinite) solutions.
We want to find the smallest solution:

\begin{definition}[Hypothesis size]
\label{def:size}
The function $size(H)$ returns the total number of literals in the hypothesis $H$.
\end{definition}

\noindent
We define an \emph{optimal solution}:

\begin{definition}[Optimal solution]
\label{def:opthyp}
Given an input tuple $(B, D, C, E^+, E^-)$, a hypothesis $H \in \mathcal{H}_{D,C}$ is an \emph{optimal solution} when two conditions hold:
\begin{itemize}
    \setlength\itemsep{0pt}
    \setlength\parskip{0pt}
    \item $H$ is a solution
    \item $\forall H' \in \mathcal{H}_{D,C}$, such that $H'$ is a solution, $size(H) \leq size(H')$
\end{itemize}
\end{definition}


\subsection{Hypothesis space}
The purpose of LFF is to reduce the size of the hypothesis space through learned hypothesis constraints.
The size of the unconstrained hypothesis space is a function of a declaration bias and additional bounding variables:

\begin{proposition}[Hypothesis space size]
\label{prop:hspace}
Let $D=(D_h,D_b)$ be a declaration bias with a maximum arity $a$, $v$ be the maximum number of unique variables allowed in a clause, $m$ be the maximum number of body literals allowed in a clause, and $n$ be the maximum number of clauses allowed in a hypothesis.
Then the maximum number of hypotheses in the unconstrained hypothesis space is:
\[\sum_{j=1}^n {|D_h|v^a \; \sum_{i=1}^m {|D_b|v^a \choose i} \choose j} \]
\end{proposition}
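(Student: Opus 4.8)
The plan is to count hypotheses by first counting the number of distinct declaration-consistent clauses available, and then counting how many ways we can assemble a hypothesis of at most $n$ such clauses. I would build the formula from the inside out, matching the structure of the nested sums and binomials in the claimed expression.

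First I would count the number of distinct \emph{body literals} that can be formed. A body literal is fixed by choosing a body predicate declaration from $D_b$ (there are $|D_b|$ choices) and then filling its argument positions with first-order variables. Since each predicate has arity at most $a$ and each of the $a$ argument slots can independently be any of the $v$ permitted variables, there are at most $v^a$ distinct variable-tuples per predicate, giving $|D_b|v^a$ candidate body literals in total. By the same reasoning there are $|D_h|v^a$ candidate head literals. The key modelling decision I would make explicit is that a \emph{clause} is a \emph{set} of literals (per the logic preliminaries), so within one clause a given body literal either appears or not and its multiplicity and ordering are irrelevant. Hence a clause with a fixed head and exactly $i$ body literals corresponds to choosing an $i$-subset of the $|D_b|v^a$ body literals, i.e.\ $\binom{|D_b|v^a}{i}$ choices; summing over $i$ from $1$ to $m$ counts all admissible body sets of size up to $m$. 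Multiplying by the $|D_h|v^a$ head choices yields
\[
|D_h|v^a \sum_{i=1}^m \binom{|D_b|v^a}{i}
\]
as the number of distinct declaration-consistent clauses, which is exactly the inner expression appearing as the ``top'' of the outer binomial.

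Next I would lift this clause count to the hypothesis level. A hypothesis is a \emph{set} of clauses (Definition \ref{def:decl_cons_hypothesis}), and an optimal solution contains at most $n$ clauses, so a hypothesis with exactly $j$ clauses is a $j$-subset of the set of distinct clauses just counted. Writing $K = |D_h|v^a \sum_{i=1}^m \binom{|D_b|v^a}{i}$ for that clause count, the number of hypotheses with exactly $j$ clauses is $\binom{K}{j}$, and summing over $j$ from $1$ to $n$ gives the stated total
\[
\sum_{j=1}^n \binom{\,K\,}{j},
\]
which is precisely the displayed formula. The word \emph{maximum} in the statement is what lets me treat every count as an upper bound: two distinct variable-tuples may denote the same clause up to variable renaming, so $v^a$ over-counts literals, and distinct literal-sets can be logically equivalent, so both binomials over-count; all these collapses only reduce the count, consistent with the claim being an upper bound.

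The main obstacle, and the step I would spend the most care on, is getting the combinatorial bookkeeping of the set-versus-sequence distinction exactly right: justifying the two uses of binomial (rather than exponential or factorial) coefficients by appealing to the ``clause is a set'' and ``hypothesis is a set'' conventions, and being honest that the $v^a$ factors count variable \emph{assignments} and therefore over-count up to variable renaming. I would not attempt to prove tightness (the formula is only an upper bound), so no inclusion–exclusion or renaming-orbit counting is needed; the proof is essentially a careful product-and-sum rule argument matching each symbol in the target expression to a counting choice.
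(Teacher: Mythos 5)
Your proposal is correct and takes essentially the same route as the paper's proof: count $|D_h|v^a$ candidate head literals and $|D_b|v^a$ candidate body literals, use the clause-is-a-set convention to count bodies as $i$-subsets summed over $1 \le i \le m$, multiply by the head choices, and then use the hypothesis-is-a-set convention to count $j$-subsets of clauses summed over $1 \le j \le n$. Your explicit observations that $v^a$ over-counts for predicates of arity below $a$ and up to variable renaming, so the formula is only an upper bound (matching the word ``maximum'' in the statement), are careful touches the paper leaves implicit, but they do not change the argument.
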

\begin{proof}
Let $C$ be an arbitrary clause in the hypothesis space.
There are $|D_h|v^a$ ways to define the head literal of $C$.
There are $|D_b|v^a$ ways to define a body literal in $C$.
The body of $C$ is a set of literals.
There are $|D_b|v^a \choose k$ ways to choose $k$ body literals.
We bound the number of body literals to $m$, so there are $\sum_{i=1}^m {|D_b|v^a \choose i}$ ways to choose at most $m$ body literals.
Therefore, there are $|D_h|v^a \; \sum_{i=1}^m {|D_b|v^a \choose i}$ ways to define $C$.
A hypothesis is a set of definite clauses.
Given $n$ clauses, there are $n \choose k$ ways to choose $k$ clauses to form a hypothesis.
Therefore, there are $\sum_{j=1}^n {|D_h|v^a \; \sum_{i=1}^m {|D_b|v^a \choose i} \choose j}$ ways to define a hypothesis with at most $n$ clauses.
\end{proof}

\noindent
As this result shows, the hypothesis space is huge for non-trivial inputs, which motivates using learned constraints to prune the hypothesis space.

\subsection{Generalisations and specialisations}
\label{sec:genandspec}
To prune the hypothesis space, we learn constraints to remove \emph{generalisations} and \emph{specialisations} of failed hypotheses.
We reason about the generality of hypotheses syntactically through $\theta$-subsumption (or \emph{subsumption} for short) \cite{plotkin:thesis}:

\begin{definition}[Clausal subsumption]
\label{def:clausesub}
A clause $C_1$ \emph{subsumes} a clause $C_2$ if and only if there exists a substitution $\theta$ such that $C_1\theta \subseteq C_2$.
\end{definition}

\begin{example}[Clausal subsumption]
Let \tw{C$_1$} and \tw{C$_2$} be the clauses:
\[
    \begin{array}{l}
    \tw{C$_1$ = f(A,B):- head(A,B)}\\
    \tw{C$_2$ = f(X,Y):- head(X,Y),odd(Y)}.
    \end{array}
\]
Then $\tw{C}_1$ subsumes $\tw{C}_2$ because $\tw{C}_1\theta \subseteq \tw{C}_2$ with $\theta = \{A/X,Y/B\}$.
\end{example}

\noindent
If a clause $C_1$ subsumes a clause $C_2$ then $C_1$ entails $C_2$ \cite{ilp:book}.
However, if $C_1$ entails $C_2$ then it does not necessarily follow that $C_1$ subsumes $C_2$.
Subsumption is therefore weaker than entailment.
However, whereas checking entailment between clauses is undecidable \cite{church:problem}, checking subsumption between clauses is decidable, although, in general, deciding subsumption is a NP-complete problem \cite{ilp:book}.

Midelfart \cite{midelfart} extends subsumption to clausal theories:
\begin{definition}[Theory subsumption]
\label{def:theorysub}
A clausal theory $T_1$ subsumes a clausal theory $T_2$, denoted $T_1 \preceq T_2$, if and only if $\forall C_2 \in T_2, \exists C_1 \in T_1$ such that $C_1$ subsumes $C_2$.
\end{definition}

\begin{example}[Theory subsumption]
Let \tw{h$_1$}, \tw{h$_2$}, and \tw{h$_3$} be the clausal theories:
\[
\begin{array}{l}
\tw{h$_1$} = \left\{
    \begin{array}{l}
        \tw{f(A,B):- head(A,B).}\\
    \end{array}
    \right\}\\
\tw{h$_2$} = \left\{
    \begin{array}{l}
        \tw{f(A,B):- head(A,B),odd(B).}\\
    \end{array}
    \right\}\\
\tw{h$_3$} = \left\{
    \begin{array}{l}
        \tw{f(A,B):- head(A,B).}\\
        \tw{f(A,B):- reverse(A,C),head(C,B).}\\
    \end{array}
    \right\}
\end{array}
\]
\noindent
Then \tw{h$_1$} $\preceq$ \tw{h$_2$}, \tw{h$_3$} $\preceq$ \tw{h$_1$}, and \tw{h$_3$} $\preceq$ \tw{h$_2$}.
\end{example}

\noindent
Theory subsumption also implies entailment:
\begin{proposition}[Subsumption implies entailment]
\label{prop:ourorder}
Let $T_1$ and $T_2$ be clausal theories.
If $T_1 \preceq T_2$ then $T_1 \models T_2$.
\end{proposition}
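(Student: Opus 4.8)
The plan is to reduce theory subsumption to clausal subsumption and then invoke the known fact (stated just before Definition~\ref{def:theorysub}) that clausal subsumption implies clausal entailment. Recall that $T_1 \preceq T_2$ means that for every clause $C_2 \in T_2$ there exists a clause $C_1 \in T_1$ that subsumes $C_2$, and that clausal subsumption gives $C_1 \models C_2$. So the heart of the argument is to lift these per-clause entailments to an entailment between the whole theories, i.e.\ to show $T_1 \models T_2$, which (for clausal theories, read as conjunctions of clauses) means that every model of $T_1$ is a model of $T_2$.

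First I would fix an arbitrary model $M$ of $T_1$ and an arbitrary clause $C_2 \in T_2$, and aim to show $M \models C_2$; since $M$ and $C_2$ are arbitrary this establishes $M \models T_2$, and hence $T_1 \models T_2$. By the definition of theory subsumption there is some $C_1 \in T_1$ with $C_1$ subsuming $C_2$. Since $M \models T_1$ and $C_1 \in T_1$, we have $M \models C_1$. Now I would apply the clause-level result: $C_1$ subsumes $C_2$ implies $C_1 \models C_2$, so every model of $C_1$ is a model of $C_2$; in particular $M \models C_1$ gives $M \models C_2$, as desired.

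The main subtlety to be careful about is the quantifier handling hidden in the phrase ``model of a clause.'' A clause is implicitly universally quantified, and the clause-level entailment $C_1 \models C_2$ must be interpreted accordingly; once we grant the cited fact that clausal subsumption implies clausal entailment, the lifting step is purely a matter of chaining universal quantifiers over the clauses of $T_2$ and is essentially bookkeeping. I do not expect a genuine obstacle here, since the step that normally carries the real content---that $C_1 \theta \subseteq C_2$ forces $C_1 \models C_2$---has already been granted earlier in the excerpt; the only care needed is to treat the theories as conjunctions of their (universally quantified) clauses so that ``$M \models T_1$'' correctly distributes to ``$M \models C_1$ for each $C_1 \in T_1$.''

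Altogether the proof is short: quantify over an arbitrary model $M$ of $T_1$ and an arbitrary $C_2 \in T_2$, extract a subsuming $C_1 \in T_1$, use $M \models C_1$ together with $C_1 \models C_2$ to conclude $M \models C_2$, and generalise. I would write it as a two- or three-line argument rather than unfolding the semantics of satisfaction in detail.
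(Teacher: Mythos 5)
Your proof is correct and is exactly the argument the paper has in mind: the paper's proof simply states that the result ``follows trivially from the definitions'' of clausal and theory subsumption, and your write-up is the natural unfolding of that claim, chaining the cited fact that clausal subsumption implies clausal entailment through an arbitrary model of $T_1$ and an arbitrary clause of $T_2$. No gap; your version just makes explicit the bookkeeping the paper leaves implicit.
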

\begin{proof}
Follows trivially from the definitions of clausal subsumption (Definition \ref{def:clausesub}) and theory subsumption (Definition \ref{def:theorysub}).
\end{proof}

\noindent
We use theory subsumption to define a \emph{generalisation}:
\begin{definition}[Generalisation]
\label{def:generalisation}
A clausal theory $T_1$ is a \emph{generalisation} of a clausal theory $T_2$ if and only if $T_1 \preceq T_2$.
\end{definition}

\noindent
We likewise define our notion of a \emph{specialisation}:

\begin{definition}[Specialisation]
\label{def:specialisation}
A clausal theory $T_1$ is a \emph{specialisation} of a clausal theory $T_2$ if and only if $T_2 \preceq T_1$.
\end{definition}

\noindent
In the next section, we use these definitions to define constraints to prune the hypothesis space.

\subsection{Learning constraints from failures}
\label{sec:failures}


In the test stage of LFF, a learner tests a hypothesis against the examples.
A hypothesis fails when it is incomplete or inconsistent.
If a hypothesis fails, a learner learns hypothesis constraints from the different types of \emph{failures}.
We define two general types of constraints, \emph{generalisation} and \emph{specialisation}, which apply to any clausal theory, and show that they are sound in that they do not prune solutions.
We also define an \emph{elimination} constraint, which, under certain assumptions, allows us to prune programs that generalisation and specialisation constraints do not, and which we show is sound in that it does not prune optimal solutions.
We describe these constraints in turn.

\subsubsection{Generalisations and specialisations}
To illustrate generalisations and specialisations, suppose we have positive examples $E^+$, negative examples $E^-$, background knowledge $B$, and a hypothesis $H$.
First consider the outcomes of testing $H$ against $E^-$:
\begin{center}
\small
\begin{tabular}{l|l|l}
\toprule
\textbf{Outcome} & \textbf{Description} & \textbf{Formula}
\\
\midrule
\nnone{}
&
$H$ is consistent, i.e.~$H$ entails no negative example & $\forall e \in E^-, \; H \cup B \not\models e$\\
\midrule
\nsome{}
&
$H$ is inconsistent, i.e.~$H$ entails at least one negative example & $\exists e \in E^-, \; H \cup B \models e$\\
\bottomrule
\end{tabular}
\end{center}

\noindent
Suppose the outcome is \nnone{}, i.e.~$H$ is consistent.
Then we cannot prune the hypothesis space.

Suppose the outcome is \nsome{}, i.e.~$H$ is inconsistent.
Then $H$ is too general so we can prune generalisations (Definition \ref{def:generalisation}) of $H$.
A constraint that only prunes generalisations is a \emph{generalisation} constraint:

\begin{definition}[Generalisation constraint]
\label{def:generalisation_constraint}
A generalisation constraint only prunes generalisations of a hypothesis from the hypothesis space.
\end{definition}

\begin{example}[Generalisation constraint]
\label{ex:gen-constraint}
Suppose we have the negative examples $E^-$ and the hypothesis \tw{h}:
\[
E^- = \left\{
\begin{array}{l}
\tw{last([a,n,n],a)}\\
\end{array}
\right\}
\hspace{3ex}
\tw{h} = \left\{
\begin{array}{l}
\tw{last(A,B):- head(A,B)}.
\end{array}
\right\}
\]
\noindent
Because \tw{h} entails a negative example, it is too general, so we can prune generalisations of it, such as \tw{h$_1$} and \tw{h$_2$}:
\[
\begin{array}{l}
\tw{h$_1$} = \left\{
    \begin{array}{l}
        \tw{last(A,B):-head(A,B).}\\
        \tw{last(A,B):-tail(A,C),head(C,B).}
    \end{array}
    \right\}\\
\tw{h$_2$} = \left\{
    \begin{array}{l}
        \tw{last(A,B):-head(A,B).}\\
        \tw{last(A,B):-tail(A,C),head(C,B),head(A,B).}
    \end{array}
    \right\}
    \end{array}
\]
\end{example}

\noindent
We show that pruning generalisations of an inconsistent hypothesis is \emph{sound} in that it only prunes inconsistent hypotheses, i.e. does not prune consistent hypotheses:
\begin{proposition}[Generalisation soundness]
\label{prop:generalisation_soundness}
Let $(B, D, C, E^+, E^-)$ be a problem input, $H \in \mathcal{H}_{D,C}$ be an inconsistent hypothesis, and $H' \in \mathcal{H}_{D,C}$ be a hypothesis such that $H' \preceq{} H$.
Then $H'$ is inconsistent.
\end{proposition}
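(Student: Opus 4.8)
The plan is to chain together the two facts already established in the excerpt: theory subsumption implies entailment (Proposition~\ref{prop:ourorder}), and entailment is transitive. The key observation is that $H'$ being a generalisation of $H$ (i.e.~$H' \preceq H$) means $H'$ entails $H$, so anything $H$ proves together with the background knowledge, $H'$ also proves. Since $H$ is inconsistent, it entails some negative example with the help of $B$, and this derivation is inherited by $H'$.

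First I would unpack the hypothesis. Because $H$ is inconsistent, by Definition~\ref{def:htypses} there exists a negative example $e \in E^-$ such that $H \cup B \models e$. This $e$ is the witness I will carry through the argument. Next I would invoke the assumption $H' \preceq H$: by Proposition~\ref{prop:ourorder}, theory subsumption implies entailment, so $H' \models H$, and hence $H' \cup B \models H \cup B$ (adjoining the common $B$ to both sides preserves entailment, since a model of $H' \cup B$ is a model of $H'$ and therefore of $H$, and is trivially a model of $B$).

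Then the final step is transitivity of logical entailment: from $H' \cup B \models H \cup B$ and $H \cup B \models e$, I conclude $H' \cup B \models e$. Since $e \in E^-$, this means $\exists e \in E^-,\; H' \cup B \models e$, which is exactly the definition of $H'$ being inconsistent (Definition~\ref{def:htypses}). That completes the argument.

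I do not expect any genuine obstacle here; the result is essentially a one-line consequence of monotonicity and transitivity of entailment combined with Proposition~\ref{prop:ourorder}. The only point requiring a little care is the move from $H' \models H$ to $H' \cup B \models H \cup B$ --- one must note that entailment is preserved under adding the same premises $B$ to both the entailing and entailed theories, which holds because definite (Horn) logic is monotonic. Everything else is bookkeeping with the definitions of inconsistency and generalisation.
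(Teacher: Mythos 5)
Your proof is correct and follows exactly the route the paper intends: its proof of Proposition~\ref{prop:generalisation_soundness} is simply ``Follows from Proposition~\ref{prop:ourorder}'', and your argument is that one-liner spelled out (subsumption gives $H' \models H$, monotonicity lets you adjoin $B$, and transitivity of entailment carries the witness $e \in E^-$ over to $H'$). No gaps; the extra care you take over the step from $H' \models H$ to $H' \cup B \models H \cup B$ is sound, though it holds by monotonicity of classical entailment generally, not specifically because the programs are definite.
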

\begin{proof}
Follows from Proposition \ref{prop:ourorder}.
\end{proof}

\noindent
Now consider the outcomes\footnote{The outcomes are not mutually exclusive.} of testing $H$ against $E^+$:

\begin{center}
\small
\begin{tabular}{l|l|l}
\toprule
\textbf{Outcome} & \textbf{Description} & \textbf{Formula}\\
\midrule
\pall{}
&
$H$ is complete, i.e.~$H$ entails all positive examples & $\forall e \in E^+, \; H \cup B \models e$\\
\midrule
\psome{}
&
$H$ is incomplete, i.e. $H$ does entail all positive examples & $\exists e \in E^+, \; H \cup B \not\models e$\\
\midrule
\pnone{}
&
$H$ is totally incomplete, i.e.~$H$ entails no positive examples & $\forall e \in E^+, \; H \cup B \not\models e$\\
\bottomrule
\end{tabular}
\end{center}


\noindent
Suppose the outcome is \pall{}, i.e.~$H$ is complete.
Then we cannot prune the hypothesis space.

Suppose the outcome is \psome{}, i.e.~is incomplete.
Then $H$ is too specific so we can prune specialisations (Definition \ref{def:specialisation}) of $H$.
A constraint that only prunes specialisations of a hypothesis is a \emph{specialisation constraint}:
\begin{definition}[Specialisation constraint]
\label{def:specialisation_constraint}
A specialisation constraint only prunes specialisations of a hypothesis from the hypothesis space.
\end{definition}

\begin{example}[Specialisation constraint]
\label{ex:spec-constr}
Suppose we have the positive examples $E^+$ and the hypothesis \tw{h}:
\[
E^+ = \left\{
\begin{array}{l}
\tw{last([b,o,b],b)}\\
\tw{last([a,l,i,c,e],e)}\\
\end{array}
\right\}
\hspace{3ex}
\tw{h} = \left\{
\begin{array}{l}
\tw{last(A,B):- head(A,B)}.
\end{array}
\right\}
\]

\noindent
Because \tw{h} entails the first example but not the second it is too specific.
We can therefore prune specialisations of \tw{h}, such as \tw{h$_1$} and \tw{h$_2$}:

\[
\begin{array}{l}
\tw{h$_1$} = \left\{
    \begin{array}{l}
        \tw{last(A,B):- head(A,B),empty(A).}\\
    \end{array}
    \right\}\\
\tw{h$_2$} = \left\{
    \begin{array}{l}
        \tw{last(A,B):- head(A,B),tail(A,C).}\\
    \end{array}
    \right\}
    \end{array}
\]
\end{example}

\noindent
We show that pruning specialisations of an incomplete hypothesis is \emph{sound} because it only prunes incomplete hypotheses, i.e. does not prune complete hypotheses:
\begin{proposition}[Specialisation soundness]
\label{prop:specialisation_soundness}
Let $(B, D, C, E^+, E^-)$ be a problem input, $H \in \mathcal{H}_{D,C}$ be an incomplete hypothesis, and $H' \in \mathcal{H}_{D,C}$ be a hypothesis such that $H \preceq{} H'$. Then $H'$ is incomplete.
\end{proposition}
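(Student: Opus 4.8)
The plan is to mirror the structure of the generalisation soundness proof (Proposition \ref{prop:generalisation_soundness}), but to argue \emph{contrapositively}, since incompleteness is an existential statement about \emph{non}-entailment rather than about entailment. First I would unpack the hypotheses. Because $H$ is incomplete, there is a witness $e \in E^+$ with $H \cup B \not\models e$. Because $H \preceq H'$, Proposition \ref{prop:ourorder} (subsumption implies entailment) gives $H \models H'$. The goal is then to produce a positive example that $H' \cup B$ fails to entail, and the natural candidate is this same $e$.

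The core step is to propagate entailment through the shared background knowledge $B$. From $H \models H'$, monotonicity of entailment under adding premises gives $H \cup B \models H'$; together with the trivial $H \cup B \models B$, this yields $H \cup B \models H' \cup B$ (every clause of $H' \cup B$ is entailed by $H \cup B$). Now I run the argument on the fixed witness $e$: suppose, toward a contradiction, that $H' \cup B \models e$. Chaining this with $H \cup B \models H' \cup B$ via transitivity (cut) of entailment gives $H \cup B \models e$, contradicting the choice of $e$. Hence $H' \cup B \not\models e$, so $e$ witnesses that $H'$ is incomplete.

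The main subtlety to get right is the \emph{direction} of subsumption. Since $H \preceq H'$ designates $H$ as the more general theory, the natural forward flow of entailment runs from $H$ to $H'$, which is the opposite of what a direct argument about $H'$ would want; one therefore cannot simply ``push'' the incompleteness of $H$ onto $H'$. Fixing a single witness $e$ and deriving a contradiction sidesteps this difficulty cleanly. The only remaining things to confirm are the two standard properties of $\models$ used above — monotonicity under adding premises and transitivity — and that placing the common $B$ identically on both sides causes no issue, which holds because $B$ appears unchanged in $H \cup B$ and $H' \cup B$. Given these, the proof reduces, as in the generalisation case, to a one-line appeal to Proposition \ref{prop:ourorder}.
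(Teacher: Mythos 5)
Your proof is correct and is essentially the paper's argument: the paper's proof is the one-line ``Follows from Proposition~\ref{prop:ourorder}'', and your writeup is just the natural expansion of that appeal, fixing the witness $e \in E^+$ and using monotonicity and transitivity of entailment to show $H' \cup B \models e$ would force $H \cup B \models e$. Your contrapositive framing and the remark about the direction of subsumption are exactly the (implicit) content the paper leaves to the reader.
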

\begin{proof}
Follows from Proposition \ref{prop:ourorder}.
\end{proof}

\subsubsection{Eliminations}
\label{sec:eliminations}

\noindent
Suppose the outcome is \pnone{}, i.e.~$H$ is totally incomplete.
Then $H$ is too specific so, as with \psome{}, we can prune specialisations of $H$.
However, because $H$ is totally incomplete (i.e does not entail \emph{any} positive example), under certain assumptions, we can prune more.
If $H$ is totally incomplete then there is no need for $H$ to appear in a complete and \emph{separable} hypothesis:
\begin{definition}[Separable]
A \emph{separable} hypothesis $G$ is one where no predicate symbol in the head of a clause in $G$ occurs in the body of clause in $G$.
\end{definition}
\noindent
Note that separable programs include recursive programs.
\begin{example}[Non-separable hypothesis]
The following hypothesis is non-separable because \tw{f1/2} appears in the head and body of the program:
\[
\left\{
\begin{array}{l}
\tw{f(A,B):- f1(A,C),head(C,B).}\\
\tw{f1(A,B):- tail(A,C),tail(C,B).}
\end{array}
\right\}
\]
\noindent
The following hypothesis is non-separable because \tw{last/2} appears in the head and body of the program:
\[
\left\{
\begin{array}{l}
\tw{last(A,B):- head(A,B),tail(A,C),empty(C).}\\
\tw{last(A,B):- tail(A,C),last(C,B).}
\end{array}
\right\}
\]

\end{example}
\noindent
In other words, if $H$ is totally incomplete and does not entail \emph{any} positive example, then no specialisation of $H$ can appear in an optimal separable solution.
We can therefore prune separable hypotheses that contain specialisations of $H$.
We call such a constraint an \emph{elimination constraint}:

\begin{definition}[Elimination constraint]
\label{def:elimination_constraint}
An elimination constraint only prunes separable hypotheses that contain specialisations of a hypothesis from the hypothesis space.
\end{definition}

\begin{example}[Elimination constraint]
\label{ex:elim-constraint}
Suppose we have the positive examples $E^+$ and the hypothesis \tw{h}:
\[
E^+ = \left\{
\begin{array}{l}
\tw{last([b,o,b],b)}\\
\tw{last([a,l,i,c,e],e)}\\
\end{array}
\right\}
\hspace{3ex}
\tw{h} = \left\{
\begin{array}{l}
\tw{last(A,B):- tail(A,C),head(C,B).}
\end{array}
\right\}
\]

\noindent
Because \tw{h} does not entail any positive example there is no reason for \tw{h} (nor its specialisations) to appear in a separable hypothesis.
We can therefore prune separable hypotheses which contain specialisations of \tw{h}, such as:
\[
\begin{array}{l}
\tw{h$_1$} = \left\{
    \begin{array}{l}
        \tw{last(A,B):-head(A,B).}\\
        \tw{last(A,B):-tail(A,C),head(C,B).}\\
    \end{array}
    \right\}\\
\tw{h$_2$} = \left\{
    \begin{array}{l}
        \tw{last(A,B):-head(A,B).}\\
        \tw{last(A,B):-tail(A,C),head(C,B),odd(B).}\\
    \end{array}
    \right\}
\\
\tw{h$_3$} = \left\{
    \begin{array}{l}
        \tw{last(A,B):-head(A,B),even(B).}\\
        \tw{last(A,B):-tail(A,C),head(C,B),odd(B).}\\
    \end{array}
    \right\}
    \end{array}
\]
\end{example}

\noindent
Elimination constraints are not sound in the same way as the generalisation and specialisation constraints because they prune solutions (Definition \ref{def:solution}) from the hypothesis space.

\begin{example}[Elimination solution unsoundness]
Suppose we have the positive examples $E^+$ and the hypothesis \tw{h$_1$}:
\[
E^+ = \left\{
\begin{array}{l}
\tw{last([j,i,m],m)}\\
\tw{last([a,l,i,c,e],e)}\\
\end{array}\\
\right\}\\
\hspace{3ex}
\tw{h$_1$} = \left\{
\begin{array}{l}
\tw{last(A,B):- head(A,B).}
\end{array}
\right\}
\]
Then an elimination constraint would prune the complete hypothesis \tw{h$_2$}:
\[
\tw{h$_2$} = \left\{
\begin{array}{l}
\tw{last(A,B):- head(A,B).}\\
\tw{last(A,B):- reverse(A,C),head(C,B).}
\end{array}
\right\}
\]
\end{example}
\noindent
However, for separable definite programs, elimination constraints are \emph{sound} with respect to optimal solutions, i.e.~they only prune non-optimal solutions from the hypothesis space.
To show this result, we first introduce a lemma:

\begin{lemma}
\label{lemma1}
Let $(B, D, C, E^+, E^-)$ be a problem input, $D = (D_h, D_b)$ be head and body declarations, $H_1 \in \mathcal{H}_{D,C}$ be a totally incomplete hypothesis, $H_2 \in \mathcal{H}_{D,C}$ be a complete and separable hypothesis such that $H_1 \subset{} H_2$, and $H_3 = H_2 \setminus H_1$.
Then $H_3$ is complete.
\end{lemma}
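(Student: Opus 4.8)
The plan is to exploit the disjoint decomposition $H_2 = H_1 \cup H_3$ together with the separability of $H_2$ to show that, for each positive example, the clause witnessing its entailment under $H_2$ must in fact lie in $H_3$. First I would record the structural consequences of separability. Since $H_2$ is separable, no head predicate symbol of $H_2$ occurs in any body of $H_2$, and by the standing assumption that the background knowledge does not depend on the hypothesis (no head-declaration predicate occurs in a body of $B$), these head predicates also never re-enter derivations through $B$. Consequently the head predicates sit in a top layer: for any ground head-predicate atom $e$, we have $H_2 \cup B \models e$ if and only if there is a single clause $C \in H_2$ and a ground substitution $\theta$ with $\mathit{head}(C)\theta = e$ and $B \models \mathit{body}(C)\theta$. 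I would prove this ``one-step'' characterisation by a least-Herbrand-model argument: the body atoms of $C$ use only non-head predicates, whose derivations never invoke any clause of $H_2$, so the body is provable from $B$ alone and the head atom is obtained by a single application of $C$.

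Next I would apply this characterisation example by example. Fix $e \in E^+$. Because $H_2$ is complete (Definition \ref{def:htypses}), $H_2 \cup B \models e$, so the characterisation yields a witnessing clause $C \in H_2$ with $B \models \mathit{body}(C)\theta$. Since $H_1 \subset H_2$ and $H_3 = H_2 \setminus H_1$, the clause $C$ lies in exactly one of $H_1$ or $H_3$. I would then rule out $C \in H_1$: if $C \in H_1$, then because its body is already entailed by $B$ and entailment is monotone, a single application of $C$ gives $H_1 \cup B \models e$, contradicting that $H_1$ is totally incomplete (Definition \ref{def:htypses}), i.e.~entails no positive example. Hence $C \in H_3$, and the same one-step derivation gives $H_3 \cup B \models e$. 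As $e$ was arbitrary, $H_3$ entails every positive example and is therefore complete.

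The main obstacle is the rigorous justification of the one-step entailment characterisation under separability, which is where all the work lives; everything after it is bookkeeping about the disjoint decomposition. I would need to be careful that separability is inherited by the sub-hypotheses $H_1$ and $H_3$ (a subset of a separable theory is separable, since both its heads and its bodies are subsets of those of $H_2$), and to handle the possibility that $B$ itself mentions a head predicate: if $B$ alone entailed some $e \in E^+$ then $H_1 \cup B \models e$ already, again contradicting total incompleteness, so this case cannot arise and does not disturb the layered structure. A secondary subtlety is confirming that the positive examples are atoms over head-declaration predicates, so that the layering argument applies to them; this follows from the problem setting, where the examples are instances of the target predicate, which must appear in a head declaration.
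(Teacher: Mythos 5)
Your proposal is correct and follows essentially the same route as the paper's proof: both use separability (plus the standing assumption that $B$ does not depend on hypothesis predicates) to reduce entailment of each positive example to a single witnessing clause, then observe that total incompleteness of $H_1$ forces the witness into $H_3 = H_2 \setminus H_1$. The only difference is one of rigour, not substance: where the paper simply asserts that ``we can reason about entailment using single clauses,'' you justify this one-step characterisation via a least-Herbrand-model layering argument and explicitly dispatch the edge case where $B$ alone entails an example, which tightens the same argument rather than replacing it.
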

\begin{proof}
By assumption, no predicate symbol in $D_h$ occurs in the body of a clause in $B$, $H_2$ (since $H_2$ is separable), nor $H_1$ (since $H_1 \subset H_2$), i.e.~no clause in a hypothesis depends on another, so we can reason about entailment using single clauses.
Since $H_1$ is totally incomplete, it holds that $\forall e \in E^+, \neg\exists C \in H_1, \{C\} \cup B \models e$.
Since $H_2$ is complete, it holds that $\forall e \in E^+, \exists C \in H_2, \{C\} \cup B \models e$.
Therefore, it is clear that $\forall e \in E^+, \exists C \in H_2, C \not\in H_1,  \{C\} \cup B \models e$, which implies $\forall e \in E^+, H_2 \setminus H_1 \cup B \models e$, and thus $H_3$ is complete.
\end{proof}

\noindent
We use this result to show that elimination constraints are \emph{sound} with respect to optimal solutions:

\begin{proposition}[Elimination optimal soundness]
\label{prop:elim_sound}
Let $(B, D, C, E^+, E^-)$ be a problem input, $D = (D_h, D_b)$ be head and body declarations, $H_1 \in \mathcal{H}_{D,C}$ be a totally incomplete hypothesis, $H_2 \in \mathcal{H}_{D,C}$ be a hypothesis such that $H_1 \preceq{} H_2$, and $H_3 \in \mathcal{H}_{D,C}$ be a separable hypothesis such that $H_2 \subset{} H_3$.
Then $H_3$ is not an optimal solution.
\end{proposition}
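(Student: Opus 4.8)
The plan is to reduce the proposition to Lemma~\ref{lemma1} by first upgrading the subsumption hypothesis $H_1 \preceq H_2$ into a statement about $H_2$ on its own, and then exhibiting a strictly smaller solution sitting inside $H_3$. First I would dispose of the trivial case: if $H_3$ is not a solution, then it is vacuously not an optimal solution (Definition~\ref{def:opthyp}), so from here on I may assume $H_3$ is complete and consistent.

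The key step is to show that $H_2$ is itself \emph{totally incomplete}. From $H_1 \preceq H_2$ and Proposition~\ref{prop:ourorder} we obtain $H_1 \models H_2$. Suppose, for contradiction, that $H_2 \cup B \models e$ for some $e \in E^+$. Any model $M$ of $H_1 \cup B$ satisfies $H_1$, hence also $H_2$ (since $H_1 \models H_2$), and satisfies $B$, so $M \models H_2 \cup B$ and therefore $M \models e$. This gives $H_1 \cup B \models e$, contradicting the total incompleteness of $H_1$. Hence $H_2 \cup B \not\models e$ for every $e \in E^+$, i.e.~$H_2$ is totally incomplete.

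Now I am in a position to invoke Lemma~\ref{lemma1}, with $H_2$ playing the role of the totally incomplete hypothesis and $H_3$ the complete, separable hypothesis. Since $H_2 \subset H_3$, the lemma yields that $H' := H_3 \setminus H_2$ is complete. It remains to check that $H'$ is a genuine, strictly smaller solution. Consistency of $H'$ follows from monotonicity of definite programs: $H' \cup B \subseteq H_3 \cup B$, so any negative example entailed by $H' \cup B$ would also be entailed by $H_3 \cup B$, contradicting the consistency of $H_3$. Membership $H' \in \mathcal{H}_{D,C}$ holds because declaration consistency is inherited clause-by-clause from $H_3$, and the constraints, being denials over positive meta-atoms recording the presence of literals, cannot be newly violated by deleting clauses. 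Finally, as $H_2$ is non-empty we get $size(H') < size(H_3)$, so $H'$ is a solution strictly smaller than $H_3$, witnessing that $H_3$ is not an optimal solution.

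The main obstacle I anticipate is the middle step---transferring total incompleteness across $\preceq$---because the lemma is stated only for the subset relation, so the proposition genuinely extends it and this model-theoretic argument is where the work lies; the remaining items are bookkeeping. Two edge cases deserve care: the degenerate $H_2 = \emptyset$, which would make the conclusion fail and must be excluded by assuming the specialisation triggering the constraint is non-empty; and the claim $H' \in \mathcal{H}_{D,C}$, which relies on the hypothesis space being closed under taking subsets of consistent hypotheses for the class of constraints actually used.
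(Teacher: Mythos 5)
Your proof is correct and follows essentially the same route as the paper's: transfer total incompleteness from $H_1$ to $H_2$ via Proposition \ref{prop:ourorder}, apply Lemma \ref{lemma1} to obtain completeness of $H_3 \setminus H_2$, use monotonicity of definite programs for consistency, and conclude with the strict size inequality (the paper merely phrases this as a contradiction against optimality rather than directly exhibiting the smaller solution). Your two flagged edge cases---that $H_2$ must be non-empty for $size(H_3 \setminus H_2) < size(H_3)$, and that $H_3 \setminus H_2 \in \mathcal{H}_{D,C}$ requires the hypothesis space to be closed under deleting clauses---are assumptions the paper's proof also relies on silently, so spelling them out is a genuine improvement rather than a deviation.
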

\begin{proof}
Assume that $H_3$ is an optimal solution.
This assumption implies that (i) $H_3$ is a solution, and (ii) there is no hypothesis $H_4 \in \mathcal{H}_{D,C}$ such that $H_4$ is a solution and $size(H_4) < size(H_3)$.
Let $H_4 = H_3 \setminus H_2$.
Since $H_1$ is totally incomplete and $H_1 \preceq{} H_2$ then, by Proposition \ref{prop:ourorder}, $H_2$ is totally incomplete.
By assumption, $H_3$ is complete and since $H_4 = H_3 \setminus H_2$ and $H_2$ is totally incomplete then, by Lemma \ref{lemma1}, $H_4$ is complete.
Because $H_3$ is consistent, then, by the monotonicity of definite programs, $H_4$ is consistent (i.e removing clauses can only make a definite program more specific).
Therefore, $H_4$ is complete and consistent and is a solution.
Since $H_4 = H_3 \setminus H_2$ and $H_2 \subset{} H_3$, then $size(H_4) < size(H_3)$.
Therefore, condition (ii) cannot hold, which contradicts the assumption and completes the proof.
\end{proof}

\noindent
This proof relies on a hypothesis $H$ being (i) a definite program and (ii) separable.
Condition (i) is clear because the proof relies on the monotonicity of definite programs.
To illustrate condition (ii), we give a counter-example to show why we cannot use elimination constraints to prune non-separable hypotheses:

\begin{example}[Non-elimination for non-separable hypotheses]
Suppose we have the positive examples $E^+$ and the hypothesis \tw{h}:
\[
\begin{array}{l}
E^+ = \left\{
\begin{array}{l}
\tw{last([a,l,a,n],n)}\\
\tw{last([t,u,r,i,n,g],g)}\\
\end{array}
\right\}
\\
\\
\tw{h} = \left\{
\begin{array}{l}
\tw{last(A,B):- head(A,B),tail(A,C),empty(C)}.
\end{array}
\right\}
\end{array}
\]
\noindent
Then \tw{h} is totally incomplete so there is no reason for \tw{h} to appear in a separable hypothesis.
However, \tw{h} can still appear in a recursive hypothesis, where the clauses depend on each other, such as \tw{h$_2$}:
\[
\tw{h$_2$} = \left\{
\begin{array}{l}
        \tw{last(A,B):- head(A,B),tail(A,C),empty(C)}.\\
        \tw{last(A,B):- tail(A,C),last(C,B)}.
\end{array}
\right\}
\]
\end{example}

\subsubsection{Constraints summary}

\noindent
To summarise, combinations of these different outcomes imply different combinations of constraints, shown in Table \ref{tab:outcomes-non-rec}.
In the next section we introduce \name{}, which uses these constraints to learn definite programs.

\begin{table}[ht]
\center
\begin{tabular}{l|l|l}
\toprule
\textbf{Outcome}             & \nnone{} & \nsome{}\\
\midrule
\pall{} & n/a & Generalisation\\
\psome{} & Specialisation & Specialisation, Generalisation\\
\pnone{} & Specialisation, Elimination & Specialisation, Elimination, Generalisation\\
\bottomrule
\end{tabular}
\caption{
    The constraints we can learn from testing a hypothesis.
    The \pall{} and \nnone{} outcomes denote that we have found a solution.
}
\label{tab:outcomes-non-rec}
\end{table}
\section{\name{}}
\label{sec:impl}


\name{} implements the LFF approach and works in three separate stages: generate, test, and constrain.
Algorithm \ref{alg:popper} sketches the \name{} algorithm which combines the three stages.
To learn optimal solutions (Definition \ref{def:opthyp}), \name{} searches for programs of increasing size.
We describe the generate, test, and constrain stages in detail, how we use ASP's multi-shot solving \cite{multishot} to maintain state between the three stages, and then prove the soundness and completeness of \name{}.

\begin{algorithm}[ht]
{
\small
\begin{myalgorithm}[]
def $\text{popper}$(e$^+$, e$^-$, bk, declarations, constraints, max_vars, max_literals, max_clauses):
  num_literals = 1
  while num_literals $\leq$ max_literals:
    program = generate(declarations, constraints, max_vars, num_literals, max_clauses)
    if program == 'space_exhausted':
      num_literals += 1
      continue
    outcome = test(e$^+$, e$^-$, bk, program)
    if outcome == ('all_positive', 'none_negative')
      return program
    constraints += learn_constraints(program, outcome)
  return {}
\end{myalgorithm}
\caption{
\name{}
}
\label{alg:popper}
}
\end{algorithm}

\label{sec:generate}

The generate step of \name{} takes as input (i) predicate declarations, (ii) hypothesis constraints, and (iii) bounds on the maximum number of variables, literals, and clauses in a hypothesis, and returns an answer set which represents a definite program, if one exists.
The idea is to define an ASP problem where an answer set (a model) corresponds to a definite program, an approach also employed by other recent ILP approaches  \cite{aspal,ilasp,hexmil,inspire}.
In other words, we define a meta-language in ASP to represent definite programs.
\name{} uses ASP constraints to ensure that a definite program is declaration consistent and obeys hypothesis constraints, such as enforcing type restrictions or disallowing mutual recursion.
By later adding learned hypothesis constraints, we eliminate answer sets, and thus reduce the hypothesis space.
In other words, the more constraints we learn, the more we reduce the hypothesis space.

Figure \ref{fig:alan} shows the base ASP program to generate programs.
The idea is to find an answer set with suitable head and body literals, which both have the arguments \tw{(Clause,Pred,Arity,Vars)} to denote that there is a literal in the clause \tw{Clause}, with the predicate symbol \tw{Pred}, arity \tw{Arity}, and variables \tw{Vars}.
For instance, \tw{head\_literal(0,p,2,(0,1))} denotes that clause \tw{0} has a head literal with the predicate symbol \tw{p}, arity \tw{2}, and variables \tw{(0,1)}, which we interpret as \tw{(A,B)}.
Likewise, \tw{body\_literal(1,q,3,(0,0,2))} denotes that clause \tw{1} has a body literal with the predicate symbol \tw{q}, arity \tw{3}, and variables \tw{(0,0,2)}, which we interpret as \tw{(A,A,C)}.
Head and body literals are restricted by \tw{head\_pred} and \tw{body\_pred} declarations respectively.
Table \ref{tab:answerset_to_prolog} shows examples of the correspondence between an answer set and a definite program, which we represent as a Prolog program.

\begin{figure}[ht]
\centering
\begin{minipage}{.8\linewidth}
\begin{lstlisting}[frame=single]
% possible clauses
allowed_clause(0..N-1):- max_clauses(N).

% variables
var(0..N-1):- max_vars(N).

% clauses with a head literal
clause(Clause):- head_literal(Clause,_,_,_).

%% head literals
0 {head_literal(Clause,P,A,Vars): head_pred(P,A), vars(A,Vars)} 1:-
    allowed_clause(Clause).

%% body literals
1 {body_literal(Clause,P,A,Vars): body_pred(P,A), vars(A,Vars)} N:-
    clause(Clause), max_body(N).

% variable combinations
vars(1,(Var1,)):- var(Var1).
vars(2,(Var1,Var2)):- var(Var1),var(Var2).
vars(3,(Var1,Var2,Var3)):- var(Var1),var(Var2),var(Var3).
\end{lstlisting}
\end{minipage}
\caption{
\name{} base ASP program.
The \tw{head\_literal} literals are bounded from 0 to 1, i.e for each possible clause there can be at most 1 head literal.
The \tw{body\_literal} literals are bounded from 1 to $N$, where $N$ is the maximum number of literals allowed in a clause, i.e. for each clause with a head literal, there has to be at least 1 but at most $N$ body literals.
}
\label{fig:alan}
\end{figure}








\begin{table}[ht]
\centering
\footnotesize
\begin{tabular}{@{}ll@{}}
\toprule
\textbf{Answer set} & \textbf{Prolog program}
\\
\midrule
\begin{tabular}[c]{@{}l@{}}
\{head\_literal(0,f,2,(0,1)),body\_literal(0,empty,(1,))\}
\end{tabular} & \tw{f(A,B):-empty(B).}
\\
\midrule
\begin{tabular}[c]{@{}l@{}}
\{head\_literal(0,f,2,(0,1)),body\_literal(0,head,2,(1,0))\}
\end{tabular} & \tw{f(A,B):-head(B,A).}
\\
\midrule
\begin{tabular}[c]{@{}l@{}}
\{head\_literal(0,f,2,(0,1)),body\_literal(0,tail,2,(0,1)),\\
body\_literal(0,tail,2,(0,2))\}
\end{tabular} & \tw{f(A,B):-tail(A,B),tail(A,C).}
\\
\midrule
\begin{tabular}[c]{@{}l@{}}
\{head\_literal(0,connected,2,(0,1)),body\_literal(0,edge,2,(0,1)),\\
head\_literal(1,connected,2,(0,1)),body\_literal(1,edge,2,(0,2)),\\
body\_literal(1,connected,(2,1))\}
\end{tabular} &

\begin{tabular}[c]{@{}l@{}}
\tw{connected(A,B):-edge(A,B).}
\\
\tw{connected(A,B):-edge(A,C),connected(C,B).}
\end{tabular}\\
\midrule
\begin{tabular}[c]{@{}l@{}}
\{head\_literal(0,last,2,(0,1)),body\_literal(0,tail,2,(0,2)),\\
body\_literal(0,empty,1,(2,)),body\_literal(0,head,2,(0,1)),\\
head\_literal(1,last,2,(0,1)),body\_literal(1,tail,2,(0,2)),\\
body\_literal(1,last,2,(2,1))\}
\end{tabular} &

\begin{tabular}[c]{@{}l@{}}
\tw{last(A,B):-tail(A,C),empty(C),head(A,B).}
\\
\tw{last(A,B):-tail(A,C),last(C,B).}
\end{tabular}\\
\bottomrule
\end{tabular}
\caption{
  The correspondence between an answer set and a definite program represented as a Prolog program.
}
\label{tab:answerset_to_prolog}
\end{table}

\subsubsection{Validity, redundancy, and efficiency constraints}
\name{} uses hypothesis constraints (in the form of ASP constraints) to eliminate answer sets, i.e. to prune the hypothesis space.
\name{} uses constraints to prune invalid programs.
For instance, Figure \ref{fig:recursion} shows constraints specifically for recursive programs, such as preventing recursion without a base case.
\name{} also uses constraints to reduce redundancy.
For instance, \name{} prunes subsumption redundant programs, such as pruning the following program because the first clause subsumes the second:
\[
\tw{h} = \left\{
\begin{array}{l}
  \tw{p(A):- q(A).}\\
  \tw{p(A):- q(A),r(A).}\\
\end{array}
\right\}
\]
Finally, \name{} uses constraints to improve efficiency (mostly by removing redundancy).
For instance, \name{} uses constraints to use variables in order, which prunes the program \tw{p(B):- q(B)} because we could generate \tw{p(A):- q(A)}.






\begin{figure}[ht]
\centering
\begin{minipage}{.9\linewidth}
\begin{lstlisting}[frame=single]
recursive:- recursive(Clause).

recursive(Clause):- head_literal(Clause,P,A,_), body_literal(Clause,P,A,_).

has_base:- clause(Clause), not recursive(Clause).

% need multiple clauses for recursion
:- recursive(_), not clause(1).

% prevent recursion without a basecase
:- recursive, not has_base.
\end{lstlisting}
\end{minipage}
\caption{
  Constraints used by \name{} to prune invalid recursive programs.
}
\label{fig:recursion}
\end{figure}






\subsubsection{Language bias constraints}
\label{sec:langconstrain}

\name{} supports optional hypothesis constraints to prune the hypothesis space.
Figure \ref{fig:language-constraints} shows example language bias constraints, such as to prevent singleton variables and to enforce Datalog restrictions (where head variables must appear in the body).
Declarative constraints have many benefits, notably the ease to define them.
For instance, to add simple types to \name{} requires the single constraint shown in Figure \ref{fig:language-constraints}.
Through constraints, \name{} also supports the standard notions of \emph{recall} and \emph{input/output}\footnote{
    An input argument specifies that, at the time of calling a predicate, the corresponding argument must be instantiated, which is useful when inducing Prolog programs where literal order matters.
} arguments of mode declarations \cite{progol}.
\name{} also supports \emph{functional} and \emph{irreflexive} constraints, and constraints on recursive programs, such as disallowing left recursion or mutual recursion.
Finally, as we show in Appendix \ref{app:metarules}, \name{} can also use constraints to impose \emph{metarules}, clause templates used by many ILP systems \cite{crop:reduce}, which ensures that each clause in a program is an instance of a metarule.

\begin{figure}[ht]
\centering
\begin{minipage}{1\linewidth}
\begin{lstlisting}[frame=single]
head_var(Clause,Var):- head_literal(Clause,_,_,Vars), var_member(Var,Vars).

body_var(Clause,Var):- body_literal(Clause,_,_,Vars), var_member(Var,Vars).

% prevent singleton variables
:- clause_var(Clause,Var), #count{P,Vars: var_in_literal(Clause,P,Vars,Var)} == 1.

% head vars must appear in the body
:- head_var(Clause,Var), not body_var(Clause,Var).

%% type matching
var_type(Clause,Var,Type):-
    var_in_literal(Clause,P,Vars,Var),
    var_pos(Var,Vars,Pos),
    type(P,Pos,Type).

:- clause_var(Clause,Var), #count{Type : var_type(Clause,Var,Type)} > 1.
\end{lstlisting}
\end{minipage}
\caption{Optional language bias constraints used by \name{}.}
\label{fig:language-constraints}
\end{figure}


\subsubsection{Hypothesis constraints}

As with many ILP systems \cite{progol,aleph,DBLP:conf/ilp/AthakraviABRS14,ilasp,inspire}, \name{} supports \emph{clause} constraints, which allow a user to prune specific clauses from the hypothesis space.
\name{} additionally supports the more general concept of \emph{hypothesis constraints} (Definition \ref{def:hconstraint}), which are defined over a whole program (a set of clauses) rather than a single clause (also employed in previous work \cite{DBLP:conf/ilp/AthakraviABRS14}).
For instance, hypothesis constraints allow us to prune recursive programs that do not contain a base case clause (Figure \ref{fig:recursion}), to prune left recursive or mutually recursive programs, or to prune programs which contain subsumption redundancy between clauses.

As a toy example, suppose you want two disallow two predicate symbols $p/2$ and $q/2$ from both appearing in a program.
Then this hypothesis constraint is trivial to express with \name{}:

\begin{center}
\small
\begin{minipage}{.6\linewidth}
\begin{lstlisting}[frame=single]
:- body_literal(_,p,2,_), body_literal(_,q,2,_).
\end{lstlisting}
\end{minipage}
\end{center}

\noindent
As we show in Appendix \ref{app:metarules}, \name{} can simulate metarules through hypothesis constraints.
We are unaware of any other ILP system that supports hypothesis constraints, at least with the same ease and flexibility as \name{}.
\subsection{Test}

In the test stage, \name{} converts an answer set to a definite program and tests it against the training examples.
As Table \ref{tab:answerset_to_prolog} shows, this conversion is straightforward, except if input/output argument directions are given, in which case \name{} orders the body literals of a clause.
To evaluate a hypothesis, we use a Prolog interpreter.
For each example, \name{} checks whether the example is entailed by the hypothesis and background knowledge.
We enforce a timeout to halt non-terminating programs.
If a hypothesis fails, then \name{} identifies what type of failure has occurred and what constraints to generate (using the failures and constraints from Section \ref{sec:failures}).
\newcommand{\code}[1]{\textbf{\tw{#1}}}

\subsection{Constrain}
\label{sec:constrain}

If a hypothesis fails, then, in the constrain stage, \name{} derives ASP constraints which prune hypotheses, thus constraining subsequent hypothesis generation.
Specifically, we describe how we transform a failed hypothesis (a definite program) to a hypothesis constraint (an ASP constraint written in the encoding from Section \ref{sec:generate}).
We describe the generalisation, specialisation, and elimination constraints that \name{} uses, based on the definitions in Section \ref{sec:failures}.
As our experiments consider a version of \name{} without constraint pruning,
we also describe the \emph{banish} constraint, which prunes one specific hypothesis.
To distinguish between Prolog and ASP code, we represent the code of definite programs in \tw{typewriter} font and ASP code in \code{bold typewriter} font.

\subsubsection{Encoding atoms}

In our encoding, the atom \tw{f(A,B)} is represented as either \code{head\_literal(Clause,f,2,(V0,V1))} or \code{body\_literal(Clause,f,2,(V0,V1))}.
The constant \code{2} is the predicate's arity and the variable \code{Clause} indicates that the clause index is undetermined.
Two functions encode atoms into ASP literals.
The function $\mathit{encodeHead}$ encodes a head atom and $\mathit{encodeBody}$ encodes a body atom.
The first argument specifies the clause the atom belongs to.
The second argument is the atom.
Variables of the atom are converted to variables in our ASP encoding by the $\mathit{encodeVar}$ function.
\[
\begin{array}{l}
\mathit{encodeHead}(Clause,\tw{Pred(Var}_0,\ldots,\tw{Var}_k\tw{)})
:=
\\
\hspace{3ex}
\code{head\_literal(}Clause\code{,}
\tw{Pred}
\code{,}
k+1
\code{,}
\code{(}\mathit{encodeVar}(\tw{Var}_0)\code{,}
\ldots\code{,}
\mathit{encodeVar}(\tw{Var}_k)\code{)}
\code{)}
\\
\\
\mathit{encodeBody}(Clause,\tw{Pred(Var}_0,\ldots,\tw{Var}_k\tw{)})
:=
\\
\hspace{3ex}
\code{body\_literal(}Clause\code{,}
\tw{Pred}
\code{,}
k+1
\code{,}
\code{(}\mathit{encodeVar}(\tw{Var}_0)\code{,}
\ldots\code{,}
\mathit{encodeVar}(\tw{Var}_k)\code{)}
\code{)}
\end{array}
\]

\noindent
For instance, using the term \code{Cl} as a clause variable, calling $\mathit{encodeHead}(\code{Cl},\tw{f(A,B)})$ returns the ASP literal \code{head\_literal(Cl,f,2,(V0,V1))}.
Similarly, calling $\mathit{encodeBody}(\code{Cl},\tw{f(A,B)})$ returns \code{body\_literal(Cl,f,2,(V0,V1))}.

\subsubsection{Encoding clauses}
\label{sec:encoding_clauses}

We encode clauses by building on the encoding of atoms.
Let \code{Cl} be a clause index variable.
Consider the clause \tw{last(A,B):- reverse(A,C),head(C,B)}.
The following ASP literals encode where these atoms occur in a single clause:
\[
\begin{array}{l}
\code{head\_literal(Cl,last,2,(V0,V1))}\\
\code{body\_literal(Cl,reverse,2,(V0,V2))}\\
\code{body\_literal(Cl,head,2,(V2,V1))}
\end{array}
\]

\noindent
An ASP solver will instantiate the variables \code{V0}, \code{V1}, and \code{V2} with indices representing variables of hypotheses, e.g.~\code{0} for \tw{A}, \code{1} for \tw{B}, etc.
Note that the above encoding allows for $\code{V0} = \code{V1} = \code{V2} = \code{0}$, where all the variables are \tw{A}.
To ensure that variables are distinct we need to impose the inequality $\code{V0!=V1}$ and $\code{V0!=V2}$ and $\code{V1!=V2}$.
The function $\mathit{assertDistinct}$ generates such inequalities, one between each pair of variables it is given.
The function $\mathit{encodeClause}$ implements both the straightforward translation and the variable distinctness assertion:
\[
\begin{array}{l}
\mathit{encodeClause}(Clause,(
\tw{head}
\tw{:-}
\tw{body}_1
,\ldots,
\tw{body}_m)
)
:=
\\
\hspace{3ex}
\mathit{encodeHead}(Clause,\tw{head})
\code{,}
\mathit{encodeBody}(Clause,\tw{body}_1)
\code{,}
\ldots
\code{,}
\\\hspace{3ex}
\mathit{encodeBody}(Clause,\tw{body}_m)
\code{,}
\\\hspace{3ex}
\mathit{assertDistinct}(
\mathit{vars}(\tw{head})
\cup
\mathit{vars}(\tw{body}_1)
\cup
\ldots
\cup
\mathit{vars}(\tw{body}_m)
)
\end{array}
\]

\noindent
As clauses can occur in multiple hypotheses, it is convenient to refer to clauses by identifiers.
The function $\mathit{clauseIdent}$ maps clauses to unique ASP constants\footnote{Even though the examples use increasing numbers in the identifiers, $\mathit{clauseIdent}$ can be any injective function, i.e.~always mapping a clause to the same unique identifier.}.
We use the ASP literal $\code{included\_clause(}\mathit{cl}\code{,}\mathit{id}\code{)}$ to represent that a clause with index $\mathit{cl}$ includes all literals of a clause identified by $\mathit{id}$.
The $\mathit{inclusionRule}$ function generates an \emph{inclusion rule}, an ASP rule whose head is true when the literals of the provided clause occur together in a clause:
\[
\begin{array}{l}
\mathit{inclusionRule}(
\tw{head}
\tw{:-}
\tw{body}_1
,\ldots,
\tw{body}_m
)
:=
\\
\hspace{3ex}
\code{included\_clause(}
\code{Cl}
\code{,}
clauseIdent(
\tw{head}
\tw{:-}
\tw{body}_1
,\ldots,
\tw{body}_m)
\code{):-}\\
\hspace{6ex}
\mathit{encodeClause}(\code{Cl},
(\tw{head}
\tw{:-}
\tw{body}_1
,\ldots,
\tw{body}_m)
)
\code{.}
\end{array}
\]

\noindent
Suppose that $\mathit{clauseIdent}(\tw{last(A,B):- reverse(A,C),head(C,B)}) = \code{id\textsubscript{1}}$.
Then the rule obtained by $\mathit{inclusionRule}(\tw{last(A,B):- reverse(A,C),head(C,B))}$ is:
\[
\begin{array}{l}
\code{included\_clause(Cl,id\textsubscript{1})}
\code{:-}
\\
\hspace{3ex}
\code{head\_literal(Cl,last,2,(V0,V1)),}
\\
\hspace{3ex}
\code{body\_literal(Cl,reverse,2,(V0,V2)),}
\\
\hspace{3ex}
\code{body\_literal(Cl,head,2,(V2,V1)),}
\\
\hspace{3ex}
\code{V0!=V1}
\code{,}
\code{V0!=V2}
\code{,}
\code{V1!=V2}
\code{.}
\end{array}
\]

\noindent
Note that $\code{included\_clause(}\mathit{cl}\code{,}\mathit{id}\code{)}$ being true does not mean that other literals do \emph{not} occur in the clause.
For example, if a clause with index \code{0} encoded the clause \tw{last(A,B):- reverse(A,C),head(C,B),tail(C,A)}, then $\code{included\_clause(0,id\textsubscript{1})}$ would also hold.

In our encoding, $\code{clause\_size(}\mathit{cl}\code{,}\mathit{m}\code{)}$ is only true when clause $\mathit{cl}$ has exactly $\mathit{m}$ body literals.
Hence when literals \code{included\_clause(0,id\textsubscript{1})} and \code{clause\_size(0,2)} are both true, the clause with index \code{0} exactly encodes
\tw{last(A,B):- reverse(A,C),head(C,B)}.
The function $\mathit{exactClause}$ derives a pair of ASP literals checking that a clause occurs exactly:
\[
\begin{array}{l}
\mathit{exactClause}(
Clause,
(
\tw{head}
\tw{:-}
\tw{body}_1
,\ldots,
\tw{body}_m
)
)
:=
\\
\hspace{3ex}
\code{included\_clause(}
Clause
\code{,}
clauseIdent(
\tw{head}
\tw{:-}
\tw{body}_1
,\ldots,
\tw{body}_m)
\code{)}
\code{,}
\\
\hspace{3ex}
\code{clause\_size(}
Clause
\code{,}
m
\code{)}
\end{array}
\]

%
%

\subsubsection{Generalisation constraints}

Given a hypothesis $H$, by Definition \ref{def:generalisation}, any hypothesis that includes all of $H$'s clauses exactly is a generalisation of $H$.
We use this fact to define function $\mathit{generalisationConstraint}$, which converts a set of clauses into ASP encoded clause inclusion checking rules as well as a generalisation constraint (Definition \ref{def:generalisation_constraint}).
We use $\mathit{exactClause}$ to impose that a clause is not specialised.
Each clause is given its own ASP variable, meaning that the clauses can occur in any order.
\[
\begin{array}{l}
\mathit{generalisationConstraint}(\{\tw{Clause}_0,\ldots,\tw{Clause}_{n-1}\})
:=
\\
\hspace{3ex}
inclusionRule(
\tw{Clause}_0
)
\\
\hspace{3ex}
\ldots
\\
\hspace{3ex}
inclusionRule(
\tw{Clause}_{n-1}
)
\\
\hspace{3ex}
\code{:-}~
\mathit{exactClause}(\code{Cl\textsubscript{0}},\tw{Clause}_0)
\code{,}
\ldots
\code{,}
\mathit{exactClause}(\code{Cl\textsubscript{n-1}},\tw{Clause}_{n-1})
\code{.}
\end{array}
\]

\noindent
Figure \ref{fig:generalisation-constraint} illustrates $\mathit{generalisationConstraint}$ deriving both an inclusion rule and a generalisation constraint.
\begin{figure}[ht]
\centering
\begin{minipage}{0.5\textwidth}%
\[
\tw{h} = \left\{
\begin{array}{l}
\tw{last(A,B):- reverse(A,C),head(C,B).}
\end{array}
\right\}
\]
\end{minipage}%
\begin{minipage}{0.5\textwidth}%
\begin{lstlisting}[frame=single]
included_clause(Cl,id1):-
  head_literal(Cl,last,2,(V0,V1)),
  body_literal(Cl,reverse,2,(V0,V2)),
  body_literal(Cl,head,2,(V2,V1)),
  V0!=V1,V0!=V2,V1!=V2.
:-
  included_clause(Cl0,id1),
  clause_size(Cl0,2).
\end{lstlisting}
\end{minipage}%
\caption{
ASP encoded inclusion rule and generalisation constraint for the hypothesis \tw{h}.
}
\label{fig:generalisation-constraint}
\end{figure}

\subsubsection{Specialisation constraints}
Given a hypothesis $H$, by Definition \ref{def:specialisation}, any hypothesis which has every clause of $H$ occur, where each of these clauses may be specialised, and includes no other clauses, is a specialisation of $H$.
The function $\mathit{specialisationConstraint}$ uses this fact to derive an ASP encoded specialisation constraint (Definition \ref{def:specialisation_constraint}) alongside inclusion rules.
When $\code{included\_clause(}\mathit{cl}\code{,}\mathit{id}\code{)}$ is true, additional atoms can occur in the clause $\mathit{cl}$.
The literal $\code{not clause(}n\code{)}$ ensures that no additional clause is added to the $n$ distinct clauses of the provided hypothesis.
\[
\begin{array}{l}
\mathit{specialisationConstraint}(\{\tw{Clause}_0,\ldots,\tw{Clause}_{n-1}\})
:=
\\
\hspace{3ex}
inclusionRule(
\tw{Clause}_0
)
\\
\hspace{3ex}
\ldots
\\
\hspace{3ex}
inclusionRule(
\tw{Clause}_{n-1}
)
\\
\hspace{3ex}
\code{:-}~
\code{included\_clause(\code{Cl\textsubscript{0}},}
\mathit{clauseIdent}(
\tw{Clause}_0
)
\code{)}
\code{,}
\ldots
\code{,}
\\
\hspace{5.8ex}
\code{included\_clause(\code{Cl\textsubscript{n-1}},}
\mathit{clauseIdent}(
\tw{Clause}_{n-1}
)
\code{)}
\code{,}
\\
\hspace{5.8ex}
\mathit{assertDistinct}(\{ \code{Cl\textsubscript{0}},\ldots,\code{Cl\textsubscript{n-1}}\})
\code{,}
\code{not clause(}
n
\code{)}
\code{.}
\\
\end{array}
\]
\noindent
We illustrate why asserting that specialised clauses are distinct is necessary.
Consider the hypotheses \tw{h$_1$} and \tw{h$_2$}:
\[
\tw{h$_1$} = \left\{
\begin{array}{l}
  \tw{last(A,B):- head(A,B).}\\
  \tw{last(A,B):- sumlist(A,B).}\\
\end{array}
\right\}\\
\tw{h$_2$} = \left\{
\begin{array}{l}
  \tw{last(A,B):- head(A,B),sumlist(A,B).}\\
  \tw{last(A,B):- member(A,B).}\\
\end{array}
\right\}\\
\]
The first clause of \tw{h$_2$} specialises both clauses in \tw{h$_1$}, yet \tw{h$_2$} is not a specialisation of \tw{h$_1$}.
According to Definition \ref{def:specialisation}, \emph{each} clause needs to be subsumed by a provided clause.
Note that $\mathit{specialisationConstraint}$ only considers hypotheses with at most $n$ clauses.
It is not possible for one of these clauses to be non-specialising, as each of the original $n$ clauses is required to be specialised by a distinct clause.

Figure \ref{fig:specialisation-constraint} illustrates a specialisation constraint derived by $\mathit{specialisationConstraint}$.

\begin{figure}[ht]
\centering
\begin{minipage}{0.5\textwidth}%
\[
\tw{h} = \left\{
\begin{array}{l}
  \tw{rev(A,B):- head(A,B).}\\
  \tw{rev(A,B):- tail(A,C),head(C,B).}
\end{array}
\right\}
\]
\end{minipage}%
\begin{minipage}{0.5\textwidth}%
\begin{lstlisting}[frame=single]
included_clause(Cl,id2):-
  head_literal(Cl,rev,2,(V0,V1)),
  body_literal(Cl,head,2,(V0,V1)),
  V0!=V1.
included_clause(Cl,id3):-
  head_literal(Cl,rev,2,(V0,V1)),
  body_literal(Cl,tail,2,(V0,V2)),
  body_literal(Cl,head,2,(V2,V1)),
  V0!=V1,V0!=V2,V1!=V2.
:-
  included_clause(Cl0,id2),
  included_clause(Cl1,id3),
  Cl0!=Cl1,not clause(2).
\end{lstlisting}
\end{minipage}%
\caption{
ASP encoded inclusion rules and specialisation constraint for the hypothesis \tw{h}.
}
\label{fig:specialisation-constraint}
\end{figure}

\subsubsection{Elimination constraints}

By Proposition \ref{prop:elim_sound}, given a totally incomplete hypothesis $H$, any separable hypothesis which includes all of $H$'s clauses, where each clause may be specialised, cannot be an optimal solution.
We add the following code to the \name{} encoding to detect separable hypotheses:

\begin{center}
\begin{minipage}{.4\linewidth}
\small
\begin{lstlisting}[frame=single]
non_separable:-
    head_literal(_,P,A,_),
    body_literal(_,P,A,_).

separable:-
    not non_separable.
\end{lstlisting}
\end{minipage}
\end{center}

\noindent
The function $\mathit{eliminationConstraint}$ uses this fact to derive an ASP encoded elimination constraint (Definition \ref{def:elimination_constraint}).
As in $\mathit{specialisationConstraint}$, $\code{included\_clause(}\mathit{cl}\code{,}\mathit{id}\code{)}$ is used to allow additional literals in clauses, ensuring that provided clauses are specialised.
However, $\mathit{eliminationConstraint}$ does not require that every clause is a specialisation of a provided clause.
Instead, all that is required is that the hypothesis is separable.
\[
\begin{array}{l}
\mathit{eliminationConstraint}(\{\tw{Clause}_0,\ldots,\tw{Clause}_{n-1}\})
:=
\\
\hspace{3ex}
inclusionRule(
\tw{Clause}_0
)
\\
\hspace{3ex}
\ldots
\\
\hspace{3ex}
inclusionRule(
\tw{Clause}_{n-1}
)
\\
\hspace{3ex}
\code{:-}~
\code{included\_clause(\code{Cl\textsubscript{0}},}
\mathit{clauseIdent}(
\tw{Clause}_0
)
\code{)}
\code{,}
\ldots
\code{,}
\\
\hspace{5.8ex}
\code{included\_clause(\code{Cl\textsubscript{n-1}},}
\mathit{clauseIdent}(
\tw{Clause}_{n-1}
)
\code{)}
\code{,}
\\
\hspace{5.8ex}
\code{separable}
\code{.}
\\
\end{array}
\]
Figure \ref{fig:elimination_constraint} illustrates an elimination constraint derived by $\mathit{eliminationConstraint}$.

\begin{figure}[ht]
\centering
\begin{minipage}{0.5\textwidth}%
\[
\tw{h} = \left\{
\begin{array}{l}
\tw{last(A,B):- tail(A,C),head(C,B).}
\end{array}
\right\}
\]
\end{minipage}%
\begin{minipage}{0.5\textwidth}%
\begin{lstlisting}[frame=single]
included_clause(Cl,id4):-
  head_literal(Cl,last,2,(V0,V1)),
  body_literal(Cl,tail,2,(V0,V2)),
  body_literal(Cl,head,2,(V2,V1)),
  V0!=V1,V0!=V2,V1!=V2.
:-
  included_clause(Cl0,id4),
  separable.
\end{lstlisting}
\end{minipage}%
\caption{
ASP encoded inclusion rule and elimination constraint for the hypothesis \tw{h}.
}
\label{fig:elimination_constraint}
\end{figure}

\subsubsection{Banish constraints}

In the experiments we compare \name{} against a version of itself without constraint pruning.
To do so we need to remove single hypotheses from the hypothesis space.
We introduce the \emph{banish constraint} for this purpose.
To prune a specific hypothesis, hypotheses with different variables should not be pruned.
We accomplish this condition by changing the behaviour of the $\mathit{encodeVar}$ function.
Normally $\mathit{encodeVar}$ returns ASP variables which are then grounded to indices that correspond to the variables of hypotheses.
Instead, by the following definition, $\mathit{encodeVar}$ directly assigns the corresponding index for a hypothesis variable:
\[
\mathit{encodeVar} =
\{
\,
\tw{A} \mapsto \code{0};\,
\tw{B} \mapsto \code{1};\,
\tw{C} \mapsto \code{2};\,
\ldots
\,
\}
\]

\noindent
For a banish constraint no additional literals in clauses are allowed, nor are additional clauses.
The below function $\mathit{banishConstraint}$ ensures both conditions when converting a hypothesis to an ASP encoded banish constraint.
That provided clauses occur non-specialised is ensured by $\mathit{exactClause}$.
The literal $\code{not clause(}n\code{)}$ asserts that there are no more clauses than the original number.
\[
\begin{array}{l}
\mathit{banishConstraint}(\{\tw{Clause}_0,\ldots,\tw{Clause}_{n-1}\})
:=
\\
\hspace{3ex}
inclusionRule(
\tw{Clause}_0
)
\\
\hspace{3ex}
\ldots
\\
\hspace{3ex}
inclusionRule(
\tw{Clause}_{n-1}
)
\\
\hspace{3ex}
\code{:-}~
\mathit{exactClause}(\code{Cl\textsubscript{0}},\tw{Clause}_0)
\code{,}
\ldots
\code{,}
\mathit{exactClause}(\code{Cl\textsubscript{n-1}},\tw{Clause}_{n-1})
\code{,}
\\
\hspace{5.8ex}
\code{not clause(}
n
\code{)}
\code{.}
\end{array}
\]
\noindent
Figure \ref{fig:banish_constraint} illustrates a banish constraint derived by $\mathit{banishConstraint}$.
\begin{figure}[ht]
\centering
\begin{minipage}{0.45\textwidth}%
\[
\tw{h} = \left\{
\begin{array}{l}
\tw{f(A):- head(A,B),one(B).}\\
\tw{f(A):- tail(A,B),empty(B).}
\end{array}
\right\}
\]
\end{minipage}%
\hspace{0.04\textwidth}
\begin{minipage}{0.46\textwidth}%
\begin{lstlisting}[frame=single]
included_clause(Cl,id5):-
  head_literal(Cl,f,1,(0,)),
  body_literal(Cl,head,2,(0,1)),
  body_literal(Cl,one,1,(1,)).
included_clause(Cl,id6):-
  head_literal(Cl,f,1,(0,)),
  body_literal(Cl,tail,2,(0,1)),
  body_literal(Cl,empty,1,(1,)).
:-
  included_clause(Cl0,id5),
  clause_size(Cl0,2),
  included_clause(Cl1,id6),
  clause_size(Cl1,2),
  not clause(2).
\end{lstlisting}
\end{minipage}%
\caption{
ASP encoded inclusion rules and banish constraint for the hypothesis \tw{h}.
}
\label{fig:banish_constraint}
\end{figure}
\subsection{\name{} loop and multi-shot solving}
\label{sec:loop}

A naive implementation of Algorithm \ref{alg:popper}, such as performing iterative deepening on the program size, would duplicate grounding and solving during the generate step.
To improve efficiency, we use Clingo's multi-shot solving \cite{multishot} to maintain state between the three stages.
The idea of multi-shot solving is that state of the solving process for an ASP program can be saved to help solve modifications of that program.
The essence of the multi-shot cycle is that a ground program is given to an ASP solver, yielding an answer set, who's processing leads to a (first-order) extension of the program.
Only this extension then needs grounding and adding to the running ASP instance, which means that the running solver may, for example, maintain learned conflicts.

\name{} uses multi-shot solving as follows.
The initial ASP program is the encoding described in Section \ref{sec:generate}.
\name{} asks Clingo to ground the initial program and prepare for its solving.
In the generate stage, the solver is asked to return an answer set, i.e.~a model, of the current program.
\name{} converts such an answer set to a definite program and tests it against the examples.
If a hypothesis fails, \name{} generates ASP constraints using the functions in Section \ref{sec:constrain} and adds them to the running Clingo instance, which grounds the constraints and adds the new (propositional) rules to the running solver.
We employ a hard constraint on the program size that reasons about an \emph{external atom} \cite{multishot} \emph{size(N)}.
Initially, programs need to consist of just one literal.
When there are no more answer sets, we increment the program size.
Every time we increment the program size, e.g.~from $N$ to $N{+}1$,
we add a new atom \emph{size(N+1)} and a new constraint enforcing this program size.
Only the new constraint is ground at this point.
We disable the previous constraint by setting the external atom \emph{size(N)} to false.
The solver knows which parts of the search space (i.e.~hypothesis space) have already been considered and will not revisit them.
This loop repeats until either (i) \name{} finds an optimal solution, or (ii) there are no more hypotheses to test.

\subsection{Worked example}
To illustrate \name{}, reconsider the example from the introduction of learning a \emph{last/2} hypothesis to find the last element of a list.
For simplicity, assume an initial hypothesis space $\mathcal{H}_1$:

\[
\mathcal{H}_1 = \left\{
\begin{array}{l}
\tw{h$_1$} = \left\{
\begin{array}{l}
    \tw{last(A,B):- head(A,B).}\\
\end{array}
\right\}\\
\tw{h$_2$} = \left\{
\begin{array}{l}
    \tw{last(A,B):- head(A,B),empty(A).}\\
\end{array}
\right\}\\
\tw{h$_3$} = \left\{
\begin{array}{l}
    \tw{last(A,B):- tail(A,C),head(C,B).}\\
\end{array}
\right\}\\
\tw{h$_4$} = \left\{
\begin{array}{l}
    \tw{last(A,B):- reverse(A,C),head(C,B).}\\
\end{array}
\right\}\\
\tw{h$_5$} = \left\{
\begin{array}{l}
    \tw{last(A,B):- head(A,B),reverse(A,C),head(C,B).}\\
\end{array}
\right\}\\
\tw{h$_6$} = \left\{
\begin{array}{l}
    \tw{last(A,B):- tail(A,C),head(C,B).}\\
    \tw{last(A,B):- reverse(A,C),head(C,B).}\\
\end{array}
\right\}\\
\tw{h$_7$} = \left\{
\begin{array}{l}
    \tw{last(A,B):- tail(A,C),head(C,B).}\\
    \tw{last(A,B):- tail(A,C),tail(C,D),head(D,B).}\\
\end{array}
\right\}\\
\tw{h$_8$} = \left\{
\begin{array}{l}
    \tw{last(A,B):- reverse(A,C),tail(C,D),head(D,B).}\\
    \tw{last(A,B):- tail(A,C),reverse(C,D),head(D,B).}\\
\end{array}
\right\}\\
\tw{h$_9$} = \left\{
\begin{array}{l}
    \tw{last(A,B):- head(A,B).}\\
    \tw{last(A,B):- reverse(A,C),tail(C,D),head(D,B).}\\
    \tw{last(A,B):- tail(A,C),reverse(C,D),head(D,B).}\\
\end{array}
\right\}\\
\end{array}
\right\}
\]

\noindent
Also assume we have the positive ($E^+$) and negative ($E^-$) examples:
\[
E^+ = \left\{
\begin{array}{l}
\tw{last([l,a,u,r,a],a).}\\
\tw{last([p,e,n,e,l,o,p,e],e).}
\end{array}
\right\}
\hspace{3ex}
E^- = \left\{
\begin{array}{l}
\tw{last([e,m,m,a],m).}\\
\tw{last([j,a,m,e,s],e).}
\end{array}
\right\}
\]

\noindent
To start, \name{} generates the simplest hypothesis:
\[
\tw{h$_1$} = \left\{
\begin{array}{l}
    \tw{last(A,B):- head(A,B).}\\
\end{array}
\right\}
\]
\noindent
\name{} then tests \tw{h$_1$} against the examples and finds that it \emph{fails} because it does not entail any positive example and is therefore too \emph{specific}.
\name{} then generates a specialisation constraint to prune specialisations of \tw{h$_1$}:

\begin{center}
\small
\begin{minipage}{0.42\textwidth}%
\begin{lstlisting}[frame=single]
included_clause(Cl,id1):-
    head_literal(Cl,last,2,(V0,V1)),
    body_literal(Cl,head,2,(V0,V1)),
    V0!=V1.
:-
    included_clause(Cl0,id1),
    not clause(1).
\end{lstlisting}
\end{minipage}
\end{center}

\noindent
\name{} adds this constraint to the meta-level ASP program which prunes \tw{h$_2$} and \tw{h$_5$} from the hypothesis space.
In addition, because \tw{h$_1$} does not entail any positive example (is \emph{totally} incomplete), \name{} also generates an elimination constraint:

\begin{center}
\small
\begin{minipage}{0.42\textwidth}%
\begin{lstlisting}[frame=single]
:-
    included_clause(Cl0,id1),
    separable.
\end{lstlisting}
\end{minipage}
\end{center}

\noindent
\name{} adds this constraint to the meta-level ASP program which prunes \tw{h$_9$} from the hypothesis space.
The hypothesis space is now:
\[
\mathcal{H}_2 = \left\{
\begin{array}{l}
\tw{h$_3$} = \left\{
\begin{array}{l}
    \tw{last(A,B):- tail(A,C),head(C,B).}\\
\end{array}
\right\}\\
\tw{h$_4$} = \left\{
\begin{array}{l}
    \tw{last(A,B):- reverse(A,C),head(C,B).}\\
\end{array}
\right\}\\
\tw{h$_6$} = \left\{
\begin{array}{l}
    \tw{last(A,B):- tail(A,C),head(C,B).}\\
    \tw{last(A,B):- reverse(A,C),head(C,B).}\\
\end{array}
\right\}\\
\tw{h$_7$} = \left\{
\begin{array}{l}
    \tw{last(A,B):- tail(A,C),head(C,B).}\\
    \tw{last(A,B):- tail(A,C),tail(C,D),head(D,B).}\\
\end{array}
\right\}\\
\tw{h$_8$} = \left\{
\begin{array}{l}
    \tw{last(A,B):- reverse(A,C),tail(C,D),head(D,B).}\\
    \tw{last(A,B):- tail(A,C),reverse(C,D),head(D,B).}\\
\end{array}
\right\}
\end{array}
\right\}
\]
\noindent
\name{} generates another hypothesis (\tw{h$_3$}) and tests against the examples and finds that it fails because it entails the negative example \tw{last([e,m,m,a],m)} and is therefore too \emph{general}.
\name{} then generates a generalisation constraint to prune generalisations of \tw{h$_3$}:

\begin{center}
\small
\begin{minipage}{0.42\textwidth}%
\begin{lstlisting}[frame=single]
included_clause(Cl,id2):-
    head_literal(Cl,last,2,(V0,V1)),
    body_literal(Cl,tail,2,(V0,V2)),
    body_literal(Cl,head,2,(V2,V1)),
    V0!=V1,V0!=V2,V1!=V2.
:-
  included_clause(Cl0,id2),
  clause_size(Cl0,2).
\end{lstlisting}
\end{minipage}%
\end{center}

\noindent
\name{} adds this constraint to the meta-level ASP program which prunes \tw{h$_6$} and \tw{h$_7$} from the hypothesis space.
The hypothesis space is now:
\[
\mathcal{H}_3 = \left\{
\begin{array}{l}
\tw{h$_4$} = \left\{
\begin{array}{l}
    \tw{last(A,B):- reverse(A,C),head(C,B).}\\
\end{array}
\right\}\\
\tw{h$_8$} = \left\{
\begin{array}{l}
    \tw{last(A,B):- reverse(A,C),tail(C,D),head(D,B).}\\
    \tw{last(A,B):- tail(A,C),reverse(C,D),head(D,B).}\\
\end{array}
\right\}
\end{array}
\right\}
\]
\noindent
Finally, \name{} generates another hypothesis (\tw{h$_4$}), tests it against the examples, finds that it does not fail, and returns it.
\subsection{Correctness}
\label{sec:correctness}

We now show the correctness of \name{}.
However, we can only show this result for when the hypothesis space only contains decidable programs, e.g.~Datalog programs.
When the hypothesis space contains arbitrary definite programs, then the results do not hold because checking for entailment of an arbitrary definite program is only semi-decidable \cite{tarnlund:hornclause}.
In other words, the results in this section only hold when every hypothesis in the hypothesis space is guaranteed to terminate\footnote{
In practice, such as in our experiments on learning list transformation programs, we enforce a timeout when testing programs, i.e. we assume that every solution terminates before the timeout.
}.

We first show that \name{}'s base encoding (Figure \ref{fig:alan}) can generate every declaration consistent hypothesis (Definition \ref{def:decl_cons_hypothesis}):

\begin{proposition}
\label{prop:all_hypotheses}
The base encoding of \name{} has a model for every declaration consistent hypothesis.
\end{proposition}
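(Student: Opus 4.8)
The plan is to prove the claim constructively: given an arbitrary declaration consistent hypothesis $H$, I would exhibit an answer set of the base encoding (Figure \ref{fig:alan}) whose \tw{head\_literal} and \tw{body\_literal} atoms correspond to the clauses of $H$ under the translation of Table \ref{tab:answerset_to_prolog}. Since the base program is negation-free apart from its choice/cardinality rules, the argument reduces to (i) selecting the right atoms and (ii) checking that the resulting interpretation is a supported, minimal model, i.e.\ a genuine answer set. I would first make explicit the scope assumption that the proposition is really about hypotheses that fit the declared bounds: writing $v$, $m$, and $n$ for \tw{max\_vars}, \tw{max\_body}, and \tw{max\_clauses} (as in Proposition \ref{prop:hspace}), I take $H=\{C_0,\ldots,C_{k-1}\}$ with $k \leq n$, each $C_j$ having at most $m$ body literals and using at most $v$ distinct variables, and with every predicate arity covered by the \tw{vars} rules of Figure \ref{fig:alan}.

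Next I would build the candidate interpretation $M$. Because $H$ is declaration consistent (Definition \ref{def:decl_cons_hypothesis}), each $C_j$ is declaration consistent (Definition \ref{def:decl_cons_clause}): its head is $p(X_1,\ldots,X_a)$ with \emph{head\_pred(p,a)} $\in D_h$, each body literal is $q(Y_1,\ldots,Y_b)$ with \emph{body\_pred(q,b)} $\in D_b$, and all arguments are first-order variables. I assign each distinct variable of $C_j$ to a distinct index in $\{0,\ldots,v-1\}$ (possible since $C_j$ uses at most $v$ distinct variables) and then set $M$ to consist of: the input facts (\tw{head\_pred}, \tw{body\_pred}, \tw{max\_clauses}, \tw{max\_vars}, \tw{max\_body}); the deterministically derived atoms \tw{allowed\_clause}, \tw{var}, and all \tw{vars} tuples; for each $C_j$, the single atom encoding its head and the atoms encoding its body literals with the chosen variable tuples; and the derived atoms \tw{clause(j)}.

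I would then verify that $M$ is a model and, crucially, an answer set. The deterministic rules hold by construction. The head-literal choice rule is satisfied because each $C_j$ contributes exactly one \tw{head\_literal} atom, within the cardinality bound $[0,1]$, matching a \tw{head\_pred} declaration and a legal \tw{vars} tuple; the body-literal choice rule is satisfied because each clause with a head contributes between $1$ and $m$ body literals, again respecting the \tw{body\_pred} declarations. To promote this from classical satisfaction to answer-set membership, I would use that the program has no negation in rule bodies, so its reduct replaces each choice rule by the atoms actually selected in $M$; every atom of $M$ is then supported by a rule whose body holds, and no atom outside this support is present, giving minimality. Hence $M$ is the answer set corresponding to $H$, and since $H$ was arbitrary the claim follows.

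The main obstacle is not the classical satisfaction check, which is routine, but making the stable-model argument precise for the cardinality-constraint rules: I must show that the chosen \tw{head\_literal} and \tw{body\_literal} atoms form exactly a supported choice under the reduct and that no strictly smaller supported model exists, so that $M$ is genuinely stable rather than merely satisfying. A secondary point I would state explicitly is the boundedness and arity assumption noted above, since declaration consistency alone imposes no bound on the number of clauses, variables, or body literals; under these assumptions the construction yields a model for every declaration consistent hypothesis, which is exactly what is claimed.
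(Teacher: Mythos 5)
Your proof is correct and takes essentially the same route as the paper's: both arguments are constructive, taking an arbitrary hypothesis within the declared bounds, representing each head and body literal as a \tw{head\_literal}/\tw{body\_literal} choice atom whose \tw{head\_pred}/\tw{body\_pred} guard holds by declaration consistency, and observing that the only remaining restrictions are the three cardinality bounds, which the hypothesis satisfies by assumption. The only differences are matters of rigor, not approach: you make explicit the boundedness assumption that the paper compresses into ``declaration consistent with $D$ and these parameters,'' and you verify stability via the reduct of the negation-free choice rules, a step the paper's proof leaves implicit by concluding directly that a model exists.
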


\noindent
\begin{proof}
Let $D = (D_h, D_b)$ be a declaration bias, $N_{var}$ be the maximum number of unique variables, $N_{body}$ be the maximum number of body literals, $N_{clause}$ be the maximum number of clauses, $H$ be any hypothesis declaration consistent with $D$ and these parameters, and $C$ be any clause in $H$.
Our encoding represents the head literal $p_h(H_1,\dots,H_n)$ of $C$ as a choice literal \tw{head\_literal(i,$p_h$,$n$,($H_1$,$\dots$,$H_n$))} guarded by the condition \tw{head\_pred($p_h$,$n$)} $\in D_h$, which clearly holds.
Our encoding represents a body literal $p_b(B_1,\dots,B_m)$ of $C$ as a choice literal \tw{body\_literal($i$,$p_b$,$m$,($B_1$,$\ldots$,$B_m$))} guarded by the condition \tw{body\_pred($p_b$,$m$)} $\in D_b$, which clearly holds.
The base encoding only constrains the above guesses by three conditions:
(i) at most $N_{var}$ unique variables per clause,
(ii) at least 1 and at most $N_{body}$ body literals per clause, and
(iii) at most $N_{clause}$ clauses.
As both the hypothesis and the guessed literals satisfy the same conditions, we conclude there exists a model representing $H$.
\end{proof}

\noindent
We show that any hypothesis returned by \name{} is a solution (Definition \ref{def:solution}):

\begin{proposition}[Soundness]
\label{prop:poppersound}
Any hypothesis returned by \name{} is a solution.
\end{proposition}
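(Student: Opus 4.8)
The plan is to read the claim directly off the control flow of Algorithm \ref{alg:popper} together with the specifications of its three stages. \name{} returns a value in exactly two places: the statement \tw{return program} inside the \tw{while} loop, and the terminal \tw{return \{\}}. I would first dispose of the latter: the empty set is a distinguished sentinel signalling that the bounded, constrained hypothesis space has been exhausted without finding a solution, so it is not a \emph{hypothesis returned} in the sense of the claim (and indeed $\{\} \cup B$ need not entail any positive example). It therefore suffices to show that whenever \name{} reaches \tw{return program}, the returned program is a solution.

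The key observation is that the \tw{return program} line is guarded by the test \tw{outcome == ('all\_positive', 'none\_negative')}. I would appeal to the specification of the \tw{test} stage: for a program $H$, the component \tw{all\_positive} corresponds exactly to the \pall{} outcome, $\forall e \in E^+,\ H \cup B \models e$, i.e.\ $H$ is \emph{complete}; and \tw{none\_negative} corresponds exactly to the \nnone{} outcome, $\forall e \in E^-,\ H \cup B \not\models e$, i.e.\ $H$ is \emph{consistent}. Hence the guard holds precisely when $H$ is both complete and consistent, which is the definition of a solution (Definition \ref{def:solution}). The only remaining ingredient is that $H$ lies in $\mathcal{H}_{D,C}$: the \tw{generate} stage produces the definite program encoded by an answer set of the base ASP program, whose choice rules for head and body literals are guarded by the \tw{head\_pred} and \tw{body\_pred} declarations and which is extended with the current hypothesis constraints, so any program it returns is declaration consistent and violates no constraint, i.e.\ $H \in \mathcal{H}_{D,C}$ (the faithful correspondence between answer sets and declaration consistent hypotheses being the content of Proposition \ref{prop:all_hypotheses}).

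The substantive point — and the step I expect to be the real obstacle — is justifying that the reported \tw{outcome} faithfully reflects logical entailment, since \tw{test} uses a Prolog interpreter rather than an entailment oracle. The hard direction is \nnone{}: concluding $H \cup B \not\models e$ for a negative example $e$ requires the corresponding Prolog query to \emph{finitely fail}, yet entailment of arbitrary definite programs is only semi-decidable, so a non-terminating evaluation could not license this conclusion. I would therefore invoke the standing assumption of this section, that every hypothesis in the space is guaranteed to terminate (e.g.\ the space consists of Datalog programs), under which SLD-resolution is a decision procedure for entailment of ground atoms and the \tw{test} outcomes coincide with the semantic conditions above. With that assumption in hand the proof is immediate, and the remaining verification — that \tw{test} checks each example exactly as the outcome tables prescribe — is routine.
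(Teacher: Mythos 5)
Your proof is correct and follows essentially the same route as the paper, whose entire argument is the one-line observation that any returned hypothesis has been tested against the examples and confirmed as a solution; you simply make explicit the details the paper leaves implicit (the guard on \tw{return program}, membership in $\mathcal{H}_{D,C}$, and the standing termination assumption of Section \ref{sec:correctness} that makes Prolog testing a faithful entailment check).
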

\begin{proof}
Any returned hypothesis has been tested against the training examples and confirmed as a solution.
\end{proof}

\noindent
To make the next two results shorter, we introduce a lemma to show that \name{} never prunes optimal solutions (Definition \ref{def:opthyp}):

\begin{lemma}
\label{lemma:prune}
\name{} never prunes optimal solutions.
\end{lemma}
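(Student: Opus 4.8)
The plan is to prove Lemma~\ref{lemma:prune} by showing that every constraint \name{} adds during the constrain stage is individually incapable of pruning an optimal solution, and then concluding that their accumulation is also safe. Since \name{} only ever removes hypotheses from the space by adding generalisation, specialisation, and elimination constraints (the banish constraint is only used in the experimental comparison without constraint pruning, so it is outside the scope of the ordinary loop), it suffices to argue each of these three constraint types separately.

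First I would handle the generalisation and specialisation constraints. A generalisation constraint is only generated when a hypothesis $H$ is inconsistent (outcome \nsome{}), and by Proposition~\ref{prop:generalisation_soundness} every hypothesis it prunes (i.e.\ every $H'$ with $H' \preceq H$) is itself inconsistent, hence not a solution at all, let alone an optimal one. Symmetrically, a specialisation constraint is only generated when $H$ is incomplete (outcome \psome{} or \pnone{}), and by Proposition~\ref{prop:specialisation_soundness} every pruned $H'$ (with $H \preceq H'$) is incomplete and therefore not a solution. So neither constraint type can remove a solution, and in particular neither can remove an optimal solution.

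Next I would treat the elimination constraint, which is the only delicate case because, unlike the other two, it \emph{can} prune genuine solutions (as the paper's unsoundness example shows). Here the key is Proposition~\ref{prop:elim_sound}: an elimination constraint is only issued for a totally incomplete hypothesis $H_1$, and it prunes exactly the separable hypotheses $H_3$ that contain a specialisation $H_2 \supset$-component of $H_1$ (formally $H_1 \preceq H_2$ and $H_2 \subset H_3$ with $H_3$ separable). Proposition~\ref{prop:elim_sound} establishes precisely that any such $H_3$ is \emph{not} an optimal solution, because $H_3 \setminus H_2$ is a strictly smaller solution. Thus every hypothesis removed by an elimination constraint is non-optimal, so optimal solutions survive.

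Finally I would note that pruning is monotone: an optimal solution $H^*$ that is not removed by any single constraint is not removed by the union of all learned constraints, since a hypothesis is eliminated from $\mathcal{H}_{D,C}$ precisely when it violates \emph{some} constraint. Because each of the three constraint types provably leaves every optimal solution intact, no sequence of them can ever remove one, which gives the lemma. The one step I expect to require the most care is the elimination case, since it is the only one where the invariant is not ``does not prune solutions'' but the weaker ``does not prune \emph{optimal} solutions''; here one must be careful to invoke Proposition~\ref{prop:elim_sound} with its full hypotheses ($H_3$ separable, $H_1$ totally incomplete, $H_1 \preceq H_2 \subset H_3$) and to confirm that the constraint \name{} actually emits prunes no more than the set of such $H_3$.
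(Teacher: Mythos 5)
Your proposal is correct and takes essentially the same route as the paper's own proof: a case split on the type of failure, discharging the generalisation and specialisation cases via Propositions~\ref{prop:generalisation_soundness} and~\ref{prop:specialisation_soundness} (pruned hypotheses are inconsistent, resp.\ incomplete, hence not solutions) and the elimination case via Proposition~\ref{prop:elim_sound}. Your additional remarks---setting banish constraints outside the scope of the ordinary loop and observing that the safety of each individual constraint transfers monotonically to their accumulation---merely make explicit what the paper's proof leaves implicit.
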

\begin{proof}
\name{} only learns constraints from a failed hypothesis, i.e.~a hypothesis that is incomplete or inconsistent.
Let $H$ be a failed hypothesis.
If $H$ is incomplete, then, as described in Section \ref{sec:constrain}, \name{} prunes specialisations of $H$.
Proposition \ref{prop:specialisation_soundness} shows that a specialisation constraint never prunes complete hypotheses, and thus never prunes optimal solutions.
If $H$ is inconsistent, then, as described in Section \ref{sec:constrain}, \name{} prunes generalisations of $H$.
Proposition \ref{prop:generalisation_soundness} shows that a generalisation constraint never prunes consistent hypotheses, and thus never prunes optimal solutions.
Finally, if $H$ is totally incomplete, then, as described in Section \ref{sec:constrain}, \name{} uses an elimination constraint to prune all separable hypotheses that contain $H$.
Proposition \ref{prop:elim_sound} shows that an elimination constraint never prunes optimal solutions.
Since \name{} only uses these three constraints, it never prunes optimal solutions.
\end{proof}

\noindent
We now show that \name{} returns a solution if one exists:

\begin{proposition}[Completeness]
\label{prop:completeness}
\name{} returns a solution if one exists.
\end{proposition}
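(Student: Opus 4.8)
The plan is to show that whenever the hypothesis space contains a solution, \name{}'s generate–test–constrain loop (Algorithm \ref{alg:popper}) must eventually generate and return one. First I would fix a witness: if some solution exists in $\mathcal{H}_{D,C}$, then, because $size$ takes non-negative integer values and the solution set is non-empty, there is an \emph{optimal} solution $H^*$ in the sense of Definition \ref{def:opthyp}; I take $H^*$ to be a solution of minimal size, so in particular no solution has size strictly smaller than $size(H^*)$.

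The central invariant I would establish is that $H^*$ remains generatable at every iteration of the loop. By Proposition \ref{prop:all_hypotheses}, the base encoding has a model for every declaration consistent hypothesis, so $H^*$ corresponds to an answer set of the initial ASP program. The constrain stage only ever \emph{adds} constraints, which can remove answer sets but never introduce new ones, so it suffices to check that no learned constraint ever eliminates the answer set encoding $H^*$. This is precisely Lemma \ref{lemma:prune}: \name{} never prunes optimal solutions. Hence $H^*$ stays a valid answer set of the running Clingo instance throughout the entire run.

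I would then argue finiteness and progress of the search. By Proposition \ref{prop:hspace}, for each fixed literal bound the constraint-consistent hypothesis space is finite, and by the multi-shot solving semantics of Section \ref{sec:loop} the generate step enumerates every remaining answer set of the current size before \texttt{num\_literals} is incremented. Since $size(H^*)$ is finite and at most \texttt{max\_literals}, after finitely many iterations the loop reaches \texttt{num\_literals} equal to $size(H^*)$. At every smaller size level, by minimality of $H^*$ no hypothesis is a solution, so every generated program fails the test and the loop cannot yet have returned. Therefore, once the search reaches size $size(H^*)$, the generate step — which enumerates all finitely many unpruned hypotheses of that size, with $H^*$ among them by the invariant — will produce either $H^*$ itself or another solution of the same size.

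Finally I would close with the test stage: when \name{} generates such a solution, testing it yields the outcome \texttt{all\_positive}, \texttt{none\_negative}, since the hypothesis is complete and consistent, so Algorithm \ref{alg:popper} returns it; by Proposition \ref{prop:poppersound} the returned hypothesis is indeed a solution. The main obstacle is the survival invariant of the second paragraph — that no learned constraint ever removes the witness $H^*$ — but this is fully discharged by Lemma \ref{lemma:prune}. The remaining work is the routine bookkeeping tying generate-completeness to Proposition \ref{prop:all_hypotheses} and to the model-eliminating, exhaustive nature of multi-shot solving, together with the per-level finiteness supplied by Proposition \ref{prop:hspace}.
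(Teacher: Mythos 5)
Your proposal is correct and rests on exactly the same four ingredients as the paper's own proof---Proposition~\ref{prop:all_hypotheses} for generate-completeness, Proposition~\ref{prop:hspace} for finiteness/termination, Lemma~\ref{lemma:prune} for survival of optimal solutions under learned constraints, and Proposition~\ref{prop:poppersound} for the returned hypothesis being a solution. The only difference is presentational: you give a direct witness-and-invariant argument tracking an optimal solution $H^*$ through the size-increasing loop, whereas the paper packages the identical content as a proof by contradiction with an exhaustive case analysis.
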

\begin{proof}
Assume, for contradiction, that \name{} does not return a solution, which implies that (1) \name{} returned a hypothesis that is not a solution, or (2) \name{} did not return a solution.
Case (1) cannot hold because Proposition \ref{prop:poppersound} shows that every hypothesis returned by \name{} is a solution.
For case (2), by Proposition \ref{prop:all_hypotheses}, \name{} can generate every hypothesis so it must be the case that (i) \name{} did not terminate, (ii) a solution did not pass the test stage, or (iii) that every solution was incorrectly pruned.
Case (i) cannot hold because Proposition \ref{prop:hspace} shows that the hypothesis space is finite so there are finitely many hypotheses to generate and test.
Case (ii) cannot hold because a solution is by definition a hypothesis that passes the test stage.
Case (iii) cannot hold because Lemma \ref{lemma:prune} shows that \name{} never prunes optimal solutions.
These cases are exhaustive, so the assumption cannot hold, and thus \name{} returns a solution if one exists.
\end{proof}

\noindent
We show that \name{} returns an optimal solution if one exists:

\begin{theorem}[Optimality]
\label{thm:optimal}
\name{} returns an optimal solution if one exists.
\end{theorem}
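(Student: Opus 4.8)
The plan is to combine the already-established Soundness and Completeness results with the fact that \name{} enumerates hypotheses by non-decreasing size, so that the first solution it encounters must have minimal size. First I would invoke Proposition \ref{prop:completeness} (Completeness): if a solution exists then \name{} returns some hypothesis $H$, and by Proposition \ref{prop:poppersound} (Soundness) this $H$ is a solution. It therefore only remains to show that $H$ is \emph{optimal} in the sense of Definition \ref{def:opthyp}, i.e.~that no solution has strictly smaller size.

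The central observation is the search order. Algorithm \ref{alg:popper} initialises \texttt{num\_literals} to $1$ and increments it only when \emph{generate} reports \texttt{space\_exhausted}, that is, only after every hypothesis of the current size bound has been generated and either tested or pruned. Via the multi-shot loop of Section \ref{sec:loop}, the hard size constraint guarantees that all programs with a given literal count are considered before the bound is raised, so \name{} tests hypotheses in non-decreasing order of $size(H)$. I would then argue by contradiction. Since a solution exists and the hypothesis space is finite (Proposition \ref{prop:hspace}), a minimal-size solution $H^{*}$ exists and is, by Definition \ref{def:opthyp}, an optimal solution. Suppose the returned $H$ satisfies $size(H) > size(H^{*})$, and consider the stage where \texttt{num\_literals} equals $size(H^{*})$. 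By Proposition \ref{prop:all_hypotheses} the base encoding can model $H^{*}$, and by Lemma \ref{lemma:prune} none of the constraints accumulated from earlier failures prune $H^{*}$. Hence $H^{*}$ is still generatable at this stage; being a solution it passes the test and is returned at size $size(H^{*}) < size(H)$, before the bound ever reaches $size(H)$. This contradicts the assumption that $H$ was returned, so $size(H) = size(H^{*})$ and $H$ is optimal.

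The step requiring the most care is justifying that $H^{*}$ is genuinely available for generation at the \texttt{num\_literals}${}= size(H^{*})$ stage. This needs two ingredients working together: generation completeness (Proposition \ref{prop:all_hypotheses}, so the base program can represent $H^{*}$) and the prune lemma (Lemma \ref{lemma:prune}, so the constraints learned from the many failed hypotheses of size at most $size(H^{*})$ never eliminate $H^{*}$). Everything else --- existence of an optimal solution, the non-decreasing enumeration order, and the fact that a solution always passes the test stage --- follows from finiteness of the space and the size-bounded structure of the loop.
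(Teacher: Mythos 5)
Your proof is correct and takes essentially the same route as the paper's: both combine Proposition \ref{prop:completeness} (with soundness via Proposition \ref{prop:poppersound}), the increasing-size enumeration of Algorithm \ref{alg:popper}, the generation completeness of Proposition \ref{prop:all_hypotheses}, and Lemma \ref{lemma:prune} to derive a contradiction from the existence of a strictly smaller solution. The only cosmetic difference is that you phrase the contradiction as the smaller solution necessarily being returned at an earlier stage, whereas the paper phrases it as the smaller solution having to be pruned, which Lemma \ref{lemma:prune} forbids.
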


\begin{proof}
By Proposition \ref{prop:completeness}, \name{} returns a solution if one exists.
Let $H$ be the solution returned by \name{}.
Assume, for contradiction, that $H$ is not an optimal solution.
By Definition \ref{def:opthyp}, this assumption implies that either (1) $H$ is not a solution, or (2) $H$ is a non-optimal solution.
Case (1) cannot hold because $H$ is a solution.
Therefore, case (2) must hold, i.e.~there must be at least one smaller solution than $H$.
Let $H'$ be an optimal solution, for which we know $size(H') < size(H)$.
By Proposition \ref{prop:all_hypotheses}, \name{} generates every hypothesis, and \name{} generates hypotheses of increasing size (Algorithm \ref{alg:popper}), therefore the smaller solution $H'$ must have been considered before $H$, which implies that $H'$ must have been pruned by a constraint.
However, Lemma \ref{lemma:prune} shows that $H'$ could not have been pruned and so cannot exist, which contradicts the assumption and completes the proof.
\end{proof}


\section{Experiments}
\label{sec:experiments}

We now evaluate \name{}.
\name{} learns constraints from failed hypotheses to prune the hypothesis space to improve learning performance.
We therefore claim that, compared to unconstrained learning, constraints can improve learning performance.
One may think that this improvement is obvious, i.e. constraints will definitely improve performance.
However, it is unclear whether in practice, and if so by how much, constraints will improve learning performance because \name{} needs to (i) analyse failed hypotheses, (ii) generate constraints from them, and (iii) pass the constraints to the ASP system, which then needs to ground and solve them, which may all have non-trivial computational overheads.
Our experiments therefore aim to answer the question:

\begin{description}
    \item [\textbf{Q1}] Can constraints improve learning performance compared to unconstrained learning?
\end{description}

\noindent
To answer this question, we compare \name{} with and without the constrain stage.
In other words, we compare \name{} against a brute-force generate and test approach.
To do so, we use a version of \name{} with only banish constraints enabled to prevent repeated generation of a failed hypothesis.
We call this system \emph{\alan{}}.


Proposition \ref{prop:hspace} shows that the size of the learning from failures hypothesis space is a function of many parameters, including the number of predicate declarations, the number of unique variables in a clause, and the number of clauses in a hypothesis.
To explore this result, our experiments aim to answer the question:

\begin{description}
    \item [\textbf{Q2}] How well does \name{} scale?
\end{description}

\noindent
To answer this question, we evaluate \name{} when varying (i) the size of the optimal solution, (ii) the number of predicate declarations, (iii) the number of constants in the problem, (iv) the number of unique variables in a clause, (v) the maximum number of literals in a clause, and (vi) the maximum number of clauses allowed in a hypothesis.

We also compare \name{} against existing ILP systems.
Our experiments therefore aim to answer the question:

\begin{description}
    \item [\textbf{Q3}] How well does \name{} perform compared to other ILP systems?
\end{description}

\noindent
To answer this question, we compare \name{} against Aleph \cite{aleph}, Metagol, ILASP2i \cite{law:context}, and ILASP3 \cite{law:thesis}.
It is, however, important to note that a direct comparison of ILP systems is difficult because different systems excel at different problems and often employ different biases.
For instance, directly comparing the Prolog-based Metagol against the ASP-based ILASP is difficult because Metagol is often used to learn recursive list manipulation programs, such as string transformations and sorting algorithms, whereas ILASP does not support explicit lists because the ASP system Clingo \cite{clingo}, on which ILASP is built, disallows explicit lists.
Likewise, Aleph and ILASP3 support noise, whereas Metagol and Popper do not.
Moreover, because ILP systems have many learning parameters, it is often possible to show that there exist some parameter settings for which system X can perform better than system Y on a particular problem.
Overall, a direct comparison between ILP systems is difficult, so a reader should not interpret the results as system X is better than system Y.

\subsection{Buttons}
\label{sec:buttons}

The purpose of this first experiment is to evaluate how well \name{} scales when varying the optimal solution size\footnote{Note that, in this experiment, increasing the optimal solution size almost always also increases the size of the hypothesis space for the considered ILP systems.}.
We therefore need a problem where we can control the optimal solution size.
We consider a problem loosely based on the IGGP game \emph{buttons and lights} \cite{iggp}.
The problem is purposely simple: given $p$ buttons, learn which $n$ buttons need to be pressed to win.
For instance, for $n=3$, a solution could be:

\cpl{win(A):- button6(A),button4(A),button7(A)}

\noindent
The variable \emph{A} denotes the player and \emph{button$_p$} denotes that player \emph{A} pressed \emph{button$_p$}.

In this experiment, we fix $p$, the number of buttons, and vary $n$, the number of buttons that need to be pressed, which directly corresponds to the optimal solution size.

\subsubsection{Materials}
We consider two variations of the problem where $p=20$ and $p=200$, which we name \emph{small} and \emph{big} respectively.
We compare \name{}, \alan{}, Metagol, ILASP2i, ILASP3, and Aleph.
To compare the systems, we try to use settings so that each system searches approximately the same hypothesis space.
However, ensuring that the systems search identical hypothesis spaces is near impossible.
For instance, Metagol performs automatic predicate invention and so considers a different hypothesis space to the other systems.
The exact language biases used are in Appendix \ref{app:buttons}.

\paragraph{ILASP settings.}
We asked Mark Law, the ILASP author, for advice on how best to solve this problem with ILASP2i and ILASP3\footnote{Mark suggested an alternative representation that corresponds to learning the negation of the concept, which would have been much more suitable for ILASP. However, this alternative different representation requires NAF which not all of the other systems support.
}.
We run both ILASP2i and ILASP3 with the same settings so we simply refer to both as ILASP.
We run ILASP with the `no constraints', `no aggregates', `disable implication', `disable propagation', and `simple contexts' flags.
We tell ILASP that each BK relation is positive, which prevents it from generating body literals using negation.
We also make the problem propositional and use context-dependent examples \cite{law:context} where the context-dependent BK for each example contains the buttons pressed in that example.
We initially tried to run ILASP with at most ten body literals (`-ml=10' and `--max-rule-length=11') but when given this parameter ILASP would not terminate in the time limit because it pre-computes every rule in the hypothesis space.
Therefore, for each number of buttons $n$, we set the maximum number of body literals to $n$ (`-ml=n' and `--max-rule-length=n+1'), to ensure that ILASP terminates on some of the problems.

\paragraph{Metagol settings.}
Metagol needs metarules (Section \ref{sec:langbias}) to guide the proof search.
We provide Metagol with the following two metarules:
\[
    \begin{array}{l}
    \tw{P(A):- Q(A).}\\
    \tw{P(A):- Q(A),R(A).}\\
    \end{array}
  \]
\noindent

\paragraph{\name{} and \alan{} settings.}
We set \name{} and \alan{} to use at most 1 unique variable, at most 1 clause, and at most $n$ body literals.
These settings match those imposed by Metagol's metarules and somewhat ILASP's propositional representation.
We restrict the clause to have at most $n$ body literals to match ILASP's settings.
When allowed up to ten body literals, Popper performs almost identically.

\paragraph{Aleph settings.}
We also set the maximum number of nodes to be search to be 5000.
As with \name{}, \alan{}, and ILASP, we increase the maximum clause length for Aleph for each value $n$.

\subsubsection{Methods}

For each $n$ in $\{1,2,\dots,10\}$, we generate $200$ positive and $200$ negative examples.
A positive example is a player that has pressed the correct $n$ buttons.
To generate a positive example we sample without replacement $n$ integers from the set $\{1,\dots,p\}$ which correspond to the $n$ buttons that \emph{must} be pressed.
We additionally sample extra buttons that are also pressed, but which are not necessarily pressed in all the positive examples.
A negative example is a player that has not pressed the correct $n$ buttons.
To generate a negative example we sample without replacement at most $n-1$ buttons from the set that must be pressed.
We then sample other buttons that should not be pressed.
By including all $n$ negative examples with $n-1$ correct buttons we guarantee that there is only one correct solution.
We measure learning time as the time to learn a solution.
We enforce a timeout of one minute per task.
We repeat each experiment ten times and plot the standard error.

\subsubsection{Results}
Figure \ref{fig:buttonsa} shows that \name{} clearly outperforms \alan{} on both datasets.
On the small dataset ($p=20$), \alan{} only learns a program for when three buttons must be pressed ($n=3$).
On the large dataset ($p=200$), \alan{} only learns a program for when one button must be pressed ($n=1$).
By contrast, on both datasets, \name{} learns a program for when ten buttons must be pressed ($n=10$), i.e.~a program with ten body literals.
Moreover, \name{} always learns a solution comfortably within the time limit.
This result strongly suggests that the answer to \textbf{Q1} is yes, constraints can drastically improve learning performance.

\begin{figure}[ht]
\centering
\begin{tabular} {cc}
\begin{tikzpicture}[scale=0.7]
    \begin{axis}[
    xlabel=Number of pressed buttons ($n$),
    ylabel=Learning time (seconds),
    xmin=1,xmax=10,
    ymin=0,ymax=62,
    ylabel style={yshift=-6mm},
    xtick = {1,2,3,4,5,6,7,8,9,10},
    legend style={at={(0.97,0.92)},style={font=\footnotesize,nodes={right}}}
    ]

\addplot+[blue,mark=*,mark options={fill=blue},error bars/.cd,y dir=both,y explicit]
table [
x=size,
y=time,
y error plus expr=\thisrow{error},
y error minus expr=\thisrow{error},
] {data/buttonsa/popper.table};

\addplot+[gray,mark=none,dashed,error bars/.cd,y dir=both,y explicit]
table [
x=size,
y=time,
y error plus expr=\thisrow{error},
y error minus expr=\thisrow{error},
] {data/buttonsa/unconstrained.table};

\addplot+[red,mark=square*,mark options={color=red},error bars/.cd,y dir=both,y explicit]
table [
x=size,
y=time,
y error plus expr=\thisrow{error},
y error minus expr=\thisrow{error},
] {data/buttonsa/metagol.table};

\addplot+[green,mark=triangle*,mark options={color=green},error bars/.cd,y dir=both,y explicit]
table [
x=size,
y=time,
y error plus expr=\thisrow{error},
y error minus expr=\thisrow{error},
] {data/buttonsa/ilasp2.table};

\addplot+[brown,mark=x,mark options={color=brown},error bars/.cd,y dir=both,y explicit]
table [
x=size,
y=time,
y error plus expr=\thisrow{error},
y error minus expr=\thisrow{error},
] {data/buttonsa/ilasp3.table};

\addplot+[black,mark=diamond*,mark options={color=black},error bars/.cd,y dir=both,y explicit]
table [
x=size,
y=time,
y error plus expr=\thisrow{error},
y error minus expr=\thisrow{error},
] {data/buttonsa/aleph.table};

    \legend{\name{},\alan{},Metagol,ILASP2i,ILASP3,Aleph}
    \end{axis}
  \end{tikzpicture}
&
\begin{tikzpicture}[scale=0.7]
    \begin{axis}[
    xlabel=Number of pressed buttons ($n$),
    ylabel=Learning time (seconds),
    xmin=1,xmax=10,
    ymin=0,ymax=62,
    xtick = {1,2,3,4,5,6,7,8,9,10},
    ylabel style={yshift=-6mm},
    legend style={at={(0.97,0.39)},style={font=\footnotesize,font=\footnotesize,nodes={right}}}
    ]

\addplot+[blue,mark=*,mark options={fill=blue},error bars/.cd,y dir=both,y explicit]
table [
x=size,
y=time,
y error plus expr=\thisrow{error},
y error minus expr=\thisrow{error},
] {data/buttonsb/popper.table};

\addplot+[gray,mark=none,dashed,error bars/.cd,y dir=both,y explicit]
table [
x=size,
y=time,
y error plus expr=\thisrow{error},
y error minus expr=\thisrow{error},
] {data/buttonsb/unconstrained.table};

\addplot+[red,mark=square*,mark options={color=red},error bars/.cd,y dir=both,y explicit]
table [
x=size,
y=time,
y error plus expr=\thisrow{error},
y error minus expr=\thisrow{error},
] {data/buttonsb/metagol.table};

\addplot+[green,mark=triangle*,mark options={color=green},error bars/.cd,y dir=both,y explicit]
table [
x=size,
y=time,
y error plus expr=\thisrow{error},
y error minus expr=\thisrow{error},
] {data/buttonsb/ilasp2.table};

\addplot+[brown,mark=x,mark options={color=brown},error bars/.cd,y dir=both,y explicit]
table [
x=size,
y=time,
y error plus expr=\thisrow{error},
y error minus expr=\thisrow{error},
] {data/buttonsb/ilasp3.table};

\addplot+[black,mark=diamond*,mark options={color=black},error bars/.cd,y dir=both,y explicit]
table [
x=size,
y=time,
y error plus expr=\thisrow{error},
y error minus expr=\thisrow{error},
] {data/buttonsb/aleph.table};

    \end{axis}
  \end{tikzpicture} \\
Small BK (p=20) & Big BK (p=200)
\end{tabular}
\caption{
    Buttons experiment.
}
\label{fig:buttonsa}
\end{figure}

\name{} outperforms Metagol on both datasets.
For the small dataset, the largest program that Metagol learns is for when $n=4$, which takes 50 seconds to learn, compared to one second for \name{}.
For the big dataset, the largest program that Metagol learns is for when $n=3$, which takes 57 seconds to learn, compared to eight seconds for \name{}.
Metagol struggles because of its inefficient search.
Metagol performs iterative deepening over the number of clauses allowed in a solution \cite{mugg:metagold}.
However, if a clause or literal fails during the search, Metagol does not remember this failure, and will retry already failed clauses and literals at each depth (and even multiple times as the same depth).
By contrast, if a clause fails, \name{} learns constraints from the failure so it never tries that clause (or its specialisations) again.

\name{} outperforms ILASP2i and ILASP3 on both datasets.
ILASP2i only learns programs with four (small dataset) and one (big dataset) body literals.
ILASP3 only learns programs with four (small dataset) and one (big dataset) body literals.
ILASP2i and ILASP3 both struggle on this problem because they pre-compute every clause in the hypothesis space, which means that they struggle to learn clauses with many body literals.
By contrast, \name{} can learn programs with ten body literals on both datasets.

Aleph outperforms \name{} on the small dataset when $n>8$.
However, on the big dataset, \name{} outperforms Aleph when $n>3$.

Overall, the results from this experiment suggest that (i) the answer to question \textbf{Q1} is certainly yes, constraints improve learning performance, (ii) the answer to \textbf{Q2} is that \name{} scales well in terms of the number of body literals in a solution and the number of background relations, and (iii) the answer to \textbf{Q3} is that \name{} can outperform other ILP systems when varying the optimal solution size and the number of background relations.
\subsection{Robots}
\label{sec:robots}

The purpose of this second experiment is to evaluate how well \name{} scales with respect to the domain size (i.e.~the constant signature).
We therefore need a problem where we can control the domain size.
We consider a robot strategy learning problem \cite{metagolo}.
There is a robot in a $n \times n$ grid world.
Given an arbitrary start position, the goal is to learn a general strategy to move the robot to the topmost row in the grid.
For instance, for a $10 \times 10$ world and the start position $(2,2)$, the goal is to move to position $(2,10)$.
The domain contains all possible robot positions.
We therefore vary the domain size by varying $n$, the size of the world.
The optimal solution is a recursive strategy for \emph{keep moving upwards until you are at the top row}.
To reiterate, we purposely fix the optimal solution so that the only variable in the experiment is the domain size (i.e.~the grid world size), which we progressively increase to evaluate how well the systems scale.


\subsubsection{Materials}
We consider two representations: a representation for \name{}, \alan{}, Metagol, and Aleph, and then a representation designed to help ILASP solve the problem.
When given the Prolog representation, neither ILASP2i nor ILASP3 could solve any of the problems because of the grounding problem.
In both representations, we provide as BK four dyadic relations, \emph{move\_right}, \emph{move\_left}, \emph{move\_up}, and \emph{move\_down}, that change the state, e.g. \emph{move\_right((2,2),(3,2))}, and four monadic relations, \emph{at\_top}, \emph{at\_bottom}, \emph{at\_left}, and \emph{at\_right}, that check the state.
The exact language biases used can be found in Appendix \ref{app:robots}.

\paragraph{Prolog representation.}
In the Prolog representation, an example is an atom of the form $f(s_1,s_2)$, where $s_1$ and $s_2$ represent start and end states.
A state is a pair of discrete coordinates $(x,y)$ denoting the column ($x$) and row ($y$) position of the robot.

\paragraph{ILASP representation.}
When given the Prolog representation, neither ILASP2i nor ILASP3 could solve any of the problems in this experiment because of the grounding problem.
We therefore asked Mark Law to help us design a more suitable representation.
In this representation, an example is an atom of the form $f(s_2)$ where $s_2$ represents the end state.
Each example is a distinct ILASP example (a partial interpretation) with its own \emph{context}, where the start state is given in the context as \emph{start\_state($s_1$)}.
This representation alleviates the grounding problem of the Prolog representation.

\paragraph{ILASP2i and ILASP3 settings.}
We run both ILASP2i and ILASP3 with the same settings, so we again refer to both as ILASP.
We run ILASP with the `no constraints', `no aggregates', 'disable implication', 'disable propagation', and 'simple contexts' flags.
We tell ILASP that each BK relation is \emph{positive}, \emph{anti\_reflexive}, and \emph{symmetric}.
We also employ a set of `bias constraints' to reduce the hypothesis space.
We also restrict some of the recall values for the BK relations.
We set ILASP to use at most four unique variables and at most three body literals (`-ml=3' and `--max-rule-length=4').
The full language bias restrictions can be found in the appendix \ref{app:robots}.

\paragraph{Metagol settings.}
We provide Metagol with the metarules in Figure \ref{fig:metarules}.
These metarules constitute an almost\footnote{
   It is impossible to generate a finite and complete set of metarules for a singleton-free fragment of monadic and dyadic Datalog \cite{crop:reduce}.
} complete set of metarules for a singleton-free fragment of monadic and dyadic Datalog \cite{crop:reduce}.

\begin{figure}[ht]
\centering
\begin{minipage}{.3\linewidth}%
\begin{lstlisting}[frame=single]
P(A):-Q(A).
P(A):-Q(A),R(A).
P(A):-Q(A,B),R(B).
P(A):-Q(A,B),P(B).
P(A):-Q(A,B),R(A,B).
\end{lstlisting}
\end{minipage}%
\hspace{2ex}
\begin{minipage}{.3\linewidth}%
\begin{lstlisting}[frame=single]
P(A,B):-Q(B,A).
P(A,B):-Q(A,B),R(A,B).
P(A,B):-Q(A),R(A,B).
P(A,B):-Q(A,B),R(B).
P(A,B):-Q(A,C),R(C,B).
P(A,B):-Q(A,C),P(C,B).
\end{lstlisting}
\end{minipage}%
\caption{
The metarules used by Metagol in the robot and list transformation experiments.
}
\label{fig:metarules}
\end{figure}

\paragraph{Popper settings.}
We allow \name{} and \alan{} to use at most four unique variables per clause and at most three body literals (which match the ILASP settings), and at most three clauses.

\paragraph{Aleph settings.}
We set the maximum variable depth and clause length to six and set the maximum number of search nodes to 30000.

\subsubsection{Methods}
We run the experiment with an $n \times n$ grid world for each $n$ in $\{10,20,\dots,100\}$.
To generate examples, for start states, we uniformly sample positions that are not at the top of the world.
For the positive examples, the end state is the topmost position, e.g.~$(x,n)$ where $n$ is the grid size.
For negative examples, we randomly sample start and end states and reject the example if it is a positive example.
To ensure that there are some negative examples with the topmost position, in 25\% of the examples we set the end position to be the topmost row of column y, but ensure that the start position is not y.
We sample with replacement 20 positive and 20 negative training examples, and 1000 positive and 1000 negative testing examples.
The default predictive accuracy is therefore 50\%.
We measure predictive accuracies and learning times.
We enforce a timeout of one minute per task.
If a system fails to learn a solution in the given time then it only achieves default predictive accuracy (50\%).
We repeat each experiment ten times and plot the standard error.

\subsubsection{Results}
Figure \ref{fig:robots} shows the results.
\name{} achieves the best predictive accuracy out of all the systems.
\alan{} is the second best performing system, although it is does not always learn the optimal solution.
\name{} is substantially quicker than \alan{} (on average about 40 times quicker) and is the fastest of all the systems.
The learning time of \name{} slightly decreases as the grid size grows.
The reason for this is twofold.
First, when the grid world is small, there are often many small programs that cover some of the positive examples but none of the negative examples, such as:

\cpl{f(S1,S2):- move\_up(S1,S3),move\_up(S3,S2).}

\noindent
Because they cover some of the examples, Popper cannot completely rule them out.
However, as the grid size grows, these smaller programs are less likely to cover the examples because the examples are more spread out over the grid.
Second, solutions have either five or six literals,
with smaller solutions becoming more likely with increasing world size.
These reasons explain why the predictive accuracy of \alan{} improves as the grid size grows.
The reason that the learning time of \name{} does not increase is that the domain size has no influence on the size of the learning from failures hypothesis space (Proposition \ref{prop:hspace}).
The only influence the grid size has is any overhead in executing the induced Prolog program on larger grids.
This result suggests that \name{} can scale well with respect to the domain size.

\begin{figure}[ht]
\centering
\begin{tabular} {cc}
\begin{tikzpicture}[scale=.7]
   \begin{axis}[
   xlabel=Robot world size,
   ylabel=Predictive accuracy (\%),
   xmin=10,xmax=100,
   ymin=49,ymax=100,
   xtick = {10,20,30,40,50,60,70,80,90,100},
   ylabel style={yshift=-6mm},
   legend style={at={(0.98,0.93)},style={font=\footnotesize,nodes={right}}}
   ]

\addplot+[blue,mark=*,mark options={fill=blue},error bars/.cd,y dir=both,y explicit]
table [
x=size,
y=acc,
y error plus expr=\thisrow{error},
y error minus expr=\thisrow{error},
] {data/robots/popper-prolog-acc.table};

\addplot+[gray,mark=none,dashed,error bars/.cd,y dir=both,y explicit]
table [
x=size,
y=acc,
y error plus expr=\thisrow{error},
y error minus expr=\thisrow{error},
] {data/robots/unconstrained-prolog-acc.table};

\addplot+[red,mark=square*,mark options={color=red},error bars/.cd,y dir=both,y explicit]
table [
x=size,
y=acc,
y error plus expr=\thisrow{error},
y error minus expr=\thisrow{error},
] {data/robots/metagol-acc.table};

\addplot+[green,mark=triangle*,mark options={color=green},error bars/.cd,y dir=both,y explicit]
table [
x=size,
y=acc,
y error plus expr=\thisrow{error},
y error minus expr=\thisrow{error},
] {data/robots/ilasp2-acc.table};

\addplot+[brown,mark=x,mark options={color=brown},error bars/.cd,y dir=both,y explicit]
table [
x=size,
y=acc,
y error plus expr=\thisrow{error},
y error minus expr=\thisrow{error},
] {data/robots/ilasp3-acc.table};

\addplot+[black,mark=diamond*,mark options={color=black},error bars/.cd,y dir=both,y explicit]
table [
x=size,
y=acc,
y error plus expr=\thisrow{error},
y error minus expr=\thisrow{error},
] {data/robots/aleph-acc.table};

   \legend{\name{},\alan{},Metagol,ILASP2i,ILASP3,Aleph}
   \end{axis}
 \end{tikzpicture} &
\begin{tikzpicture}[scale=.7]
   \begin{axis}[
   xlabel=Robot world size,
   ylabel=Learning time (seconds),
   xmin=10,xmax=100,
   ymin=0,ymax=62,
   xtick = {10,20,30,40,50,60,70,80,90,100},
   ylabel style={yshift=-6mm},
   legend style={legend pos=north west,style={font=\footnotesize,nodes={right}}}
   ]

\addplot+[blue,mark=*,mark options={fill=blue},error bars/.cd,y dir=both,y explicit]
table [
x=size,
y=time,
y error plus expr=\thisrow{error},
y error minus expr=\thisrow{error},
] {data/robots/popper-prolog-time.table};

\addplot+[gray,mark=none,dashed,error bars/.cd,y dir=both,y explicit]
table [
x=size,
y=time,
y error plus expr=\thisrow{error},
y error minus expr=\thisrow{error},
] {data/robots/unconstrained-prolog-time.table};

\addplot+[red,mark=square*,mark options={color=red},error bars/.cd,y dir=both,y explicit]
table [
x=size,
y=time,
y error plus expr=\thisrow{error},
y error minus expr=\thisrow{error},
] {data/robots/metagol-time.table};

\addplot+[green,mark=triangle*,mark options={color=green},error bars/.cd,y dir=both,y explicit]
table [
x=size,
y=time,
y error plus expr=\thisrow{error},
y error minus expr=\thisrow{error},
] {data/robots/ilasp2-time.table};

\addplot+[brown,mark=x,mark options={color=brown},error bars/.cd,y dir=both,y explicit]
table [
x=size,
y=time,
y error plus expr=\thisrow{error},
y error minus expr=\thisrow{error},
] {data/robots/ilasp3-time.table};

\addplot+[black,mark=diamond*,mark options={color=black},error bars/.cd,y dir=both,y explicit]
table [
x=size,
y=time,
y error plus expr=\thisrow{error},
y error minus expr=\thisrow{error},
] {data/robots/aleph-time.table};

   \end{axis}
 \end{tikzpicture} \\
(a) Predictive accuracies & (b) Learning times
\end{tabular}
\caption{
Robots experimental results when varying the world size, which corresponds to the domain size.
}
\label{fig:robots}
\end{figure}

\name{} outperforms Metagol in all cases.
For a small $10x10$ grid world, Metagol learns the optimal solution and does so quicker than \name{} (Metagol takes 1 second compared to \name{} which takes 9 seconds).
However, as the grid size grows, Metagol's performance quickly degrades.
For a grid size greater than 20, Metagol almost always times out before finding a solution.
The reason is that Metagol searches for a hypothesis by inducing and executing partial programs over the examples.
In other words, Metagol uses the examples to guide the hypothesis search.
As the grid size grows, there are more partial programs to construct, so its performance suffers.
Note that when Metagol learns a solution, it is always an accurate solution.

\name{} outperforms ILASP2i and ILASP3 both in terms of predictive accuracies and learning times.
ILASP3 cannot learn any solutions in the given time, even for the $10x10$ world.
ILASP2i initially learns solutions in the given time limit, but struggles as the grid size grows.
Note that when ILASP2i learns a solution, it is always an accurate solution.

ILASP2i outperforms ILASP3 because once ILASP2i finds a solution it terminates.
By contrast, ILASP3 finds one hypothesis schema that guarantees coverage of the example (which, in this special case, also implies finding a solution), then carries on to find alternative hypothesis schemas.
The extra work done by ILASP3 is needed when learning general ASP programs, but in this special case (where there no ILASP negative examples) it is unnecessary and computationally expensive.
We refer the reader to Law's thesis \cite{law:thesis} for a detailed comparison of ILASP2i and ILASP3\footnote{We thank Mark Law for this explanation.
}.

\name{} outperforms Aleph.
For small grid worlds, Aleph sometimes learns programs that generalise to the training set (such as move up three times).
But as the grid size grows, Aleph struggles because it struggles to learn recursive programs.


Overall, the results from this experiment suggest that (i) the answer to question \textbf{Q1} is certainly yes, constraints improve learning performance, (ii) the answer to \textbf{Q2} is that \name{} scales well in terms of the domain size, and (iii) the answer to \textbf{Q3} is that \name{} can outperform other ILP systems when varying the domain size.
\subsection{List transformation problem}
\label{sec:lists}

The purpose of this third experiment is to evaluate how well \name{} performs on difficult (mostly recursive) list transformation problems.
Learning recursive programs has long been considered a difficult problem in ILP \cite{ilp20} and most ILP and program synthesis systems cannot learn recursive programs.
Because ILASP2i and ILASP3 do not support lists, we only compare \name{}, \alan{}, Metagol, and Aleph.

\subsubsection{Materials}

We evaluate the systems on the ten list transformation tasks shown in Table \ref{tab:listprobs}.
These tasks include a mix of monadic (e.g.~\tw{evens} and \tw{sorted}), dyadic (e.g.~\tw{droplast} and \tw{finddup}), and triadic (\tw{dropk}) target predicates.
The tasks also contain a mix of functional (e.g.~\tw{last} and \tw{len}) and relational problems (e.g.~\tw{finddup} and \tw{member}).
These tasks are extremely difficult for ILP systems.
To learn solutions that generalise, an ILP system needs to support recursion and large domains.
As far as we are aware, no existing ILP system can learn optimal solutions for all of these tasks without being provided with a strong inductive bias\footnote{
As mentioned in Section \ref{sec:recursion}, some inverse entailment methods \cite{progol} might sometimes learn solutions for them.
However, these approaches need an example to learn the base case of a recursive program and then an example to learn the inductive case.
Moreover, these approaches would not be guaranteed to learn the optimal solution.
Metagol could possibly learn solutions for them if given the exact metarules needed, but that requires that you know the solution before you try to learn it.
}.

We give each system the following dyadic relations \emph{head}, \emph{tail}, \emph{decrement}, \emph{geq} and the monadic relations \emph{empty}, \emph{zero}, \emph{one}, \emph{even}, and \emph{odd}.
We also include the dyadic relation \emph{increment} in the \tw{len} experiment.
We had to remove this relation from the BK for the other experiments because when given this relation Metagol runs into infinite recursion\footnote{
    Because Metagol induces hypotheses by partially constructing and evaluating hypotheses, it is very difficult to impose a timeout on a particular hypothesis, which we can easily do with \name{}.
} on almost every problem and could not find any solutions.
We also include \emph{member/2} in the find duplicate problem.
We also include \emph{cons/3} in the \tw{addhead}, \tw{dropk}, and \tw{droplast} experiments.
We exclude this relation from the other experiments because Metagol does not easily support triadic relations.
The exact language biases used can be found in Appendix \ref{app:lists}.

\paragraph{Metagol settings.}
For Metagol, we use almost the same metarules as in the previous robot experiment (Figure \ref{fig:metarules}).
However, when given the \emph{inverse} metarule $P(A,B) \leftarrow Q(B,A)$, Metagol could not learn any solution, again because of infinite recursion.
To aid Metagol, we therefore replace the \emph{inverse} metarule with the \emph{identity} metarule, i.e.~$P(A,B) \leftarrow Q(A,B)$.
In addition, when we first ran the experiment with randomly ordered examples, we found that Metagol struggled to find solutions for all the problems (except \tw{member}).
The reason is that Metagol is sensitive to the order of examples because it uses the examples in the order they are given to induce a hypothesis.
Therefore, to aid Metagol, we provide the examples in increasing size (i.e.~the length of the input lists).

\paragraph{\name{} and \alan{} settings.}
We set \name{} and \alan{} to use at most five unique variables, at most five body literals, and at most two clauses.
In Section \ref{sec:sensitivity}, we evaluate how sensitive \name{} is to these parameters.
For each BK relation, we also provide both systems with simple types and argument directions (whether input or output).
Because \name{} and \alan{} can generate non-terminating Prolog programs, we set both systems to use a testing timeout of 0.1 seconds per example.
If a program times out, we view it as a failure.

\paragraph{Aleph settings.}
We give Aleph identical mode declarations and determinations to \name{} and \alan{}.
We set the maximum variable depth and clause length to six and set the maximum number of search nodes to 30000.

\begin{table}[ht]
\centering
\footnotesize
\lstset{basicstyle=\footnotesize}
\begin{tabular}{l|l|l}
\toprule
\textbf{Name} & \textbf{Description} & \textbf{Example solution}\\
\midrule
\tw{addhead} & Prepend the head three times&
\begin{lstlisting}
addhead(A,B):-head(A,C),cons(C,A,D),cons(C,D,E),cons(C,E,B).
\end{lstlisting}\\
\midrule
\tw{dropk} & Drop the first k elements&
\begin{lstlisting}
dropk(A,B,C):-one(B),tail(A,C).
dropk(A,B,C):-tail(A,D),decrement(B,E),dropk(D,E,C).
\end{lstlisting}\\
\midrule
\tw{droplast} & Drop the last element&
\begin{lstlisting}
droplast(A,B):-tail(A,B),empty(B).
droplast(A,B):-tail(A,C),droplast(C,D),head(A,E),cons(E,D,B).
\end{lstlisting}\\
\midrule
\tw{evens} & Check all elements are even&
\begin{lstlisting}
evens(A):-empty(A).
evens(A):-head(A,B),even(B),tail(A,C),evens(C).
\end{lstlisting}\\
\midrule
\tw{finddup} & Find duplicate elements&
\begin{lstlisting}
finddup(A,B):-head(A,B),tail(A,C),member(B,C).
finddup(A,B):-tail(A,C),finddup(C,B).
\end{lstlisting}\\
\midrule
\tw{last} & Last element&
\begin{lstlisting}
last(A,B):-tail(A,C),empty(C),head(A,B).
last(A,B):-tail(A,C),last(C,B).
\end{lstlisting}\\
\midrule
\tw{len} & Calculate list length&
\begin{lstlisting}
len(A,B):-empty(A),zero(B).
len(A,B):-tail(A,C),len(C,D),succ(D,B).
\end{lstlisting}\\
\midrule
\tw{member} & Member of a list&
\begin{lstlisting}
member(A,B):-head(A,B).
member(A,B):-tail(A,C),member(C,B).
\end{lstlisting}\\
\midrule
\tw{sorted} & Check list is sorted&
\begin{lstlisting}
sorted(A):-tail(A,B),empty(B).
sorted(A):-head(A,B),tail(A,C),head(C,D),geq(D,B),sorted(C).
\end{lstlisting}\\
\midrule
\tw{threesame} & First three elements are identical&
\begin{lstlisting}
threesame(A):-head(A,B),tail(A,C),head(C,B),tail(C,D),head(D,B).
\end{lstlisting}\\
\bottomrule
\end{tabular}
\caption{Example solutions for the list transformation problems.}
\label{tab:listprobs}
\end{table}









\subsubsection{Methods}
For each problem, we generate 10 positive and 10 negative training examples, and 1000 positive and 1000 negative testing examples.
The default predictive accuracy is therefore 50\%.
Each list is randomly generated and has a maximum length of 50.
We sample the list elements uniformly at random from the set $\{1,2,\dots,100\}$.
We measure the predictive accuracy and learning times.
We enforce a timeout of five minutes per task.
We repeat each experiment 10 times and plot the standard error.

\subsubsection{Results}
Table \ref{tab:listaccs} shows that \name{} equals or outperforms \alan{} on all the tasks in terms of predictive accuracies.
When a system has 50\% accuracy, it means that the system has failed to learn a program in the given amount of time, and so achieves the default accuracy.
Table \ref{tab:listtimes} shows that \name{} substantially outperforms \alan{} in terms of learning times.
For instance, whereas it takes \alan{} 159 seconds to find an \tw{evens} program, it takes \name{} only four seconds.
Table \ref{tab:popanalysis} decomposes the learning times of \name{}.

Table \ref{tab:listaccs} shows that \name{} equals or outperforms Metagol on all the tasks in terms of predictive accuracies, except the \tw{finddup} problem, where Metagol has a 2\% higher predictive accuracy.
Table \ref{tab:listaccs} also shows that Aleph struggles to learn solutions to these problems.
The exceptions are \tw{addhead} and \tw{threesame}, which do not need recursion.



Overall, the results from this experiment suggest that (i) the answer to question \textbf{Q1} is again yes, constraints improve learning performance, and (ii) \name{} can outperform other ILP systems when learning complex and recursive list transformation programs.

\begin{table}[ht]
\centering
\begin{tabular}{l|c|c|c|c}
\toprule
\textbf{Name} & \textbf{\name{}} & \textbf{\alan{}} & \textbf{Metagol} & \textbf{Aleph}\\
\midrule
\tw{addhead} & \textbf{100} $\pm$ 0 & \textbf{100} $\pm$ 0 & n/a & 90 $\pm$ 10 \\
\tw{dropk} & \textbf{100} $\pm$ 0 & 50 $\pm$ 0 & n/a & 50 $\pm$ 0 \\
\tw{droplast} & \textbf{100} $\pm$ 0 & 50 $\pm$ 0 & n/a & 50 $\pm$ 0 \\
\tw{evens} & \textbf{100} $\pm$ 0 & \textbf{100} $\pm$ 0 & 50 $\pm$ 0 & 50 $\pm$ 0 \\
\tw{finddup} & 98 $\pm$ 0 & 50 $\pm$ 0 & \textbf{100} $\pm$ 0 & 50 $\pm$ 0 \\
\tw{last} & \textbf{100} $\pm$ 0 & 50 $\pm$ 0 & \textbf{100} $\pm$ 0 & 50 $\pm$ 0 \\
\tw{len} & \textbf{100} $\pm$ 0 & 50 $\pm$ 0 & 50 $\pm$ 0 & 50 $\pm$ 0 \\
\tw{member} & \textbf{100} $\pm$ 0 & \textbf{100} $\pm$ 0 & \textbf{100} $\pm$ 0 & 50 $\pm$ 0 \\
\tw{sorted} & \textbf{100} $\pm$ 0 & 50 $\pm$ 0 & 50 $\pm$ 0 & 68 $\pm$ 2 \\
\tw{threesame} & \textbf{99} $\pm$ 0 & \textbf{99} $\pm$ 0 & \textbf{99} $\pm$ 0 & \textbf{99} $\pm$ 0 \\
\bottomrule
\end{tabular}
\caption{
List transformation predictive accuracies.
We round accuracies to integer values.
The error is standard error.
}
\label{tab:listaccs}
\end{table}

\begin{table}[ht]
\centering
\begin{tabular}{l|c|c|c|c}
\toprule
\textbf{Name} & \textbf{\name{}} & \textbf{\alan{}} & \textbf{Metagol} & \textbf{Aleph}\\
\midrule
\tw{addhead} & \textbf{0.5} $\pm$ 0 & 2 $\pm$ 0 & n/a & 103 $\pm$ 49 \\
\tw{dropk} & \textbf{0.8} $\pm$ 0 & 300 $\pm$ 0 & n/a & 3 $\pm$ 0.2 \\
\tw{droplast} & \textbf{3} $\pm$ 0.1 & 300 $\pm$ 0 & n/a & 300 $\pm$ 0 \\
\tw{evens} & 4 $\pm$ 0.1 & 159 $\pm$ 0.1 & 300 $\pm$ 0 & \textbf{1} $\pm$ 0 \\
\tw{finddup} & 36 $\pm$ 2 & 300 $\pm$ 0 & 2 $\pm$ 0.5 & \textbf{1} $\pm$ 0.1 \\
\tw{last} & 2 $\pm$ 0.1 & 300 $\pm$ 0 & \textbf{0.7} $\pm$ 0.2 & 1 $\pm$ 0.1 \\
\tw{len} & 12 $\pm$ 0.3 & 300 $\pm$ 0 & 300 $\pm$ 0 & \textbf{1} $\pm$ 0 \\
\tw{member} & 0.4 $\pm$ 0.1 & 7 $\pm$ 0 & \textbf{0.3} $\pm$ 0 & 0.9 $\pm$ 0.1 \\
\tw{sorted} & 23 $\pm$ 1 & 300 $\pm$ 0 & 300 $\pm$ 0 & \textbf{0.8} $\pm$ 0 \\
\tw{threesame} & \textbf{0.2} $\pm$ 0.1 & 0.4 $\pm$ 0.2 & 0.9 $\pm$ 0.3 & 0.5 $\pm$ 0 \\
\bottomrule
\end{tabular}
\caption{
List transformation learning times.
We round times over 1 second to the nearest second.
The error is standard error.
Note that although Aleph is sometimes faster than \name{}, it only learns accurate solutions for \tw{addhead} and \tw{threesame}.
}
\label{tab:listtimes}
\end{table}

\begin{table}[ht]
\centering
\begin{tabular}{l|c|c|c}
\toprule
\textbf{Name} & \textbf{Time} & \textbf{Grounding} & \textbf{Solving}\\
\midrule
\tw{addhead} & 0.5 $\pm$ 0 & 0.1 $\pm$ 0 & 0.2 $\pm$ 0 \\
\tw{dropk} & 0.8 $\pm$ 0 & 0.3 $\pm$ 0 & 0.1 $\pm$ 0 \\
\tw{droplast} & 3 $\pm$ 0.1 & 0.4 $\pm$ 0.1 & 1 $\pm$ 0 \\
\tw{evens} & 4 $\pm$ 0.1 & 1 $\pm$ 0 & 1 $\pm$ 0.1 \\
\tw{finddup} & 36 $\pm$ 2 & 25 $\pm$ 1 & 7 $\pm$ 0.5 \\
\tw{last} & 2 $\pm$ 0.1 & 1 $\pm$ 0 & 0.5 $\pm$ 0 \\
\tw{len} & 12 $\pm$ 0.3 & 7 $\pm$ 0.2 & 2 $\pm$ 0.1 \\
\tw{member} & 0.4 $\pm$ 0.1 & 0.1 $\pm$ 0 & 0.1 $\pm$ 0 \\
\tw{sorted} & 23 $\pm$ 1 & 12 $\pm$ 0.9 & 8 $\pm$ 0.6 \\
\tw{threesame} & 0.2 $\pm$ 0.1 & 0 $\pm$ 0 & 0 $\pm$ 0 \\
\bottomrule
\end{tabular}
\caption{
Decomposition of \name{} learning times.
The unaccounted time (time not grounding or solving) is mostly the overhead of testing the induced Prolog programs.
}
\label{tab:popanalysis}
\end{table}


\subsection{Scalability}
\label{sec:scalability}

Our buttons experiment (Experiment \ref{sec:buttons}) showed that \name{} scales well in the size of the optimal solution size and the number of background relations.
Our robot experiment (Experiment \ref{sec:robots}) showed that \name{} scales well in the size of the domain.
The purpose of this experiment is to evaluate how well \name{} scales in terms of the (i) number of examples and (ii) the size of examples.
To do so, we repeat the \tw{last} experiment from Section \ref{sec:lists}, where \name{} and Metagol achieved similar performance.

\subsubsection{Materials}
We use the same materials as Section \ref{sec:lists}.

\subsubsection{Settings}
We run two experiments.
In the first experiment we vary the number of examples.
In the second experiment we vary the size of the examples (the size of the input list).
For each experiment, we measure the predictive accuracy and learning times averaged over 10 repetitions.

\paragraph{Number of examples.}
For each n in $\{1000,2000,\dots,10000\}$, we generate $n$ positive and $n$ negative training examples, and 1000 positive and 1000 negative testing examples and each element is a random integer from the range 1 to 1000.

\paragraph{Example size.}
For each $s$ in $\{50,100,150,\dots,500\}$, we generate 10 positive and 10 negative training examples, and 1000 positive and 1000 negative testing examples, where each list is of length $s$ and each element is a random integer from the range 1 to 1000.

\subsubsection{Results}
Figure \ref{fig:lists-numexs} shows the results when varying the number of training examples.
The predictive accuracies of \name{} and Metagol are almost identical until around 10,000 examples.
Given this many examples, Metagol struggles to find a solution in one minute and eventually converges on the default predictive accuracy (50\%).
By contrast, \name{} does not struggle to find a solution, even given 20,000 examples.
Figure \ref{fig:lists-numexs} shows the learning times of both systems.
The learning time of \name{} increases linearly simply because of the overhead of testing hypotheses on more examples.
The results from this experiment suggest that the answer to \textbf{Q2} is that \name{} scales well with respect the number of examples.

\begin{figure}[ht]
\centering
\begin{tabular} {cc}
\pgfplotsset{scaled x ticks=false}
\begin{tikzpicture}[scale=.7]
    \begin{axis}[
    xlabel=Number of examples,
    ylabel=Predictive accuracy (\%),
    xmin=2000,xmax=20000,
    ymin=48,ymax=101,
    ylabel style={yshift=-6mm},
    legend style={legend pos=south west,style={font=\footnotesize,nodes={right}}}
    ]

\addplot+[blue,mark=*,mark options={fill=blue},error bars/.cd,y dir=both,y explicit]
table [
x=size,
y=acc,
y error plus expr=\thisrow{error},
y error minus expr=\thisrow{error},
] {data/lists-numexs/popper-acc.table};

\addplot+[gray,mark=square*,mark options={color=gray},error bars/.cd,y dir=both,y explicit]
table [
x=size,
y=acc,
y error plus expr=\thisrow{error},
y error minus expr=\thisrow{error},
] {data/lists-numexs/metagol-acc.table};

    \legend{\name{},Metagol}
    \end{axis}
  \end{tikzpicture} &
\pgfplotsset{scaled x ticks=false}
\begin{tikzpicture}[scale=.7]
    \begin{axis}[
    xlabel=Number of examples,
    ylabel=Learning time (seconds),
    xmin=2000,xmax=20000,
    ymin=0,ymax=61,
    ylabel style={yshift=-5mm},
    legend style={legend pos=north west,style={font=\footnotesize,nodes={right}}}
    ]

\addplot+[blue,mark=*,mark options={fill=blue},error bars/.cd,y dir=both,y explicit]
table [
x=size,
y=time,
y error plus expr=\thisrow{error},
y error minus expr=\thisrow{error},
] {data/lists-numexs/popper-time.table};

\addplot+[gray,mark=square*,mark options={color=gray},error bars/.cd,y dir=both,y explicit]
table [
x=size,
y=time,
y error plus expr=\thisrow{error},
y error minus expr=\thisrow{error},
] {data/lists-numexs/metagol-time.table};
    \legend{\name{},Metagol}
    \end{axis}
  \end{tikzpicture} \\
(a) Predictive accuracies & (b) Learning times
\end{tabular}
\caption{The experimental results for the \tw{last} task when varying the number of training examples.}
\label{fig:lists-numexs}
\end{figure}

Figure \ref{fig:lists-size} shows the results when varying the size of the input (i.e. the size of the input list).
\name{} outperforms Metagol in all cases.
The mean learning times of \name{} for examples of length 50 and 500 are both less than a second.
The reason is that \name{} only uses the examples to test a hypothesis, so any increase in running time simply comes from executing the hypotheses using Prolog.
By contrast, Metagol's performance drastically degrades as the size of the examples grows.
The mean learning times for Metagol for examples of length 50 and 500 are 20 and 54 seconds respectively.
The reason is that Metagol uses the examples to search for a hypothesis by inducing and executing partial programs over the examples.
These results suggest that the answer to \textbf{Q3} is yes and the answer to \textbf{Q2} is that \name{} scales well with respect to the size of examples.

\begin{figure}[ht]
\centering
\begin{tabular} {cc}
\begin{tikzpicture}[scale=.7]
    \begin{axis}[
    xlabel=Example size,
    ylabel=Predictive accuracy (\%),
    xmin=50,xmax=500,
    ymin=45,ymax=101,
    ylabel style={yshift=-6mm},
    legend style={legend pos=south west,style={font=\footnotesize,nodes={right}}}
    ]

\addplot+[blue,mark=*,mark options={fill=blue},error bars/.cd,y dir=both,y explicit]
table [
x=size,
y=acc,
y error plus expr=\thisrow{error},
y error minus expr=\thisrow{error},
] {data/lists-size/popper-acc.table};

\addplot+[gray,mark=square*,mark options={color=gray},error bars/.cd,y dir=both,y explicit]
table [
x=size,
y=acc,
y error plus expr=\thisrow{error},
y error minus expr=\thisrow{error},
] {data/lists-size/metagol-acc.table};

    \legend{\name{},Metagol}
    \end{axis}
  \end{tikzpicture} &
\begin{tikzpicture}[scale=.7]
    \begin{axis}[
    xlabel=Example size,
    ylabel=Learning time (seconds),
    xmin=50,xmax=500,
    ymin=0,ymax=60,
    ylabel style={yshift=-5mm},
    legend style={legend pos=north west,style={font=\footnotesize,nodes={right}}}
    ]

\addplot+[blue,mark=*,mark options={fill=blue},error bars/.cd,y dir=both,y explicit]
table [
x=size,
y=time,
y error plus expr=\thisrow{error},
y error minus expr=\thisrow{error},
] {data/lists-size/popper-time.table};

\addplot+[gray,mark=square*,mark options={color=gray},error bars/.cd,y dir=both,y explicit]
table [
x=size,
y=time,
y error plus expr=\thisrow{error},
y error minus expr=\thisrow{error},
] {data/lists-size/metagol-time.table};

    \legend{\name{},Metagol}
    \end{axis}
  \end{tikzpicture} \\
(a) Predictive accuracies & (b) Learning times
\end{tabular}
\caption{The experimental results for the \tw{last} task when varying the size (list length) of training examples.}
\label{fig:lists-size}
\end{figure}
\subsection{Sensitivity}
\label{sec:sensitivity}

The learning from failures hypothesis space (Proposition \ref{prop:hspace}) is a function of the number of predicate declarations and three other variables:
\begin{itemize}
    \setlength\itemsep{0pt}
    \setlength\parskip{0pt}
    \item the maximum number of unique variables in a clause
    \item the maximum number of body literals allowed in a clause
    \item the maximum number of clauses allowed in a hypothesis
\end{itemize}

\noindent
The purpose of this experiment is to evaluate how sensitive \name{} is to these parameters.
To do so, we repeat the \tw{len} experiment from Section \ref{sec:lists} with the same BK, settings, and method, except we run three separate experiments where we vary the three aforementioned parameters.

\subsubsection{Results}

Figure \ref{fig:sens} shows the experimental results.
The results show that \name{} is sensitive to the maximum number of unique variables, which has a strong influence on learning times.
This result follows from Proposition \ref{prop:hspace} because more variables implies more ways to form literals in a clause.
Somewhat surprisingly, doubling the number of variables from 4 to 8 has little difference on performance, which suggests that \name{} is robust to imperfect parameters.
The results show that \name{} is mostly insensitive to the maximum number of body literals in a clause.
The main reason is that \name{} does not pre-compute every possible clause in the hypothesis space, which is, for instance, the case with ILASP2i and ILASP3.
The results show that \name{} scales linearly with the maximum number of clauses.
Overall these results suggest that \name{} scales well with the maximum number of body literals, but can struggle with very large values for the maximum number of unique variables and clauses.

\begin{figure}[ht]
\centering
\begin{tabular} {ccc}
\begin{tikzpicture}[scale=.5]
    \begin{axis}[
    xlabel=Maximum number of variables,
    ylabel=Learning time (seconds),
    xmin=4,xmax=14,
    ymin=0,ymax=61,
    ylabel style={yshift=-6mm},
    legend style={legend pos=north west,style={font=\footnotesize,nodes={right}}}
    ]

\addplot+[blue,mark=*,mark options={fill=blue},error bars/.cd,y dir=both,y explicit]
table [
x=size,
y=time,
y error plus expr=\thisrow{error},
y error minus expr=\thisrow{error},
] {data/len/vars.table};
    \legend{\name{},Metagol}
    \end{axis}
  \end{tikzpicture} &
\pgfplotsset{scaled x ticks=false}
\begin{tikzpicture}[scale=.5]
    \begin{axis}[
    xlabel=Maximum number of literals,
    ylabel=Learning time (seconds),
    xmin=20,xmax=100,
    ymin=0,ymax=5,
    ylabel style={yshift=-7mm},
    legend style={legend pos=north west,style={font=\footnotesize,nodes={right}}}
    ]

\addplot+[blue,mark=*,mark options={fill=blue},error bars/.cd,y dir=both,y explicit]
table [
x=size,
y=time,
y error plus expr=\thisrow{error},
y error minus expr=\thisrow{error},
] {data/len/literals.table};

    \legend{\name{},Metagol}
    \end{axis}
  \end{tikzpicture} &
\begin{tikzpicture}[scale=.5]
    \begin{axis}[
    xlabel=Maximum number of clauses,
    ylabel=Learning time (seconds),
    ymin=0,ymax=61,
    xmin=10,xmax=100,
    ylabel style={yshift=-6mm},
    legend style={legend pos=north west,style={font=\footnotesize,nodes={right}}}
    ]

\addplot+[blue,mark=*,mark options={fill=blue},error bars/.cd,y dir=both,y explicit]
table [
x=size,
y=time,
y error plus expr=\thisrow{error},
y error minus expr=\thisrow{error},
] {data/len/clauses.table};

    \legend{\name{},Metagol}
    \end{axis}
  \end{tikzpicture} \\
(a) & (b) & (c)
\end{tabular}
\caption{
The experimental results for the \tw{len} task when varying the maximum number of unique variables (a), maximum body literals in a clause (b), and maximum number of clauses (c).
}
\label{fig:sens}
\end{figure}

\section{Conclusions and limitations}
\label{conc}

We have described an ILP approach called \emph{learning from failures} which decomposes the ILP problem into three separate stages: \emph{generate}, \emph{test}, and \emph{constrain}.
In the generate stage, the learner generates a hypothesis that satisfies a set of \emph{hypothesis constraints} (Definition \ref{def:hconstraint}).
In the test stage, the learner tests a hypothesis against training examples.
If a hypothesis fails, then, in the constrain stage, the learner learns hypothesis constraints from the failed hypothesis to prune the hypothesis space, i.e.~to constrain subsequent hypothesis generation.
In Section \ref{sec:failures}, we introduced three types of constraints based on theta-subsumption: \emph{generalisation}, \emph{specialisation}, and \emph{elimination} and proved their soundness in that they do not prune optimal solutions (Definition \ref{def:opthyp}).
This loop repeats until either (i) the learner finds an optimal solution, or (ii) there are no more hypotheses to test.
We implemented this approach in \name{}, an ILP system that learns definite programs.
\name{} combines ASP and Prolog to support types, learning optimal solutions, learning recursive programs, reasoning about lists and infinite domains, and hypothesis constraints.
We evaluated our approach on three diverse domains (toy game problems, robot strategies, and list transformations).
Our experimental results show that (i) constraints drastically reduce the hypothesis space, (ii) \name{} scales well with respect to the optimal solution size, the number of background relations, the domain size, the number of training examples, and the size of the training examples, and (iii) \name{} can substantially outperform existing ILP systems both in terms of predictive accuracies and learning times.

\subsection{Limitations and future work}
\label{sec:futurework}

\name{}, as implemented in this paper, has several limitations that future work should address.

\subsubsection{Features}

\paragraph{Non-observational predicate learning.}

Unlike some ILP systems \cite{progol,oled}, \name{} does not support non-observational predicate learning (non-OPL) \cite{progol}, where examples of the target predicates are not directly given.
Future work should address this limitation.


\paragraph{Predicate invention.}
Predicate invention has been shown to help reduce the size of target programs, which in turns reduces sample complexity and improves predictive accuracy \cite{playgol,seb:alps}.
\name{} does not currently support predicate invention.
There are two straightforward ways to support predicate invention.
\name{} could mimic Metagol by imposing metarules to restrict the form of clauses in a hypothesis and to guide the invention of new predicate symbols -- which is easy to do because, as we show in Appendix \ref{app:metarules}, \name{} can simulate metarules through hypothesis constraints.
Alternatively \name{} could mimic ILASP by supporting \emph{prescriptive} predicate invention \cite{law:thesis}, where the arity and (in ILASP's case, argument types) are pre-specified by the user.
Most of the results in this paper should extend to both approaches.

\paragraph{Negation.}
\name{} learns definite programs and tests them using Prolog.
\name{} can also trivially learn Datalog programs and test them using ASP.
In future work, we want to consider learning other types of programs.
For instance, most of our pruning techniques (except the elimination constraint) should extend to learning non-monotonic programs, such as Datalog with stratified negation.

\paragraph{Noise.}
\label{sec:fixnoise}
Most ILP systems handle noisy (misclassified) examples (Table \ref{tab:diffs}).
\name{} does not currently support noisy examples.
We can address this issue by relaxing when to apply learned hypothesis constraints and by maintaining the best hypotheses tested during the learning, i.e.~the hypothesis which entails the most positive and the fewest negative examples.
However, noise handling will likely increase learning times and future work should explore this trade-off.

\subsubsection{Better search}
An advantage of decomposing the learning problem is that it allows for a variety of algorithms and implementations, where each stage can be improved independently of the others.
For instance, any improvement to the \name{} ASP encoding that generates programs would have a major influence on learning times because it would reduce the number of programs to test.
Likewise, we can also optimise the testing step.
Future work should consider better search techniques.

\subsubsection{Better constraints}
Hypothesis constraints are central to our idea.
\name{} uses both predefined and learned constraints to improve performance.
\name{} uses predefined constraints to prune redundant programs from the hypothesis space (Section \ref{sec:generate}), such as recursive programs without a base case and subsumption redundant program.
\name{} also learns constraints from failures.
We think the most promising direction for future work is to improve both types of constraints (predefined and learned).

\paragraph{Types.}
Like many ILP systems \cite{progol,tilde,aleph,ilasp,dilp}, \name{} supports simple types to prune the hypothesis space.
However, more complex types, such as polymorphic types, can achieve better pruning for programs over structured data \cite{morel:typed}.
For instance, polymorphic types would allow us to distinguish between using a predicate on a list of integers and on a list of characters.
Refinement types \cite{DBLP:conf/pldi/PolikarpovaKS16}, i.e.~types annotated with restricting predicates, could allow a user to specify stronger program properties (other than examples), such as requiring that a reverse program provably has the property that the lengths of the input and output are the same.
In future work we want to explore whether we can express such complex types as hypothesis constraints.

\paragraph{Learned constraints.}
The constraints described in Section \ref{sec:failures} prune specialisations and generalisations of a failed hypothesis.
However, we have only briefly analysed the properties of these constraints.
We showed that these constraints are \emph{sound} (Propositions \ref{prop:generalisation_soundness} and \ref{prop:specialisation_soundness}) in that they do not prune optimal solutions.
We have not, however, considered their \emph{completeness}, in that they prune all non-optimal solutions.
Indeed, our \emph{elimination constraint}, for the special case of separable definite programs, prunes hypotheses that the generalisation and specialisation constraints miss.
In other words, the theory regarding which constraints to use is yet to be developed, and there may be many more constraints to be learned from failed hypotheses, all of which should drastically improve learning performance.
By contrast, refinement operators for clauses \cite{mis,claudien,ilp:book} and theories \cite{ilp:book,midelfart,DBLP:conf/ilp/Badea01} have been studied in detail in ILP.
Therefore, we think that this paper opens a new direction of research into identifying and analysing different constraints that we can learn from failed hypotheses.

\section*{Acknowledgements}
We foremost thank Mark Law for all of his help in writing this paper, including finding suitable ILASP representations for the experiments, for answering our many questions on the ILASP systems, and for suggesting much of the text on ILASP3.
We thank Tobias Kaminski, Sebastijan Dumančić, and Richard Evans for extremely valuable feedback on the paper.
We also thank one of the anonymous reviewers for suggesting a much more efficient constraint encoding that reduced \name{}'s learning times in the experiments by almost a half.

\bibliographystyle{plain}
\bibliography{ourbib15}

\appendix

\section{Popper metarule theory constraints}
\label{app:metarules}

\subsection{Metarules}

Let $M$ be an arbitrary metarule, i.e.~a second-order Horn clause which quantifies over predicate symbols.
For example, $\tw{P(A,B):-Q(A,C),R(C,B)}$ is known as the chain metarule.
All letters are quantified variables, with $\tw{P}$, $\tw{Q}$, and $\tw{R}$ being second-order, i.e.~needing to be substituted for by predicate symbols.

\subsection{From a metarule to literals}

Let $M =\tw{head}\tw{:-} \tw{body}_1,\ldots,\tw{body}_m$ be a metarule.
We use the clause encoding function $\mathit{encodeSizedClause}$ from section \ref{sec:encoding_clauses} to derive an encoding of a metarule.

%

\begin{example}
Consider $M = \tw{P(A,B):-Q(A,C),R(C,B)}$.
Its encoding, $\mathit{encodeSizedClause}(\code{Clause},M)$, is
\[
\begin{array}{l}
\code{head\_literal(Clause,P,2,(V0,V1))}
\code{,}\\
\code{body\_literal(Clause,Q,2,(V0,V2))}
\code{,}
\code{body\_literal(Clause,R,2,(V2,V1))}
\code{,}\\
\code{V0!=V1}
\code{,}
\code{V0!=V2}
\code{,}
\code{V1!=V2}
\code{,}
\code{clause\_size(Clause,2)}
\end{array}
\]

\end{example}

\subsection{Asserting metarule conformance}

Let $\mathit{Ms}$ be a set of metarules.
For each clause of a metarule conformant program, the clause must be an \emph{instance} of one of the metarules in $\mathit{Ms}$.
A clause $C$ is an instance of metarule $M \in Ms$ if there exists substitution $\theta$ such that $M\theta = C$.

We introduce two rules to ensure every clause of a generated program is an instance of at least one metarule.
The first rule identifies when there exists some metarule for which the clause is an instance.
The second rule is a constraint expressing that every clause of a program must be identified as being an instance of at least one metarule.

\noindent
For each $M \in \mathit{Ms}$, generate the following rule of the first kind:
\[
\begin{array}{l}
    \code{meta\_clause(Clause)}
    \code{:-}
    \mathit{encodeSizedClause}(\code{Clause},M)
    \code{.}
\end{array}
\]

\noindent
The second rule is:
\[
\begin{array}{l}
    \code{:-}
    \code{clause(Clause)}
    \code{,}
    \code{not meta\_clause(Clause).}
\end{array}
\]

\section{Language biases in buttons experiment}
\label{app:buttons}

\subsection{ILASP2i and ILASP3}

\begin{lstlisting}
#modeh(1,f, (positive)).
#modeb(1,button1, (positive)).
...
#modeb(1,button20, (positive)).
\end{lstlisting}

\subsection{\name{} and \alan{}}
\begin{lstlisting}
max_vars(1).
max_clauses(1).
head_pred(f,1).
body_pred(button1,1).
...
body_pred(button20,1).
\end{lstlisting}

\subsection{Aleph}
\begin{lstlisting}
:- aleph_set(i,6).
:- aleph_set(clauselength,2).
:- aleph_set(nodes,5000).
:- modeh(*,f(+var)).\\
:- modeb(*,button1(+var)).\\
:- determination(f/1,button1/1).\\
:- modeb(*,button2(+var)).\\
...\\
:- determination(f/1,button20/1).
\end{lstlisting}

\subsection{Metagol}
\begin{lstlisting}
metarule(unary1, [P,Q], [P,A], [[Q,A]]).
metarule(unary2, [P,Q,R], [P,A], [[Q,A],[R,A]]).
body_pred(button1/1).
...
body_pred(button20/1).
\end{lstlisting}
\section{Language biases in robots experiment}
\label{app:robots}

\subsection{ILASP2i and ILASP3}

\begin{lstlisting}
#modeh(f(var(state)), (positive)).
#modeh(start_state(var(state)), (positive)).
#modeb(3,move_up(var(state),var(state)), (anti_reflexive,symmetric,positive)).
#modeb(3,move_down(var(state),var(state)), (anti_reflexive,symmetric,positive)).
#modeb(3,move_left(var(state),var(state)), (anti_reflexive,symmetric,positive)).
#modeb(3,move_right(var(state),var(state)), (anti_reflexive,symmetric,positive)).
#modeb(3,at_top(var(state)), (positive)).
#modeb(3,at_bottom(var(state)), (positive)).
#modeb(3,at_left(var(state)), (positive)).
#modeb(3,at_right(var(state)), (positive)).
#modeb(1,start_state(var(state)), (positive)).

#bias(":- occurs(V, X), #false : occurs(V, Y), Y != X.").
#bias("occurs(X, f(X)) :- head(f(X)).").
#bias("occurs(X, start_state(X)) :- head(start_state(X)).").
#bias("occurs(X, start_state(X)) :- body(start_state(X)).").
#bias("occurs(X, at_top(X)) :- body(at_top(X)).").
#bias("occurs(X, at_bottom(X)) :- body(at_bottom(X)).").
#bias("occurs(X, at_left(X)) :- body(at_left(X)).").
#bias("occurs(X, at_right(X)) :- body(at_right(X)).").
#bias("occurs(X, move_up(X, Y)) :- body(move_up(X, Y)).").
#bias("occurs(X, move_left(X, Y)) :- body(move_left(X, Y)).").
#bias("occurs(X, move_right(X, Y)) :- body(move_right(X, Y)).").
#bias("occurs(X, move_down(X, Y)) :- body(move_down(X, Y)).").
#bias("occurs(X, move_up(Y, X)) :- body(move_up(Y, X)).").
#bias("occurs(X, move_left(Y, X)) :- body(move_left(Y, X)).").
#bias("occurs(X, move_right(Y, X)) :- body(move_right(Y, X)).").
#bias("occurs(X, move_down(Y, X)) :- body(move_down(Y, X)).").
\end{lstlisting}

\subsection{\name{} and \alan{}}
\begin{lstlisting}
max_vars(4).
max_body(3).
max_clauses(3).
head_pred(f,2).
body_pred(f,2).
body_pred(at_top,1).
body_pred(at_bottom,1).
body_pred(at_left,1).
body_pred(at_right,1).
body_pred(move_left,2).
body_pred(move_right,2).
body_pred(move_up,2).
body_pred(move_down,2).
direction(f,0,in).
direction(f,1,out).
direction(move_left,0,in).
direction(move_right,0,in).
direction(move_up,0,in).
direction(move_down,0,in).
direction(move_left,1,out).
direction(move_right,1,out).
direction(move_up,1,out).
direction(move_down,1,out).
direction(at_top,0,in).
direction(at_bottom,0,in).
direction(at_left,0,in).
direction(at_right,0,in).
\end{lstlisting}

\subsection{Aleph}
\begin{lstlisting}
:- aleph_set(i,6).
:- aleph_set(clauselength,6).
:- aleph_set(nodes,50000).
:- modeh(*,f(+state,-state)).\\
:- modeb(*,move\_up(+state,-state)).\\
:- modeb(*,move\_down(+state,-state)).\\
:- modeb(*,move\_left(+state,-state)).\\
:- modeb(*,move\_right(+state,-state)).\\
:- modeb(*,at\_top(+state)).\\
:- modeb(*,at\_bottom(+state)).\\
:- modeb(*,at\_left(+state)).\\
:- modeb(*,at\_right(+state)).\\
:- determination(f/2,move\_up/2).\\
:- determination(f/2,move\_down/2).\\
:- determination(f/2,move\_left/2).\\
:- determination(f/2,move\_right/2).\\
:- determination(f/2,at\_top/1).\\
:- determination(f/2,at\_bottom/1).\\
:- determination(f/2,at\_left/1).\\
:- determination(f/2,at\_right/1).\\
\end{lstlisting}

\subsection{Metagol}
\begin{lstlisting}
body_pred(move_right/2).
body_pred(move_left/2).
body_pred(move_up/2).
body_pred(move_down/2).
body_pred(at_top/1).
body_pred(at_bottom/1).
body_pred(at_left/1).
body_pred(at_right/1).
metarule([P,Q], [P,A], [[Q,A]]).
metarule([P,Q], [P,A], [[Q,A]]).
metarule([P,Q,R], [P,A], [[Q,A,B],[R,B]]).
metarule([P,Q], [P,A], [[Q,A,B],[P,B]]).
metarule([P,Q,R], [P,A], [[Q,A,B],[R,A,B]]).
metarule([P,Q], [P,A,B], [[Q,A,B]]).
metarule([P,Q,R], [P,A,B], [[Q,A,B],[R,A,B]]).
metarule([P,Q,R], [P,A,B], [[Q,A],[R,A,B]]).
metarule([P,Q,R], [P,A,B], [[Q,A,B],[R,B]]).
metarule([P,Q,R], [P,A,B], [[Q,A,C],[R,C,B]]).
metarule([P,Q], [P,A,B], [[Q,A,C],[P,C,B]]).
metarule([P,Q], [P,A,B], [[Q,B,A]]).
\end{lstlisting}
\section{Language biases in lists experiment}
\label{app:lists}

\subsection{\name{} and \alan{}}
For each list transformation problem, we have a specific bias to specify the target relations, such as the following bias for the \tw{finddup} problem:

\begin{lstlisting}
head_pred(f,2).
type(f,0,list).
type(f,1,element).
direction(f,0,in).
direction(f,1,out).
body_pred(f,2).
\end{lstlisting}

\noindent
For all the problems we include the following biases:

\begin{lstlisting}
max_vars(5).
max_body(5).
max_clauses(2).
body_pred(head,2).
body_pred(tail,2).
body_pred(geq,2).
body_pred(empty,1).
body_pred(even,1).
body_pred(odd,1).
body_pred(one,1).
body_pred(zero,1).
body_pred(decrement,2).
body_pred(increment,2). % ONLY FOR SORTED
body_pred(element,2). % ONLY FOR FIND DUPLICATE
body_pred(cons,2). % ONLY FOR ADDHEAD, DROPK, DROPLAST
type(cons,0,element).
type(cons,1,list).
type(cons,2,list).
direction(cons,0,in).
direction(cons,1,in).
direction(cons,2,out).
type(head,0,list).
type(head,1,element).
direction(head,0,in).
direction(head,1,out).
type(tail,0,list).
type(tail,1,list).
direction(tail,0,in).
direction(tail,1,out).
type(empty,0,list).
direction(empty,0,in).
type(element,0,list).
type(element,1,element).
direction(element,0,in).
direction(element,1,out).
type(increment,0,element).
type(increment,1,element).
direction(increment,0,in).
direction(increment,1,out).
type(decrement,0,element).
type(decrement,1,element).
direction(decrement,0,in).
direction(decrement,1,out).
type(geq,0,element).
type(geq,1,element).
direction(geq,0,in).
direction(geq,1,in).
type(even,0,element).
direction(even,0,in).
type(odd,0,element).
direction(odd,0,in).
type(one,0,element).
direction(one,0,in).
type(zero,0,element).
direction(zero,0,out).
\end{lstlisting}

\subsection{Aleph}
For each list transformation problem, we have a specific bias to specify the target relations, such as the following bias for the \tw{finddup} problem:

\begin{lstlisting}
:- modeh(*,f(+list,-element)).\\
:- modeb(*,f(+list,-element)).\\
\end{lstlisting}

\noindent
For all the problems we include the following biases (we we replace \tw{f/2} in the determinations with the correct arity of the target predicate):

Note that \tw{increment} is only given in the \tw{sorted} experiment, \tw{element} is only given in the \tw{finddupli} experiment, and \tw{cons} is only given in the \tw{addhead}, \tw{dropk}, and \tw{droplast} experiments.

\begin{lstlisting}
:- aleph_set(i,6).\\
:- aleph_set(clauselength,6).\\
:- aleph_set(nodes,30000).\\
:- modeb(*,head(+list,-element)).\\
:- modeb(*,tail(+list,-list)).\\
:- modeb(*,geq(+element,+element)).\\
:- modeb(*,empty(+list)).\\
:- modeb(*,even(+element)).\\
:- modeb(*,odd(+element)).\\
:- modeb(*,one(+element)).\\
:- modeb(*,zero(-element)).\\
:- modeb(*,decrement(+element,-element)).\\
:- modeb(*,increment(+element,-element)). \\
:- modeb(*,element(+list,-element)).\\
:- modeb(*,cons(+element,+list,-list)).
\end{lstlisting}

\subsection{Metagol}
\begin{lstlisting}
body_pred(head/2).
body_pred(tail/2).
body_pred(geq/2).
body_pred(empty/1).
body_pred(even/1).
body_pred(odd/1).
body_pred(one/1).
body_pred(zero/1).
body_pred(decrement/2).
body_pred(increment/2). % ONLY FOR SORTED
body_pred(member/2). % ONLY FOR FIND DUPLICATE

metarule([P,Q], [P,A], [[Q,A]]).
metarule([P,Q], [P,A], [[Q,A]]).
metarule([P,Q,R], [P,A], [[Q,A,B],[R,B]]).
metarule([P,Q], [P,A], [[Q,A,B],[P,B]]).
metarule([P,Q,R], [P,A], [[Q,A,B],[R,A,B]]).
metarule([P,Q], [P,A,B], [[Q,A,B]]).
metarule([P,Q,R], [P,A,B], [[Q,A,B],[R,A,B]]).
metarule([P,Q,R], [P,A,B], [[Q,A],[R,A,B]]).
metarule([P,Q,R], [P,A,B], [[Q,A,B],[R,B]]).
metarule([P,Q,R], [P,A,B], [[Q,A,C],[R,C,B]]).
metarule([P,Q], [P,A,B], [[Q,A,C],[P,C,B]]).
metarule([P,Q], [P,A,B], [[Q,A,B]]).
\end{lstlisting}
\end{document}